\documentclass{article}

    \PassOptionsToPackage{numbers, compress}{natbib}
 \usepackage[preprint]{neurips_2026}

\usepackage[utf8]{inputenc} 
\usepackage[T1]{fontenc}    
\usepackage{hyperref}       
\usepackage{url}            
\usepackage{booktabs}       
\usepackage{amsfonts}       
\usepackage{nicefrac}       
\usepackage{microtype}      
\usepackage{xcolor}         

\usepackage{amsmath}
    \newcommand\numberthis{\addtocounter{equation}{1}\tag{\theequation}}
\usepackage{amssymb}
\usepackage{amsthm} 
\usepackage{abbreviations}
\usepackage{algorithm}    
\usepackage{algpseudocode}
\usepackage{graphicx}
\usepackage{subcaption}

\usepackage{thmtools} 
\usepackage{multirow} 

\usepackage{hyperref} 
\addtocontents{toc}{\protect\setcounter{tocdepth}{0}} 

\title{
Linear Ensemble Sampling with Smaller Ensembles
}

\author{%
  Taehyun Hwang
  \\
  Seoul National University
  \\
  \texttt{th.hwang@snu.ac.kr}
  \And
  Min-hwan Oh
  \\
  Seoul National University
  \\
  \texttt{minoh@snu.ac.kr}
}

\begin{document}

\maketitle

\begin{abstract}
    Ensemble sampling offers a practical approach to randomized exploration by maintaining a collection of models, but how small an ensemble can be while retaining strong regret guarantees remains unresolved.
    In particular, the existing guarantees use an ensemble size of $\Theta(d\log T)$, leaving a logarithmic gap in the horizon $T$ relative to the intrinsic $\Omega(d)$ ensemble-size barrier.
    We aim to narrow this gap by proposing an ensemble sampling algorithm that refreshes the ensemble only when the regularized Gram matrix changes substantially. This mechanism localizes the perturbation analysis to epochs with controlled Gram-matrix drift and reduces the sufficient ensemble size to $\Theta(d\log d+d\log\log T)$, while preserving the state-of-the-art $\tilde{\mathcal O}(d^{3/2}\sqrt T)$ regret for ensemble sampling with arbitrary bounded arm sets. We further show that, when the arm set is finite of cardinality $K$, the proposed algorithm achieves the sharper regret bound $\tilde{\mathcal{O}}(d\sqrt{T\log K})$. To the best of our knowledge, this is the first ensemble-sampling guarantee that simultaneously recovers both canonical regret scalings known for randomized linear bandit algorithms: the $\tilde{\mathcal{O}}(d^{3/2}\sqrt{T})$ rate for arbitrary bounded arm sets and the $\tilde{\mathcal{O}}(d\sqrt{T\log K})$ rate for finite arm sets. The algorithm also admits an anytime implementation without resetting past data, and experiments show that it remains competitive with baselines while using substantially smaller ensembles.
\end{abstract}

\section{Introduction}
Randomized exploration is a key principle in sequential decision making. In linear contextual
bandits, algorithms such as \textit{Thompson sampling} (TS) and \textit{perturbed-history exploration} (PHE)
translate uncertainty about the unknown parameter into randomized actions, leading to strong regret
guarantees and strong empirical performance~\citep{thompson1933likelihood, agrawal2013thompson,
abeille2017Linear, kveton2019perturbed, kveton2020perturbed, kveton2020randomized}. However,
exact posterior sampling can be difficult to implement beyond conjugate models in TS, and repeatedly
applying perturbations to all past observations at every round can be computationally costly in PHE.

To address these challenges, \emph{ensemble sampling} (ES) offers an attractive alternative~\citep{lu2017ensemble}.
Instead of sampling from an exact posterior distribution or resampling perturbations for all past
observations, ES maintains a finite collection of perturbed models and acts greedily with respect to
one randomly selected ensemble member. This makes ES incremental and compatible with how
ensembles are used in practice, including in exploration for deep reinforcement learning and online recommendation
systems~\citep{osband2016deep, lu2018efficient, osband2018randomized, osband2019deep,
zhu2023deep, zhou2025stochastic}. These practical advantages have motivated a growing theoretical
literature~\citep{lu2018efficient, qin2022analysis, janz2024ensemble, lee2024improved,
sun2025provable, janz2026sharp} studying whether ES can match the statistical guarantees of
canonical randomized exploration methods such as TS. Despite recent progress, two open questions remain.

The first open question concerns the \emph{ensemble complexity}. The sharpest existing analysis of
linear ensemble sampling~\citep{janz2026sharp} shows that ES can achieve the canonical regret
bound $\tilde{\mathcal O}(d^{3/2}\sqrt T)$, but this guarantee requires an ensemble size of order
$\Theta(d\log T)$. Its theoretical ensemble requirement can be larger than the dimension: the
multiplicative $\log T$ factor can still significantly increase the ensemble size because $T$ is typically much
larger than $d$ in many linear bandit instances. On the other hand, an $\Omega(d)$ lower bound on
the ensemble size has also been shown~\citep{janz2026sharp}, implying that one cannot generally
hope for ensembles smaller than the dimension. This leaves a natural question:
\textit{Can ES approach the intrinsic $\Omega(d)$ ensemble-size barrier while preserving sharp regret?}

The second open question concerns the regret landscape of ES for finite arms. For randomized linear bandit
algorithms such as TS and PHE, the regret benchmark is not a single regret rate. For arbitrary bounded arm
sets, including infinite ones, the  \textit{dimension-dominated} rate is
$\tilde{\mathcal O}(d^{3/2}\sqrt T)$. When the arm set is finite with cardinality \(K\) and
\(K \lesssim e^d\), which we refer to as the \textit{finite-arm-dominated} regime, a regret guarantee of
$\tilde{\mathcal O}(d\sqrt{T\log K})$ is available for both TS and PHE.%
\footnote{The relevant comparison of regret bounds depends on the relation between the arm
cardinality \(K\) and the dimension \(d\). We refer to the regime \(K \lesssim e^d\) as the
\emph{finite-arm-dominated regime}, since in this regime the finite-arm rate is sharper than the dimension-dominated rate. We refer to the complementary regime \(K \gtrsim e^d\) as the
\emph{dimension-dominated regime}; infinite arm sets also fall under the dimension-dominated
regimes. Note that this comparison corresponds to whether \(K\) is smaller or larger than \(e^d\),
up to constants and logarithmic factors.}
This \(\log K\)-dependent finite-arm rate is meaningful for two reasons: (i) in finite-arm-dominated regimes (i.e., instances with
\(K \lesssim e^d\)), it is sharper than $\tilde{\mathcal O}(d^{3/2}\sqrt T)$; and (ii) the analysis only
needs to control the perturbed reward estimates on the \(K\) candidate arms, rather than uniformly
over all directions in~\(\mathbb R^d\).

However, existing ES analyses do not fully recover the bounds in these two regimes. They obtain the
dimension-dominated regret bound of \(\tilde{\mathcal O}(d^{3/2}\sqrt T)\), but they do not yield the
finite-arm-dominated regret bound of \(\tilde{\mathcal O}(d\sqrt{T\log K})\).
The main analytical obstacle stems from the persistent nature of ensemble perturbations.
In TS and PHE, the random perturbation is independently resampled at each round, or can be analyzed as conditionally independent randomness given the past. Hence, for each fixed arm, the perturbed score has a
clean conditional tail bound, and a union bound over the \(K\) arms yields the desired
\(\sqrt{\log K}\) dependence. In ES, by contrast, perturbations are persistent: each ensemble member
accumulates perturbations over time, and the actions used to generate those perturbations are
themselves selected using earlier ensemble values. 
Consequently, the perturbations and the subsequent action sequence are adaptively coupled, precluding the standard conditional tail-bound argument.    
    
\begin{table*}[t!]
    \begin{center}
    \caption{
        Comparison of regret bounds and ensemble complexities for linear ensemble sampling.
        Here $d$ is the feature dimension, $K$ is the number of arms, $T$ is the horizon, and $\delta$ is the failure probability in high-probability regret guarantees.
        The dependence on $\delta$ is omitted for results stated in Bayesian regret, where no high-probability failure parameter is used.
        Logarithmic factors other than those displayed are suppressed in the regret bounds.
    }
    \label{tab:my-table}
    \resizebox{\columnwidth}{!}{%
    \begin{tabular}{llll}
        \toprule
        Paper    & Ensemble size & Regret & Setting
        \\
        \midrule
        \citet{lu2017ensemble}  & $\sqrt{K}T$ & N/A  & Finite fixed arms
        \\
        \citet{qin2022analysis}  & $dT$ & $\sqrt{d T}$\textsuperscript{\dag} & Finite fixed arms
        \\
        \citet{janz2024ensemble}  & $d \log T + d \log \frac{1}{\delta}$ & $d^{5/2} \sqrt{T}$ & Arbitrary bounded arms
        \\
        \citet{lee2024improved}  & $K \log T + \log \frac{1}{\delta}$ & $d^{3/2} \sqrt{T}$ & Finite fixed arms
        \\
        \citet{sun2025provable} & $K \log T + \log\frac{1}{\delta}$ & $d^{3/2} \sqrt{T} + d^{9/2}$ & Finite fixed arms, GLM
        \\        
        \citet{janz2026sharp}  & $d \log T + d \log{\frac{1}{\delta}} $ & $d^{3/2} \sqrt{T}$ & Arbitrary bounded arms
        \\
        \textbf{This work} (Theorem~\ref{thm:regret bound for infinite arm set})  & $d \log d + d \log \log T + \log{\frac{1}{\delta}}$ & $d^{3/2} \sqrt{T}$ & Arbitrary bounded arms
        \\
        \textbf{This work} (Theorem~\ref{thm:finite-arm-regret}) & $d \log d + d \log \log T  + \log{\frac{1}{\delta}}$ & $d\sqrt{T \log K}$ & Finite arms
        \\
        \midrule
        \multirow{2}{*}{TS~\citep{agrawal2013thompson, abeille2017Linear} / PHE~\citep{kveton2020perturbed,lee2024improved}\textsuperscript{$\ddagger$}}
        & -- & $d^{3/2}\sqrt{T}$ & Arbitrary bounded arms
        \\
        & -- & $d\sqrt{T\log K}$ & Finite arms
        \\
        \bottomrule         
        \multicolumn{4}{l}{\textsuperscript{$\dag$}\,\footnotesize{The displayed bound is a Bayesian regret bound; all other regret bounds are frequentist.
        }}\\ 
        \multicolumn{4}{l}{%
\parbox{\columnwidth}{%
\footnotesize
\hangindent=0.8em
\hangafter=1
\textsuperscript{$\ddagger$}
For linear PHE, \citet{kveton2020perturbed} established a
$\tilde{\mathcal O}(d\sqrt{T\log K})$ regret bound for finite arms,
while \citet{lee2024improved} later proved a
$\tilde{\mathcal O}(d^{3/2}\sqrt{T})$ regret bound for arbitrary bounded arm sets.
}%
}
    \end{tabular}%
    }
    \end{center}
\end{table*}

In this work, we address both open problems through an epoch-refresh ensemble sampling algorithm.
The proposed algorithm refreshes the ensemble only when the regularized Gram matrix changes by a prescribed multiplicative factor. This localizes the perturbation analysis to epochs over which the regularized Gram matrices remain uniformly comparable. As a result, in the dimension-dominated regime, which
includes infinite arm sets, we recover the \(\tilde{\mathcal O}(d^{3/2}\sqrt T)\) regret bound with only
\(\Theta(d\log d+d\log\log T)\) ensemble members, narrowing the gap between the previously known
sufficient ensemble size \(\Theta(d\log T)\)~\citep{janz2024ensemble, janz2026sharp} and the known
\(\Omega(d)\) lower bound whenever \(d < T/\log T\), a common parameter regime in linear bandit
problems. 
Moreover, in the finite-arm-dominated regime, the same localized analysis
enables an arm-wise concentration argument and yields the sharper regret bound
\(\widetilde{\mathcal O}(d\sqrt{T\log K})\) with the same reduced ensemble complexity.
Our main contributions are summarized as follows:
\begin{itemize}
    \item 
    We propose an ES algorithm for stochastic linear contextual bandits. For arbitrary bounded arm sets, including infinite ones, 
    we prove that the proposed algorithm attains the sharpest known regret order for ensemble sampling~\citep{janz2026sharp, lee2024improved},
    \(\tilde{\mathcal O}(d^{3/2}\sqrt{T})\), where \(d\) is the feature dimension and \(T\) is the horizon. 
    While matching the previously sharpest regret bound, we prove the ensemble complexity of
    \(\Theta(d \log d + d \log \log T)\). 
    This narrows the gap between the previously known sufficient ensemble size,
    \(\Theta(d \log T)\)~\citep{janz2024ensemble, janz2026sharp}, and the lower bound of \(\Omega(d)\) on the ensemble complexity whenever \(d < T/\log T\).

    \item 
    We further show that, when the arm sets are chosen by an oblivious adversary and each has cardinality \(K\), the proposed algorithm achieves a regret bound of order \(\tilde{\mathcal O}(d\sqrt{T\log K})\) with the same reduced ensemble complexity
    \(\Theta(d \log d + d \log \log T)\).
    To the best of our knowledge, this is the first \(\log K\)-dependent regret bound for ES that scales linearly in \(d\).
    For instances with \(K \lesssim e^d\), i.e., the finite-arm-dominated regime,
    \(\tilde{\mathcal O}(d\sqrt{T\log K})\) can be significantly sharper than
    \(\tilde{\mathcal O}(d^{3/2}\sqrt{T})\), depending on the number of arms.
    
    \item
    Importantly, the same ES algorithm simultaneously recovers the two canonical regret bounds known for randomized linear bandit algorithms such as TS and PHE:
    the dimension-dominated bound \(\tilde{\mathcal O}(d^{3/2}\sqrt{T})\) and the finite-arm-dominated bound
    \(\tilde{\mathcal O}(d\sqrt{T\log K})\).
  
    \item 
    We empirically show that the proposed algorithm achieves regret comparable to existing randomized baselines while using substantially fewer ensemble members. This reduction in ensemble size also leads to lower total runtime, with the improvement becoming more pronounced as the feature dimension increases. In addition, to the best of our knowledge, our results provide the first empirical evidence that ensemble methods can offer a computationally efficient alternative to TS and PHE.

    \item 
    Our algorithm admits an anytime implementation without resetting past data. When the horizon \(T\) is unknown, prior ensemble sampling methods typically rely on the doubling trick, increasing the ensemble size at each doubling epoch and restarting the algorithm. In contrast, our epoch-refresh design allows the ensemble size to be increased online only as needed, while retaining all previously collected observations. This makes the procedure substantially more data-efficient in practical scenarios.    
\end{itemize}

\subsection{Related Work}
\paragraph{Randomized exploration in linear bandits.}
Linear bandits have been studied extensively through both optimism-based methods, such as \texttt{OFUL}~\citep{abbasi2011improved} and \texttt{LinUCB}~\citep{chu2011contextual}, and randomized exploration methods. Among the latter, TS~\citep{thompson1933likelihood, agrawal2013thompson, abeille2017Linear, hamidi2020frequentist} is a classical posterior-sampling method that achieves the canonical regret rate
$\tilde{\Ocal}(d^{3/2}\sqrt{T})$ for general arm sets, and sharper $\tilde{\Ocal}(d\sqrt{T\log K})$ bounds are available when $|\Xcal|=K$. However, TS requires sampling from, or accurately approximating, the
posterior distribution, which can be computationally demanding beyond simple conjugate models.
Another representative randomized approach is PHE~\citep{kveton2019perturbed, kveton2020perturbed, kveton2020randomized}, which avoids explicit posterior sampling by injecting
random perturbations into the observed history. PHE can also attain the finite-arm rate $\tilde{\Ocal}(d\sqrt{T\log K})$. These two regret scalings serve as the benchmark for randomized linear bandit algorithms and motivate our goal
of recovering them for ensemble sampling.

\paragraph{Ensemble sampling.}
ES~\citep{lu2017ensemble} was introduced as a computationally attractive alternative to posterior sampling. Instead of maintaining an exact posterior distribution, ES keeps a finite collection of perturbed models and acts greedily with respect to one randomly selected ensemble member. This structure is appealing in settings where incremental model updates are cheap, and it has motivated algorithms in deep reinforcement learning, online recommendation, and other sequential decision-making problems. However, despite its empirical success, ES has been difficult to analyze theoretically because the ensemble models are trained on data collected adaptively by the ensemble itself. In contrast to TS or PHE, the randomness used for exploration is not freshly sampled independently of the past at every round; rather, the perturbations persist and interact with the adaptively chosen arms.

Early theoretical guarantees for ES required ensemble sizes that were too large for the method to be practically meaningful.~\citet{qin2022analysis} obtained Bayesian regret guarantees under ensemble sizes that scale at least linearly with the horizon.~\citet{janz2024ensemble} gave the first rigorous frequentist analysis of ES in stochastic linear bandits with an ensemble size logarithmic in the horizon and linear in the dimension.~\citet{janz2024ensemble} showed that a symmetrized version of linear ES with $\Theta(d\log T)$ ensemble members obtains regret $\tilde{\Ocal}(d^{5/2}\sqrt{T})$ and, importantly, also applies to infinite arm sets. This result established that sublinear-in-\(T\) ensembles can suffice, but the regret remained worse than the canonical TS rate by a factor of roughly \(d\).
At the same time,~\citet{lee2024improved} improved the regret analysis of linear ES and showed that ES can achieve the canonical regret rate $\tilde{\Ocal}(d^{3/2}\sqrt{T})$ with ensemble size logarithmic in \(T\). 
However, their ES guarantee requires the ensemble size to scale with the number of arms, namely \(\Omega(K\log T)\). Thus, while their result is sharp for finite and moderately sized arm sets, it becomes unsuitable for large finite arm sets and does not directly address general infinite arm sets.
~\citet{janz2026sharp} recently provided a sharp analysis of linear ES for general arm sets. They showed that Gaussian linear ES with \(\Theta(d\log T)\) ensemble members achieves the regret rate $\tilde{\Ocal}(d^{3/2}\sqrt{T})$, matching the canonical rate of TS for general linear bandits. Their analysis reduces the main difficulty to a time-uniform self-normalized exceedance problem and solves it through a Brownian-motion embedding argument.
They also established a lower-bound result showing that ensembles of size at most \(d/2\) can suffer linear regret on some instances, indicating that the ensemble size cannot be pushed far below the dimension in general.

\section{Preliminaries}
\paragraph{Notation.}
For \(n\in\NN\), let \([n]:=\{1,\ldots,n\}\).
For \(\xb,\yb\in\RR^d\), let \(\ips{\xb}{\yb}:=\xb^\top\yb\).
For a positive semidefinite matrix \(\Ab\in\RR^{d\times d}\), define
\(\|\xb\|_{\Ab}:=\sqrt{\xb^\top\Ab\xb}\), and let \(\|\Ab\|_{\op}\) denote its operator norm.
We use \(\ind\{\cdot\}\) for the indicator function.
The notation \(\Ocal(\cdot)\) denotes the standard big-\(\Ocal\) notation with respect to the problem parameters, such as \(d\) and \(T\), and
\(\tilde{\Ocal}(\cdot)\) hides logarithmic factors in these parameters.
We denote the unit Euclidean ball in \(\RR^d\) by $\Bcal_2^d := \{\xb\in\RR^d:\|\xb\|_2\le 1\}$.

\paragraph{Problem setup.}
We consider a stochastic linear contextual bandit problem.
At each round \(t\in[T]\), the agent observes a compact arm set \(\Xcal_t\subseteq\RR^d\), selects an arm \(\xb_t\in\Xcal_t\), and observes a real-valued reward \(y_t\in\RR\). The reward is generated according to $y_t = \ips{\xb_t}{\thetab^*} + \eta_t$,
where \(\thetab^*\in\RR^d\) is an unknown parameter and \(\eta_t\) is noise.
Let $\Hcal_t := \sigma(\Xcal_1,\xb_1,y_1,\ldots,\Xcal_{t-1},\xb_{t-1},y_{t-1}, \Xcal_t,\xb_t)$ be the \(\sigma\)-algebra generated by the past observations, the current arm set, and the selected arm at round \(t\). We assume that \(\eta_t\) is
conditionally zero-mean and \(1\)-sub-Gaussian with respect to \(\Hcal_t\), that is, for all \(s\in\RR\), $\EE[\eta_t\mid \Hcal_t]=0$, $\EE[\exp(s\eta_t)\mid \Hcal_t]\le \exp(s^2/2)$.
The agent's objective is to maximize the cumulative reward over \(T\) rounds,
or equivalently, to minimize the cumulative regret
\begin{equation*}
    \regret_T
    :=
    \sum_{t=1}^T
    \left(
    \ips{\xb_t^*}{\thetab^*}
    -
    \ips{\xb_t}{\thetab^*}
    \right) \, ,
\end{equation*}
where $\xb_t^* \in \argmax_{\xb\in\Xcal_t} \ips{\xb}{\thetab^*}$ is an optimal arm at round \(t\).

\section{Main Results}
\subsection{Algorithm}
\begin{algorithm}[h!]
    \caption{Linear Ensemble Sampling with Epoch Refresh ($\algname$)}
    \label{alg:main algorithm}
    \begin{algorithmic}[1]
    \State \textbf{Inputs:} horizon $T \in \NN$, ensemble size $m \in \NN$, regularization parameter $\lambda > 0$, inflation parameter $\gamma>0$, confidence radii $\{\beta_t\}_{t\ge0}$, and epoch refresh parameter $\alpha >0$
    \State Initialize $\Vb_0 = \lambda \Ib_d$, $\hat{\thetab}_0 = \zero_d$, and set the current epoch start time $\tau = 0$
    \State Sample $\gb^j\sim \Ncal (\zero_d, \Ib_d)$ and set $\sbb_0^j = \Vb_0^{1/2} \gb^j, \thetab^j_0 = \Vb_0^{-1} \sbb^j_0$ for each $j \in [m]$
    \For{$t=1,\dots,T$}
        \If{$\Vb_{t-1} \npreceq (1 + \alpha)\Vb_\tau$}
            \Comment{\textit{Refresh the ensemble perturbations}}
            \State Set $\tau \gets t-1$
            \State For each $j \in [m]$, sample $\gb^j \sim \Ncal(\zero_d,\Ib_d)$ and set
            \begin{equation*}
                \sbb_{t-1}^j = \Vb_\tau^{1/2} \gb^j \, ,
                \quad                
                \thetab^j_{t-1} = \hat{\thetab}_{t-1} + \gamma \beta_{t-1} \Vb_{t-1}^{-1} \sbb^j_{t-1}
            \end{equation*}
        \EndIf
        \State Observe $\Xcal_t$, sample $J_t \sim \unif([m])$, choose $\xb_t \in \argmax_{\xb \in \Xcal_t} \ips{\xb}{\thetab_{t-1}^{J_t}}$ and observe $y_t$
        \State Update $\Vb_t = \Vb_{t-1} + \xb_t \xb_t^\top$ and $\hat{\thetab}_t = \Vb_t^{-1} \sum_{s=1}^t \xb_s y_s$
        \State For each $j \in [m]$,
            sample $\xi_t^j \sim \Ncal(0,1)$ and update 
            \begin{equation*}
                \sbb_t^j = \sbb_{t-1}^j + \xb_t \xi_t^j \, ,
                \quad
                \thetab_t^j = \hat{\thetab}_{t} + \gamma \beta_{t} \Vb_{t}^{-1} \sbb_{t}^{j}
            \end{equation*}
    \EndFor
    \end{algorithmic}
\end{algorithm}
We now describe our ensemble sampling algorithm. The algorithm maintains $m$ perturbed least-squares models.
At each round, it samples one ensemble member uniformly at random and acts greedily with respect to the corresponding perturbed parameter. 
The key difference from prior ensemble sampling methods~\citep{janz2024ensemble, lee2024improved, sun2025provable,
janz2026sharp} is that the ensemble is refreshed only when the regularized Gram matrix changes substantially.

Let $\alpha>0$ be the epoch-refresh parameter. Each epoch starts at some round $\tau$. During the epoch, the algorithm monitors the regularized Gram matrix and starts a new epoch whenever
\[
    \Vb_{t-1} \npreceq (1+\alpha)\Vb_\tau .
\]
Thus, throughout an epoch, the regularized Gram matrix remains comparable to its value at the epoch start. This comparability is the key property used in our analysis.

At the beginning of an epoch, each ensemble perturbation is initialized according to the current Gram matrix. Specifically, for each \(j\in[m]\), the
algorithm samples \(\gb^j\sim\Ncal(\zero_d,\Ib_d)\) and sets
\(\sbb_\tau^j=\Vb_\tau^{1/2}\gb^j\) and
\(\thetab_\tau^j=\hat{\thetab}_\tau+\gamma\beta_\tau\Vb_\tau^{-1}\sbb_\tau^j\).
At round \(t\), the algorithm observes \(\Xcal_t\), samples an ensemble index
\(J_t\sim\unif([m])\), and chooses
\[
    \xb_t
    \in
    \argmax_{\xb\in\Xcal_t}
    \ips{\xb}{\thetab_{t-1}^{J_t}} .
\]
After observing reward \(y_t\), the algorithm updates
\(\Vb_t=\Vb_{t-1}+\xb_t\xb_t^\top\) and
\(\hat{\thetab}_t=\Vb_t^{-1}\sum_{s=1}^t\xb_sy_s\).
Then, for each \(j\in[m]\), it samples
\(\xi_t^j\sim\Ncal(0,1)\) and updates
\(\sbb_t^j=\sbb_{t-1}^j+\xb_t\xi_t^j\) and
\(\thetab_t^j=\hat{\thetab}_t+\gamma\beta_t\Vb_t^{-1}\sbb_t^j\).

The epoch-refresh mechanism has two roles. First, it repeatedly adjusts the perturbation distribution to match the current Gram matrix. 
Second, it restricts the uniform control of the ensemble to a sequence of epochs within which the regularized Gram matrices remain comparable.
This localization is what allows us to replace the previous $\Theta(d\log (T/\delta))$ ensemble-size requirement for ensemble sampling~\citep{janz2024ensemble, janz2026sharp} by $\Theta(d\log d+d\log\log T + \log(1/\delta))$.

\subsection{Regret Bound for Dimension-Dominated Regime}
We present our main theoretical guarantees. 
The first result considers arbitrary bounded arm sets, including infinite ones. It shows that the proposed algorithm matches the best known regret rate for ensemble sampling while using a smaller ensemble. 

\begin{assumption}[Boundedness] \label{assm:boundedness}
    For every $t \in [T]$, the arm set $\Xcal_t$ is closed and satisfies $\| \xb\|_2 \le 1$ for all $\xb \in \Xcal_t$. 
    We also assume that there exists $S > 0$ such that $\| \thetab^* \|_2 \le S$.
\end{assumption}

\begin{restatable}[Regret bound for $\algname$]{theorem}{RegretBoundForInfiniteArmSet} \label{thm:regret bound for infinite arm set}
    Suppose Assumption~\ref{assm:boundedness} holds and \(T \ge \max\{2,d\}\).
    Fix any \(\delta \in (0,1/4)\), and choose the parameters in Algorithm~\ref{alg:main algorithm} as \(\lambda\ge80\), \(\gamma=40\), \(\alpha=1\),
    \begin{align*}
        & \beta_t
        =
        \sqrt{d\log\!\left(1+\frac{t}{d\lambda}\right)+2\log\frac{1}{\delta}}
        +
        \sqrt{\lambda}\,S,    \quad \text{and} \quad
        m
        \ge
        C\left(
        d\log\!\left(d+\log\frac{\bar R_\alpha}{\delta}\right)
        +
        \log\frac{\bar R_\alpha}{\delta}
        \right) \, ,
    \end{align*}
    where \(C>0\) is an absolute constant and \(\bar R_\alpha:= 1 +\left\lfloor d\log_{1+\alpha}\!\left(1+\frac{T}{\lambda d}\right)\right\rfloor\). Then, with probability at least \(1-4\delta\), Algorithm~\ref{alg:main algorithm} incurs cumulative regret
    $\regret_T=\tilde{\Ocal}\!\left(d^{3/2}\sqrt{T}\right)$.
\end{restatable}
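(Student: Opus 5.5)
Our plan follows the standard template for randomized linear bandit analyses (optimism with constant probability plus concentration of the perturbations), localized to epochs. First, we bound the number of epochs. Each refresh at time \(t-1\) occurs because \(\Vb_{t-1}\npreceq(1+\alpha)\Vb_\tau\). Then some direction has grown by a factor of at least \(1+\alpha\), and hence \(\det\Vb_{t-1}\ge(1+\alpha)\det\Vb_\tau\). This holds because \(\Vb_{t-1}\succeq\Vb_\tau\): the matrix \(\Vb_\tau^{-1/2}\Vb_{t-1}\Vb_\tau^{-1/2}\) has all eigenvalues \(\ge1\) and one eigenvalue \(>1+\alpha\). Since \(\det\Vb_T\le(\lambda+T/d)^d\), the number of epochs is at most \(\bar R_\alpha\). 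We also use the standard self-normalized confidence set of \citet{abbasi2011improved}: with probability \(1-\delta\), \(\|\hat\thetab_t-\thetab^*\|_{\Vb_t}\le\beta_t\) for all \(t\).

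Second, we describe the perturbation within one epoch \([\tau,\tau')\). Write \(\Vb_t^{-1}\sbb_t^j=\Vb_t^{-1}\Vb_\tau^{1/2}\gb^j+\Vb_t^{-1}\sum_{s=\tau+1}^t\xb_s\xi_s^j\). Conditionally on the past (ignoring the adaptivity for a moment), \(\Vb_t^{1/2}\cdot\Vb_t^{-1}\sbb_t^j\) has covariance \(\Vb_t^{-1/2}(\Vb_\tau+\sum\xb_s\xb_s^\top)\Vb_t^{-1/2}=\Ib\). So each member is ``Thompson-like'' at all times. The real difficulty is that \(\xb_s\) depends on the \(\xi\)'s through \(J_s\).

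We plan to handle this with two uniform-in-epoch events. The first is an \emph{upper} event: for all \(j\) and all \(t\) in the epoch, \(\|\Vb_t^{-1}\sbb_t^j\|_{\Vb_t}\lesssim\sqrt{d}+\sqrt{\log(m\bar R_\alpha/\delta)}\). To prove it, we bound \(\|\sbb_t^j\|_{\Vb_t^{-1}}\le\|\sbb_t^j\|_{\Vb_\tau^{-1}}\). Within the epoch \(\Vb_t\preceq(1+\alpha)\Vb_\tau\) on all but the last step, so \(\sum_s\xb_s\xb_s^\top\preceq\alpha\Vb_\tau\) and the noise martingale \(\Vb_\tau^{-1/2}\sum\xb_s\xi^j_s\) has bounded predictable quadratic variation \(\preceq\alpha\Ib\). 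Freedman or a vector martingale bound over a net then gives the claim. Here the localization removes the \(\log T\) that the self-normalized bound would otherwise incur, leaving only \(\log\bar R_\alpha=\Ocal(\log d+\log\log T)\).

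The second is a \emph{lower} (anti-concentration) event, which is the main obstacle. We need that for every direction \(\ub\) (a unit vector in the \(\Vb_t^{-1}\) geometry, in particular \(\Vb_t^{-1/2}\)-normalized \(\xb_t^*\)), a constant fraction of the ensemble satisfies \(\ips{\ub}{\Vb_t^{-1/2}\sbb_t^j}\ge c\), simultaneously for all \(t\) in the epoch. Our first attempt is to use the fresh \(\gb^j\): the term \(\ips{\Vb_\tau^{1/2}\Vb_t^{-1}\xb}{\gb^j}\) has variance \(\|\xb\|^2_{\Vb_t^{-1}\Vb_\tau\Vb_t^{-1}}\ge(1+\alpha)^{-1}\|\xb\|^2_{\Vb_t^{-1}}\) by comparability. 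The martingale part is bounded by the upper event, scaled by \(\sqrt\alpha\). We then argue that \(\Pr[\ips{\vb}{\gb}\ge c'\|\vb\|]\) is a constant \(p\), apply a Chernoff bound over \(m\) independent \(\gb^j\), and take a union over an \(\epsilon\)-net of the sphere of size \(e^{\Ocal(d\log d)}\) and over epochs. This requires \(m\gtrsim d\log d+\log(\bar R_\alpha/\delta)\), matching the theorem. Making the martingale term small relative to the \(\gb\) term needs care. We intend to use the fact that the good members are those whose \(\gb^j\) term exceeds a threshold larger than the martingale's deviation, with the constants \(\gamma=40\) and \(\lambda\ge80\) absorbing them. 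If this does not work directly, we will fall back on a fresh-Gaussian-plus-bounded-drift argument with a slightly larger threshold.

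Finally, we carry out the regret decomposition. On these events, \(\thetab_{t-1}^{J_t}\) is optimistic with probability \(\ge p\) over \(J_t\), and \(\|\thetab^j_{t-1}-\thetab^*\|_{\Vb_{t-1}}\lesssim\gamma\beta_{t-1}\sqrt d\) for all \(j\). We apply the standard TS argument of \citet{abeille2017Linear}/\citet{janz2024ensemble}: regret per round is at most \(\Ocal(\gamma\beta\sqrt d/p)\,\EE_{J}\|\xb_t\|_{\Vb_{t-1}^{-1}}\) plus a martingale difference. Azuma's inequality and the elliptical potential lemma then yield \(\regret_T=\tilde\Ocal(\beta_T\sqrt d\sqrt{dT})=\tilde\Ocal(d^{3/2}\sqrt T)\). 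The total failure probability is \(4\delta\) after a union over the confidence set, the upper and lower events, and Azuma.
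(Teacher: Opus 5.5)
Your epoch count, your upper (state-norm) event, and your final regret step are sound and essentially match the paper. You control the within-epoch martingale through its quadratic variation $\preceq\alpha\Ib_d$ in $\Vb_\tau$-coordinates; the paper uses the self-normalized bound with determinant ratio $(1+\alpha)^d$. Both work.

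\textbf{The gap is in the lower (exceedance) event, which is the heart of the theorem.} Fix $\ub$ and write $\ips{\ub}{\sbb^j_{t-1}}/\norm{\ub}_{\Vb_\tau}=G^j+N^j_t$. Here $G^j=\ips{\Vb_\tau^{1/2}\ub}{\gb^j}/\norm{\ub}_{\Vb_\tau}\sim\Ncal(0,1)$, and $N^j_t$ is the within-epoch martingale.

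The quadratic variation of $N^j_t$ is $\ub^\top(\Vb_{t-1}-\Vb_\tau)\ub/\norm{\ub}^2_{\Vb_\tau}$, which can be as large as $\alpha=1$. So the drift is of the same order as the fresh Gaussian, not a small correction. No choice of $\gamma$ or $\lambda$ changes this: both parts carry the same self-normalization, and their relative size is set by $\alpha$ alone.

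Your proposed handling of $N^j_t$ fails in each form:
\begin{itemize}
\item Bounding it by your upper event gives $\sqrt{\alpha}\,(\sqrt d+\sqrt{\log(m\bar R_\alpha/\delta)})$, which swamps the $O(1)$ Gaussian margin.
\item A directional bound uniform over members is still of order $\sqrt{\alpha\log(m/\delta)}$. Raising the threshold above it, as in your fallback, makes $\PP(G^j\ge\text{threshold})$ polynomially small rather than constant, so you lose the constant fraction.
\item Even for a single member, a union-bound combination $\PP(G^j\ge c+z)-\PP(\min_t N^j_t\le -z)$ is negative for every $z\ge 0$ when $\alpha=1$, because $1-\Phi(c+z)<e^{-z^2/2}$.
\end{itemize}

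There is a second problem. Your Chernoff step over the independent $\gb^j$ controls only the Gaussian parts. Whether member $j$ is good also depends on $N^j$, whose coefficients $\ips{\ub}{\xb_s}$ are shared by all members and depend on all of them through $J_s$ and the argmax. So the good-member indicators are not independent.

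The paper resolves both issues by treating $G^j+N^j_t$ as a single time-changed Gaussian process.
\begin{itemize}
\item By the diagonal embedding of Janz et al., conditional on the epoch-start history, $\ips{\ub}{\sbb^j_{t-1}}=W^j(\norm{\ub}^2_{\Vb_{t-1}})$. The $W^j$ are independent Brownian motions sharing a common clock confined to $[\norm{\ub}^2_{\Vb_\tau},(1+\alpha)\norm{\ub}^2_{\Vb_\tau}]$.
\item Their time-uniform exceedance theorem, applied with clock ratio $1+\alpha$, then gives $E_{t,m}(\ub,2/\gamma)\ge 1/10$ throughout the epoch once $m\ge 40\log(\kappa_\alpha/\delta')$.
\item A net in $\Vb_\tau$-normalized coordinates (Lipschitz constant $2\sqrt{1+\alpha}$ plus the state bound) and a union over $\bar R_\alpha$ epochs yield the stated $m$.
\end{itemize}

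If you want to keep your decoupling instead, you would need two further steps:
\begin{itemize}
\item A multiplicative combination obtained by conditioning on $\gb^j$, e.g.\ $\PP(G^j\ge c+z,\ \min_t N^j_t\ge -z)\ge(1-\Phi(c+z))(1-e^{-z^2/(2\alpha)})$ for a constant $z$.
\item A concentration bound on the number of members whose martingale crosses $-z$ that tolerates the shared adaptive coefficients, e.g.\ via products over members of stopped exponential martingales.
\end{itemize}
Neither appears in your proposal.
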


\begin{remark}[Ensemble size] \label{rem:ensemble size}
    When $\alpha = 1$, it follows that $\log \bar{R}_1 = \Ocal(\log d + \log \log T)$.
    Hence, the sufficient condition on the ensemble size in Theorem~\ref{thm:regret bound for infinite arm set} gives
    \begin{equation*}
        m
        =
        \Theta \left(
        d\log d + d\log\log T + \log\frac{1}{\delta}
        \right)    
    \end{equation*}
    up to universal constants and lower-order logarithmic terms in $d,\lambda,\delta$.
\end{remark}

\paragraph{Discussion of Theorem~\ref{thm:regret bound for infinite arm set}.}
Theorem~\ref{thm:regret bound for infinite arm set} shows that Algorithm~\ref{alg:main algorithm} achieves a regret bound of order $\widetilde{\Ocal}\!\left(d^{3/2}\sqrt{T}\right)$ for dimension-dominated regime, with an ensemble size smaller than that required by prior ES analyses~\citep{janz2024ensemble, janz2026sharp}.
In particular, when $\alpha=1$, Remark~\ref{rem:ensemble size} implies that it suffices to take $m = \Ocal\!\left(d\log d+d\log\log T+\log \frac{1}{\delta}\right)$, up to lower-order logarithmic terms. Thus, our ES algorithm matches the sharpest existing analysis of linear ensemble sampling~\citep{janz2026sharp}, while reducing the sufficient ensemble size from order $d\log (T/\delta)$ to roughly $d\log d+d\log\log T + \log(1/\delta)$.

This improvement is particularly meaningful in view of the known ensemble-size barrier for ES. \citet{janz2026sharp} showed that, although an ensemble size of order $d\log T$ is sufficient to obtain
$\widetilde{\Ocal}(d^{3/2}\sqrt{T})$ regret, there exist instances on which ES with only $m=\Theta(d)$ ensemble members suffers linear regret. 
Our result therefore moves the sufficient ensemble size significantly closer to this lower-bound regime, while preserving the same regret rate as the best known ES bound for general arm sets.

\begin{remark}[Anytime implementation without reset] \label{rem:no-reset-doubling}
    Our theoretical choice of the ensemble size depends on the horizon \(T\).
    When \(T\) is unknown, we can use a doubling schedule~\citep{besson2018doubling} and increase the ensemble
    size whenever the current horizon estimate doubles. Unlike prior ES methods~\citep{janz2024ensemble, sun2025provable, janz2026sharp}, this does not require restarting the algorithm or
    discarding past observations. Because our algorithm already refreshes ensemble
    perturbations according to the current Gram matrix, we can simply start a new
    epoch and initialize the enlarged ensemble using the current Gram matrix. The
    regularized Gram matrix and the ridge estimator are kept unchanged, so all past
    data continue to be used. Thus, our epoch-refresh mechanism yields an anytime
    implementation without data reset, preserving the regret guarantees up to
    logarithmic factors and providing a more data-efficient procedure in practice.
\end{remark}

\subsection{Regret Analysis}

In this section, we present the main ingredients of the regret analysis for Theorem~\ref{thm:regret bound for infinite arm set}. The proof follows the standard randomized-optimism route for linear bandits~\citep{abeille2017Linear, kveton2020perturbed, janz2024ensemble, lee2024improved, janz2026sharp}, but the key technical difficulty is to show that a sufficiently large fraction of ensemble members is optimistic uniformly over time and over all directions~\citep{janz2026sharp}.
Our epoch-refresh construction makes this possible with a smaller ensemble size by localizing the perturbation analysis to epochs.

\subsubsection{Deterministic Upper Bound on the Number of Epochs}
The number of epochs generated by Algorithm~\ref{alg:main algorithm} is random, since epoch refreshes depend on the realized Gram matrices, which in turn are determined by the algorithm's randomization and the observed data.
We first derive a deterministic upper bound on the total number of epochs.
Let $R_\alpha$ denote the total number of epochs generated by Algorithm~\ref{alg:main algorithm}. 
Since each completed epoch is triggered by a multiplicative growth of the regularized Gram matrix, a determinant-growth argument gives
\begin{equation} \label{eq:epoch-num-upper-bound}
    R_{\alpha}
    \le
    1+
    \frac{
        d\log\left(1+\frac{T}{\lambda d}\right)
    }{
        \log(1+\alpha)
    }
    \le 
    1+
    \left\lfloor
    \frac{
        d\log\left(1+\frac{T}{\lambda d}\right)
    }{
        \log(1+\alpha)
    }
    \right\rfloor =: \bar{R}_\alpha \, .
\end{equation}
This bound is deterministic and holds for every possible realization of the algorithm. The proof of Eq.~\eqref{eq:epoch-num-upper-bound} is provided in Lemma~\ref{lem:num-epochs}.

\subsubsection{Self-Normalized Exceedance Frequencies Inside One Epoch}
The central object in the analysis is the self-normalized exceedance frequency
\begin{equation*}
    E_{t,m}(\ub,c)
    :=
    \frac{1}{m}
    \sum_{j=1}^m
    \ind\left\{
        \frac{\ips{\ub}{\sbb_{t-1}^j}}
             {\|\ub\|_{\Vb_{t-1}}}
        \ge c
    \right\},
    \qquad
    \ub\in \mathbb S^{d-1} \, .
\end{equation*}
This quantity measures the fraction of ensemble perturbations that are large enough in direction $\ub$ at round $t$.
This exceedance frequency is also the central quantity in the analysis of~\citet{janz2026sharp}. 
Their proof controls a fixed collection of $m$ persistent ensemble members over the entire horizon by reducing the problem to a time-uniform exceedance event for the corresponding $m$ independent Brownian motions.
At a high level, for each fixed direction $\ub$, the projected perturbation process $\Mb_t := (\ips{\sbb^j_{t-1}}{\ub})_{j=1}^m$ can be represented through Brownian motions whose time parameter is the self-normalized clock $\|\ub\|_{\Vb_{t-1}}^2$. Since their ensemble is not refreshed, this clock must be controlled over the full range induced by the Gram-matrix growth from the beginning to the end of the horizon. After further making this control uniform over directions, their analysis requires an ensemble size of order $d\log T$.

The main advantage of epoch refreshes is that the regularized Gram matrix varies only within a controlled multiplicative range during each epoch.
Indeed, if round $t$ belongs to the $r$-th epoch starting at $\tau_r$, then the refresh rule guarantees 
$    \Vb_{\tau_r}
    \preceq
    \Vb_{t-1}
    \preceq
    (1+\alpha)\Vb_{\tau_r}$.
Thus, for any fixed direction $\ub$, the Brownian clock associated with the projected perturbation process varies only over the local interval
$\left[ \|\ub\|_{\Vb_{\tau_r}}^2, (1+\alpha)\|\ub\|_{\Vb_{\tau_r}}^2 \right]$.
Consequently, when applying the Brownian exceedance theorem (Theorem 5.4 in~\citep{janz2026sharp}), the relevant clock ratio is controlled by $1+\alpha$, rather than by the global Gram-matrix growth over the entire horizon.

\begin{restatable}[Fixed-direction exceedance inside one epoch]{lemma}{FixedDriectionEpoch}\label{lem:fixed-direction-epoch}
    Fix an epoch $r$ and a direction $\ub \in \usd$.
    For $\alpha > 0$, let $\gamma := 40, \, \kappa_\alpha := \left\lceil 250\log (1 + \alpha)\right\rceil$.
    Then for every $\delta'\in(0,1)$, if $m \ge 40 \log \frac{\kappa_\alpha}{\delta'}$,
    then with probability at least $1-\delta'$, \(\forall t\in\{\tau_r+1,\dots,\tau_{r+1}\}\), we have
    $E_{t,m}(\ub,2/\gamma) \ge \frac{1}{10}$.
\end{restatable}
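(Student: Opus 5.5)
Fix the epoch $r$ and the direction $\ub$. The plan is to embed the projected perturbations in independent Brownian motions on the self-normalized clock, observe that within the epoch this clock moves only by a factor $1+\alpha$, and then apply the Brownian exceedance theorem (Theorem 5.4 of~\citep{janz2026sharp}) on that short interval.

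\textbf{Brownian embedding.} For $t\in\{\tau_r+1,\dots,\tau_{r+1}\}$ and each $j$ we have
\[
\ips{\ub}{\sbb_{t-1}^j}=\ips{\ub}{\Vb_{\tau_r}^{1/2}\gb^j}+\sum_{s=\tau_r+1}^{t-1}\ips{\ub}{\xb_s}\xi_s^j .
\]
Conditionally on the epoch-start history, the first term is $\Ncal(0,\|\ub\|_{\Vb_{\tau_r}}^2)$, independent across $j$. Each increment $\ips{\ub}{\xb_s}\xi_s^j$ is conditionally $\Ncal(0,\ips{\ub}{\xb_s}^2)$ given $\Hcal_s$, because $\xi_s^j$ is drawn after $\xb_s$ is chosen. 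These increments are independent across $j$ given the past, even though $\xb_s$ depends on all ensemble members through $J_s$.

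Define the clock $\rho_t:=\|\ub\|_{\Vb_{t-1}}^2=\|\ub\|_{\Vb_{\tau_r}}^2+\sum_{s=\tau_r+1}^{t-1}\ips{\ub}{\xb_s}^2$. It is nondecreasing and predictable. By the standard Gaussian-martingale embedding (Dubins--Schwarz, or the discrete construction in~\citep{janz2026sharp}), there exist independent standard Brownian motions $B^1,\dots,B^m$ such that $\ips{\ub}{\sbb_{t-1}^j}=B^j(\rho_t)$ for all $t$ in the epoch. The refresh at $\tau_r$ matters here: it makes $B^j$ start at $0$ at clock time $0$ with fresh randomness, so past epochs play no role.

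\textbf{Localizing the clock.} The refresh rule gives $\Vb_{\tau_r}\preceq\Vb_{t-1}\preceq(1+\alpha)\Vb_{\tau_r}$. Hence $\rho_t\in[a,(1+\alpha)a]$ with $a=\|\ub\|_{\Vb_{\tau_r}}^2>0$. The event $E_{t,m}(\ub,2/\gamma)\ge 1/10$ then becomes
\[
\frac1m\sum_j\ind\{B^j(\rho)/\sqrt{\rho}\ge 2/\gamma\}\ge 1/10 .
\]
It therefore suffices to show this holds simultaneously for all $\rho\in[a,(1+\alpha)a]$. By Brownian scaling we may take $a=1$, so the claim concerns $m$ i.i.d.\ Brownian motions on $[1,1+\alpha]$ and is deterministic in its parameters.

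\textbf{Applying the exceedance theorem.} Next, invoke Theorem 5.4 of~\citep{janz2026sharp} with clock ratio $1+\alpha$. Presumably it states that for $m$ independent Brownian motions, the fraction of $j$ with $B^j(\rho)\ge c\sqrt{\rho}$ stays above a constant for all $\rho$ in an interval of ratio $L$, with failure probability at most $N(L)e^{-m/40}$. Here $N(L)$ is the number of geometric grid cells, about $250\log L$.

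If that exact form is not available, the fallback argument runs as follows:
\begin{itemize}
\item Partition $[1,1+\alpha]$ geometrically into $\kappa_\alpha=\lceil 250\log(1+\alpha)\rceil$ cells of ratio $e^{1/250}$.
\item On each cell, show that each $j$ independently satisfies, with probability at least $p>1/10$ (a constant such as $0.2$), both $B^j(\rho_k)\ge(3/\gamma)\sqrt{\rho_k}$ at the left endpoint and a bounded oscillation across the cell. With $c=2/\gamma=0.05$, $\Pr[N(0,1)\ge 0.075]\approx0.47$, and the cell oscillation is small with high constant probability, so $p$ is comfortably above $0.2$.
\item A Chernoff bound gives a fraction of at least $1/10$ per cell with failure probability at most $e^{-m/40}$.
\item A union bound over the $\kappa_\alpha$ cells gives failure probability $\kappa_\alpha e^{-m/40}\le\delta'$ once $m\ge 40\log(\kappa_\alpha/\delta')$.
\end{itemize}

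\textbf{Main obstacle.} The hard step is the embedding: the $\xb_s$ depend adaptively on the ensemble, so one must check that the processes $B^j$ are genuinely independent Brownian motions and that evaluating them at the random, predictable times $\rho_t$ is legitimate. I would handle this by the time-change construction of~\citep{janz2026sharp}, conditioned on the epoch start, treating the extension after $\tau_{r+1}$ with auxiliary independent Brownian increments.
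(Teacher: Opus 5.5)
Your proposal is correct and follows the paper's proof essentially step by step. You condition on the history at $\tau_r$, represent $\ips{\ub}{\sbb_{t-1}^j}$ as $W^j(\|\ub\|_{\Vb_{t-1}}^2)$ via the diagonal Gaussian embedding of \citet{janz2026sharp} (the refreshed Gaussian supplies the value at clock time $\|\ub\|_{\Vb_{\tau_r}}^2$), and confine the clock to $[\|\ub\|_{\Vb_{\tau_r}}^2,(1+\alpha)\|\ub\|_{\Vb_{\tau_r}}^2]$. You then apply their Theorem 5.4 with $c=2/\gamma=1/20$, $p=1/10$, $h=1/250$, which gives exactly $m\ge 40\log(\kappa_\alpha/\delta')$.

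The fallback is not needed, but its constants as stated do not work. A left-endpoint threshold of $0.075$ leaves a margin of about $0.025$ against a within-cell standard deviation of about $0.063$, so $p\approx 0.14$, which is too small for an $e^{-m/40}$ Chernoff bound. Raising that threshold to around $0.15$ gives $p\approx 0.39$ and repairs it.
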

Note that the epoch-refresh construction allows us to represent the projected perturbations by Brownian motions whose clocks vary only over the local multiplicative range of that epoch. 
This makes it possible to invoke the Brownian exceedance theorem with a uniformly bounded clock ratio, rather than the horizon-dependent ratio arising in the global persistent-ensemble analysis~\citep{janz2026sharp}.

\subsubsection{Self-Normalized Exceedance Frequencies Across Epochs}
The fixed-direction bound is not sufficient for bounded arm sets that may be infinite, because the optimistic direction can depend on both the round and the unknown optimal arm. We therefore extend the fixed-direction guarantee uniformly over all directions on the unit sphere.
The first ingredient is a local Lipschitz property of the normalized map induced by the Gram-matrix change within an epoch. 
For an epoch $r$ and a round $t$ in that epoch, define
\begin{equation*}
    \Bb_{r,t}
    :=
    \Vb_{t-1}^{1/2}\Vb_{\tau_r}^{-1/2},
    \qquad
    \phib_{r,t}(\wb)
    :=
    \frac{\Bb_{r,t}\wb}{\|\Bb_{r,t}\wb\|_2},
    \quad
    \wb\in\mathbb S^{d-1} \, .
\end{equation*}
Since $\Vb_{t-1}\preceq (1+\alpha)\Vb_{\tau_r}$ inside the epoch, this map is Lipschitz (Lemma~\ref{lem:local-lipschitz}):
\begin{equation*}
    \|\phib_{r,t}(\wb)-\phib_{r,t}(\wb')\|_2
    \le
    2\sqrt{1+\alpha}\,\|\wb-\wb'\|_2 \, .
\end{equation*}

This allows us to transfer the exceedance guarantee from a nearby net point to an arbitrary direction.
This is a key point where our epoch-refresh analysis differs from prior persistent-ensemble analyses~\citep{janz2024ensemble, janz2026sharp}. 
In the analysis of~\citet{janz2026sharp}, the corresponding normalized map must be controlled over the full sequence of Gram-matrix geometries along the entire horizon. Since the Gram matrix may grow by a factor polynomial in $T$, the Lipschitz constant can scale as large as order $\sqrt{T}$, which forces a much finer sphere net and contributes
a $d\log T$ term to the required ensemble size. In contrast, our refresh rule ensures that, within each epoch, the Gram matrix changes by at most a factor $1+\alpha$. Consequently, the Lipschitz constant of $\phib_{r,t}$ is bounded by
$2\sqrt{1+\alpha}$, which is independent of $T$ for constant $\alpha$. This local Lipschitz control is one of the main reasons why the uniform exceedance argument can be carried out with a smaller ensemble.

The second ingredient is a uniform upper bound on the normalized ensemble perturbations. Define
\begin{equation*}
    \Gamma_\delta
    :=
    \sqrt{d}
    +
    \sqrt{2\log\!\left(\frac{4m\bar R_\alpha}{\delta}\right)}
    +
    \sqrt{
        d\log(1+\alpha)
        +
        2\log\!\left(\frac{4m\bar R_\alpha}{\delta}\right)
    } \, .    
\end{equation*}
Then, with high probability, we have
\begin{equation*}
    \forall r\le R_\alpha,\ \forall j\in[m],\
    \forall t\in\{\tau_r+1,\ldots,\tau_{r+1}\}:
    \qquad
    \left\|
        \Vb_{t-1}^{-1/2}\sbb_{t-1}^j
    \right\|_2
    \le
    \Gamma_\delta \, .
\end{equation*}

This bound controls the loss incurred when passing from a net point to a nearby direction. Combining the local Lipschitz property with this perturbation bound, we choose a sphere net of radius proportional to $(\sqrt{1+\alpha}\gamma\Gamma_\delta)^{-1}$.
Putting these ingredients together, and taking a union bound over all epochs and all points in the sphere net, yields the following uniform exceedance guarantee:
\begin{restatable}[Uniform exceedance count across epochs]{lemma}{UniformExceedanceCountAcrossEpochs} \label{lem:uniform-exceedance-across-epochs}
    Fix $\gamma := 40$, $\kappa_\alpha := \lceil 250\log(1+\alpha)\rceil$, $N_\delta := \left( 6\sqrt{1+\alpha}\,\gamma \Gamma_\delta \right)^d$.
    If
    \begin{equation}     \label{eq:ER-condition}
        m \ge 40 \log\!\left(\frac{2 \kappa_\alpha \bar R_\alpha N_\delta}{\delta}\right),
    \end{equation}
    then with probability at least \(1-\delta\),
    $\displaystyle \min_{t\in[T]}\ \inf_{\ub\in \usd} E_{t,m}(\ub,1/\gamma)\ge \frac{1}{10}$.    
\end{restatable}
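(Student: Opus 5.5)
We combine Lemma~\ref{lem:fixed-direction-epoch} with a net argument inside each epoch and take a union bound over epochs and net points. Fix epoch $r$ with start $\tau_r$. The net is taken in the geometry of the epoch start rather than on raw directions: let $\Ncal$ be a $\epsilon$-net of $\usd$ with $\epsilon := (3\sqrt{1+\alpha}\,\gamma\Gamma_\delta)^{-1}$, so $|\Ncal|\le(3/\epsilon)^d = N_\delta$. For each $\wb\in\Ncal$ we apply Lemma~\ref{lem:fixed-direction-epoch} to the direction $\ub_{\wb}:=\Vb_{\tau_r}^{-1/2}\wb/\|\Vb_{\tau_r}^{-1/2}\wb\|_2$ with $\delta' := \delta/(2\bar R_\alpha N_\delta)$. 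The requirement $m\ge 40\log(\kappa_\alpha/\delta')$ is exactly \eqref{eq:ER-condition}.

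Two measurability points need care. First, $\tau_r$ and $\Vb_{\tau_r}$ are random, but the refreshed $\gb^j$ are fresh and independent of $\Hcal$ at time $\tau_r$. I will therefore apply the lemma conditionally on the history up to the refresh, with $\ub_{\wb}$ measurable with respect to it. Second, the number of epochs is random; I handle this by indexing epochs $r=1,\dots,\bar R_\alpha$ (empty epochs are vacuous) and using the deterministic bound \eqref{eq:epoch-num-upper-bound}. The union bound then gives, with probability at least $1-\delta/2$, that $E_{t,m}(\ub_{\wb},2/\gamma)\ge 1/10$ for all $r$, all $\wb\in\Ncal$ and all $t$ in epoch $r$. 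Separately, the uniform bound $\|\Vb_{t-1}^{-1/2}\sbb_{t-1}^j\|_2\le\Gamma_\delta$ holds with probability at least $1-\delta/2$. Its $\log(4m\bar R_\alpha/\delta)$ terms are designed for this allocation: Gaussian norm concentration at the refresh gives the $\sqrt d$ term, and a self-normalized martingale bound within the epoch gives the $d\log(1+\alpha)$ term.

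The transfer step is the main obstacle. Fix $t$ in epoch $r$ and an arbitrary $\ub\in\usd$. Write $\vb := \Vb_{t-1}^{-1/2}\sbb^j_{t-1}$. Then
\[
\frac{\ips{\ub}{\sbb_{t-1}^j}}{\|\ub\|_{\Vb_{t-1}}} = \ips{\frac{\Vb_{t-1}^{1/2}\ub}{\|\Vb_{t-1}^{1/2}\ub\|_2}}{\vb}.
\]
The first factor equals $\phib_{r,t}(\wb)$ with $\wb := \Vb_{\tau_r}^{1/2}\ub/\|\Vb_{\tau_r}^{1/2}\ub\|_2\in\usd$, because $\Bb_{r,t}\Vb_{\tau_r}^{1/2}\ub = \Vb_{t-1}^{1/2}\ub$. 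Choose $\wb'\in\Ncal$ with $\|\wb-\wb'\|_2\le\epsilon$. Note that $\ub_{\wb'}$ maps to $\wb'$ under the same normalization, so its statistic is $\ips{\phib_{r,t}(\wb')}{\vb}$.

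By Lemma~\ref{lem:local-lipschitz} and Cauchy--Schwarz, the two statistics differ by at most $2\sqrt{1+\alpha}\,\epsilon\,\Gamma_\delta = \tfrac{2}{3\gamma}\le 1/\gamma$. Hence every member $j$ that exceeds $2/\gamma$ at $\ub_{\wb'}$ exceeds $1/\gamma$ at $\ub$. This gives $E_{t,m}(\ub,1/\gamma)\ge E_{t,m}(\ub_{\wb'},2/\gamma)\ge 1/10$. Taking the infimum over $\ub$ and the minimum over $t\in[T]$ concludes, with total failure probability at most $\delta$.
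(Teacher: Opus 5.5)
Your route is the paper's: a net in the epoch-start geometry $\wb(\ub)=\Vb_{\tau_r}^{1/2}\ub/\|\ub\|_{\Vb_{\tau_r}}$, Lemma~\ref{lem:fixed-direction-epoch} at each net point with $\delta'=\delta/(2\bar R_\alpha N_\delta)$, the state bound on a $\delta/2$ budget, and the transfer via Lemma~\ref{lem:local-lipschitz}. However, your net count is wrong, and the union bound fails as written.

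With $\epsilon=(3\sqrt{1+\alpha}\,\gamma\Gamma_\delta)^{-1}$ you have $3/\epsilon=9\sqrt{1+\alpha}\,\gamma\Gamma_\delta$. The covering bound therefore only gives $|\Ncal|\le(9\sqrt{1+\alpha}\,\gamma\Gamma_\delta)^d=(3/2)^d N_\delta$, not $N_\delta$. The volumetric bound $(1+2/\epsilon)^d$ also exceeds $N_\delta$. Your union bound over $\bar R_\alpha|\Ncal|$ events with $\delta'=\delta/(2\bar R_\alpha N_\delta)$ then has failure probability up to $(3/2)^d\delta/2$ rather than $\delta/2$. Condition \eqref{eq:ER-condition} does not cover this; you would need $N_\delta$ replaced by $(3/2)^dN_\delta$.

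The repair is to drop the slack and take $\epsilon=(2\sqrt{1+\alpha}\,\gamma\Gamma_\delta)^{-1}$, as the paper does. Then $(3/\epsilon)^d=N_\delta$ exactly. The transfer loss becomes $2\sqrt{1+\alpha}\,\epsilon\,\Gamma_\delta=1/\gamma$, which is enough because the exceedance indicator uses a non-strict inequality. A statistic $\ge 2/\gamma$ at the net point, plus a deviation of at most $1/\gamma$, gives a statistic $\ge 1/\gamma$ at $\ub$; the margin $2/(3\gamma)<1/\gamma$ you built in is unnecessary, and it is exactly what inflates the net.

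With that change, the rest of your argument goes through and coincides with the paper's proof. This includes conditioning at the refresh so that $\ub_{\wb}$ is measurable while the $\gb^j$ are fresh, indexing epochs up to the deterministic $\bar R_\alpha$, and the identity $\Bb_{r,t}\Vb_{\tau_r}^{1/2}\ub=\Vb_{t-1}^{1/2}\ub$.
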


Furthermore, the sufficient ensemble-size condition in Eq.~\eqref{eq:ER-condition} can be simplified as follows:
\begin{restatable}[Sufficient ensemble size]{corollary}{SufficientEnsembleSize}\label{cor:explicit-ensemble-size}
    Suppose \(0<\alpha\le 1\). There exists a universal constant \(C>0\) such that the sufficient condition on the ensemble size in Eq.~\eqref{eq:ER-condition} is satisfied by choosing
    \begin{equation*}
        m
        =
        \left\lceil
        C\left[
        d\log\!\left(d+\log\frac{\bar R_\alpha}{\delta}\right)
        +
        \log\frac{\bar R_\alpha}{\delta}
        \right]
        \right\rceil \, .   
    \end{equation*}
\end{restatable}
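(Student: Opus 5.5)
We need to show that the stated choice of $m$ satisfies $m \ge 40\log(2\kappa_\alpha\bar R_\alpha N_\delta/\delta)$ in Eq.~\eqref{eq:ER-condition}. The difficulty is that $N_\delta$ depends on $m$ through $\Gamma_\delta$, so the condition is implicit in $m$. The plan is to expand the right-hand side, bound every piece by explicit quantities, and handle the implicit $\log m$ term with a self-bounding argument.

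\emph{Expanding the right-hand side.} Since $0<\alpha\le1$, we have $\kappa_\alpha=\lceil 250\log(1+\alpha)\rceil\le 250\log 2+1\le 175$ and $\sqrt{1+\alpha}\le\sqrt2$. Hence
\begin{equation*}
40\log\frac{2\kappa_\alpha\bar R_\alpha N_\delta}{\delta}
\le c_0 + 40\log\frac{\bar R_\alpha}{\delta} + 40d\log\!\bigl(6\sqrt2\,\gamma\,\Gamma_\delta\bigr),
\end{equation*}
with $c_0$ an absolute constant. Let $L:=\log(4m\bar R_\alpha/\delta)$. Using $\sqrt a+\sqrt b\le\sqrt{2(a+b)}$ and $d\log(1+\alpha)\le d$, we get $\Gamma_\delta\le \sqrt d+\sqrt{2L}+\sqrt{d+2L}\le 3\sqrt{d+2L}$. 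Therefore
\begin{equation*}
d\log(6\sqrt2\,\gamma\,\Gamma_\delta)\le c_1 d + \tfrac d2\log(d+2L).
\end{equation*}
Since $\bar R_\alpha\ge1$ and $\delta<1$, we have $L\ge\log 4>0$ and $d+2L\ge 1$, so all logarithms are well defined and nonnegative. Writing $\ell:=\log(\bar R_\alpha/\delta)$ and $L=\log 4+\log m+\ell$, it suffices to verify
\begin{equation*}
m\ \ge\ c_2\bigl(d + \ell + d\log(d+\ell+\log m)\bigr).
\end{equation*}
The main obstacle is the $\log m$ term on the right.

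\emph{Handling the $\log m$ term.} Set $A:=d\log(d+\ell)+\ell$, where the constant $C$ may be enlarged so that $\log(d+\ell)$ is bounded below by an absolute constant; this is harmless because $d\ge1$ and $\ell\ge0$. Take $m=\lceil CA\rceil$. Then
\begin{equation*}
\log m\le \log(2C)+\log(d+\ell)+\log\log(e(d+\ell)),
\end{equation*}
so $d+\ell+\log m\le c_3(C)(d+\ell)$, where $c_3$ grows only logarithmically in $C$. Consequently
\begin{equation*}
d\log(d+\ell+\log m)\le d\log(d+\ell)+d\log c_3(C).
\end{equation*}
It follows that the right-hand side is at most $c_2\bigl(A + d(1+\log c_3(C))\bigr)+c_0'$. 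Also $d\le A/\log(d+\ell)$, which is bounded by a constant multiple of $A$. Hence the right-hand side is at most $c_4(1+\log C)A$.

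\emph{Choosing $C$.} Because $CA$ grows linearly in $C$ while the bound grows only like $\log C$, we can fix an absolute $C$ large enough that $CA\ge c_4(1+\log C)A$ and that the additive constants are absorbed. With this $C$, the stated $m$ satisfies Eq.~\eqref{eq:ER-condition}.

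The step requiring the most care is this fixed-point, self-bounding control of $\log m$ inside $\Gamma_\delta$. The aim is to show that the constants it introduces depend on $C$ only logarithmically, so that a single universal $C$ exists. Everything else is elementary bounding of the constants.
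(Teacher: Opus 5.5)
Your argument follows the same route as the paper's proof. You bound $\kappa_\alpha$ and $\sqrt{1+\alpha}$ by absolute constants, show $\Gamma_\delta\lesssim\sqrt{d+\ell+\log m}$, and reduce Eq.~\eqref{eq:ER-condition} to the implicit inequality $m\gtrsim d+\ell+d\log(d+\ell+\log m)$. You then close it with $m=\lceil CA\rceil$, $A=d\log(d+\ell)+\ell$, via $\log m\lesssim \log C+\log(d+\ell)$. On one point you are more careful than the paper. Its constants $C_5,\dots,C_9$ tacitly depend on $C_4$, so ``choose $C_4\ge C_9$'' is circular as written. Your observation that $c_3(C)$ grows only logarithmically in $C$ is exactly what justifies fixing a universal constant.

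The one step that is wrong as written is the claim that ``$C$ may be enlarged so that $\log(d+\ell)$ is bounded below by an absolute constant.'' The quantity $\log(d+\ell)$ does not involve $C$ at all. Take $d=1$, $\bar R_\alpha=1$ and $\delta\to1$. Then $\ell\to0$, so $A\to0$ and $m=\lceil CA\rceil=1$ for any fixed $C$. But the right-hand side of Eq.~\eqref{eq:ER-condition} is at least $40\log(2\kappa_\alpha N_\delta)\ge 40\log 480$, so the corollary as literally stated fails there.

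What you need is an explicit assumption such as $\delta\le 1/4$, as in Theorem~\ref{thm:regret bound for infinite arm set}. That gives $\ell\ge\log 4$ and hence $\log(d+\ell)\ge\log(1+\log 4)$. With this, your bound $d\le A/\log(d+\ell)\lesssim A$ and the absorption of additive constants are valid. The paper's proof uses the same lower bound silently, for instance in $d\log(C_7(d+z))\le C_8\, d\log(d+z)$. So this is an edge case your proof shares with the paper, not a flaw in your route.
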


Combining these ingredients with the standard regret-analysis framework for optimistic randomized algorithms for linear bandits yields the desired regret bound. 
The proof is deferred to Appendix~\ref{appx:regret bound for infinite arm set}.

\subsection{Regret Bound for Finite-Arm-Dominated Regime}
Our second result considers the finite-arm-dominated regime.
In this case, the regret analysis only needs arm-wise concentration over the finite set of arms presented at each round, rather than uniform concentration over all directions.
This makes it possible to replace the dimension-dependent uniform control by an arm-wise concentration argument, leading to the sharper finite-arm regret scaling.

For randomized linear bandit algorithms such as TS~\citep{agrawal2013thompson, abeille2017Linear} and PHE~\citep{kveton2020perturbed}, such a \(\sqrt{\log K}\)-type bound is relatively natural.
In these algorithms, the randomization used for arm selection is either resampled at each round or conditionally independent of the past.
Hence, conditional on the history and the current finite arm set, the perturbed reward estimate of each presented arm admits a sub-Gaussian tail bound. A union bound over the \(K\) arms then yields an arm-wise confidence scale of order \(\sqrt{\log K}\).

However, this argument does not directly apply to ES. In ES, the perturbation vector of each ensemble member is accumulated over time and reused across rounds. Moreover, the selected arms are themselves chosen using past ensemble perturbations. As a result, the perturbation scores at later rounds are not conditionally independent random variables given the past. 
Therefore, a direct Hoeffding-type argument over arms and rounds does not yield the desired arm-wise concentration bound. Our epoch-refresh construction addresses this difficulty by reinitializing the perturbations at epoch boundaries, thereby limiting the interval over which this adaptive dependence accumulates. 

\begin{restatable}[Uniform arm-wise concentration]{lemma}{UniformArmWiseConcentration} \label{lem:uniform arm-wise concentration}
    Suppose Assumption~\ref{assm:boundedness} holds, and suppose that the arm sets \(\{\Xcal_t\}_{t=1}^T\) are chosen by an oblivious adversary with \(|\Xcal_t|=K \ge2\) for all \(t\in[T]\).
    For any \(\alpha>0\) and \(\delta\in(0,1)\), let $L_\alpha:=\log\frac{8 K m T \bar{R}_\alpha}{\delta}$.
    Then, with probability at least \(1-\delta\), we have
    \begin{equation*}
        \max_{t\in[T],\,j\in[m],\,\xb\in\Xcal_t}
        \left|
        \frac{\xb^\top \Vb_{t-1}^{-1}\sbb_{t-1}^j}{\|\xb\|_{\Vb_{t-1}^{-1}}}
        \right|
        \le
        C\left[
        \sqrt{L_\alpha}
        +
        \sqrt{\alpha L_\alpha}
        +
        \alpha(\sqrt d+\sqrt{L_\alpha})
        +
        \alpha\sqrt{\alpha(d+L_\alpha)}
        \right] \, ,            
    \end{equation*}
    where \(C>0\) is a universal constant.
    In particular, if a deterministic quantity $\Lambda_* \ge 1$ satisfies $L_\alpha \le C_{\Lambda} \Lambda_*$ for some universal constant $C_\Lambda > 0$, and if $\alpha = \min \{1, \sqrt{\Lambda_*/d}\}$, then we have 
    \begin{equation*}
        \max_{t\in[T],\,j\in[m],\,\xb\in\Xcal_t}
        \left|
        \frac{\xb^\top \Vb_{t-1}^{-1}\sbb_{t-1}^j}{\|\xb\|_{\Vb_{t-1}^{-1}}}
        \right|
        \le
        \tilde C\sqrt{\Lambda_*} \, ,
    \end{equation*}
    where \(\tilde C>0\) is a universal constant.        
\end{restatable}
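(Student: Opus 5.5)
The plan is to work epoch by epoch and split the normalized score into a leading term and a drift term. The leading term is a martingale with bounded variance, handled by arm-wise concentration. The drift term is small because of the refresh rule. The four summands in the bound come from this split: the leading term gives $\sqrt{L_\alpha}+\sqrt{\alpha L_\alpha}$, and the drift term gives $\alpha(\sqrt d+\sqrt{L_\alpha})+\alpha\sqrt{\alpha(d+L_\alpha)}$.

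Fix an epoch $r$ with start $\tau=\tau_r$ and a round $t\in\{\tau+1,\dots,\tau_{r+1}\}$. By construction,
\begin{equation*}
\sbb_{t-1}^j=\Vb_\tau^{1/2}\gb^j+\sum_{s=\tau+1}^{t-1}\xb_s\xi_s^j .
\end{equation*}
We write $\Vb_{t-1}^{-1}=\Vb_\tau^{-1}+(\Vb_{t-1}^{-1}-\Vb_\tau^{-1})$, so that
\begin{equation*}
\xb^\top\Vb_{t-1}^{-1}\sbb_{t-1}^j=\ab^\top\sbb_{t-1}^j+\xb^\top(\Vb_{t-1}^{-1}-\Vb_\tau^{-1})\sbb_{t-1}^j,\qquad \ab:=\Vb_\tau^{-1}\xb .
\end{equation*}

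\emph{Drift term.} We rewrite it as $\xb^\top\Vb_{t-1}^{-1/2}(\Ib-\Mb)\Vb_{t-1}^{-1/2}\sbb_{t-1}^j$ with $\Mb:=\Vb_{t-1}^{1/2}\Vb_\tau^{-1}\Vb_{t-1}^{1/2}$. The epoch condition $\Vb_\tau\preceq\Vb_{t-1}\preceq(1+\alpha)\Vb_\tau$ puts the spectrum of $\Mb$ in $[1,1+\alpha]$, so $\|\Ib-\Mb\|_{\op}\le\alpha$. Hence the drift term, divided by $\|\xb\|_{\Vb_{t-1}^{-1}}$, is at most $\alpha\|\Vb_{t-1}^{-1/2}\sbb_{t-1}^j\|_2$. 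I would then invoke the uniform perturbation bound $\|\Vb_{t-1}^{-1/2}\sbb_{t-1}^j\|_2\le\Gamma$ used for Lemma~\ref{lem:uniform-exceedance-across-epochs}, with its failure probability re-tuned to $\delta/2$ so that its logarithm is $O(L_\alpha)$. This gives
\begin{equation*}
\Gamma\lesssim\sqrt d+\sqrt{L_\alpha}+\sqrt{\alpha d+L_\alpha},
\end{equation*}
and after multiplying by $\alpha$ it produces the $\alpha(\sqrt d+\sqrt{L_\alpha})+\alpha\sqrt{\alpha(d+L_\alpha)}$ terms (the $\alpha\sqrt{L_\alpha}$ piece of $\alpha\sqrt{\alpha d+L_\alpha}$ is already among them).

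\emph{Leading term.} Here the oblivious adversary is essential: $\xb\in\Xcal_t$ is a fixed vector, and $\Vb_\tau$ is determined before $\gb^j$ and the $\xi_s^j$ with $s>\tau$ are drawn. So $\ab$ is measurable at the epoch start, and
\begin{equation*}
Z_u:=\ab^\top\Vb_\tau^{1/2}\gb^j+\sum_{s=\tau+1}^{u}(\ab^\top\xb_s)\xi_s^j
\end{equation*}
is a martingale in $u$ with conditionally Gaussian increments. Each $\xb_s$ is predictable, and $\xi_s^j$ is fresh and independent of it even though $\xb_s$ depends on earlier $\xi$'s. Its predictable variance up to $u=t-1$ is $\ab^\top\Vb_{t-1}\ab$. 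I would stop the process at the epoch end, or at the first $u$ with $\ab^\top\Vb_u\ab>(1+\alpha)\|\xb\|^2_{\Vb_\tau^{-1}}$ (whichever comes first). The stopped variance is then deterministically at most $(1+\alpha)\|\xb\|^2_{\Vb_\tau^{-1}}$, since $\ab^\top\Vb_\tau\ab=\|\xb\|^2_{\Vb_\tau^{-1}}$ and $\ab^\top\Vb_{t-1}\ab\le(1+\alpha)\ab^\top\Vb_\tau\ab$ inside the epoch.

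The exponential supermartingale $\exp(\lambda Z_u-\lambda^2\langle Z\rangle_u/2)$ and optional stopping then give
\begin{equation*}
|Z_{t-1}|\le\sqrt{2(1+\alpha)\log(8KmT\bar R_\alpha/\delta)}\;\|\xb\|_{\Vb_\tau^{-1}}
\end{equation*}
for a fixed quadruple $(r,t,j,\xb)$. We union bound over epochs $r\le\bar R_\alpha$, rounds $t$, members $j$ and the $K$ arms; this is where $L_\alpha$ comes from. Finally, $\|\xb\|_{\Vb_\tau^{-1}}\le\sqrt{1+\alpha}\,\|\xb\|_{\Vb_{t-1}^{-1}}$. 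This contributes $(1+\alpha)\sqrt{2L_\alpha}$, which is bounded by a constant times the stated terms $\sqrt{L_\alpha}+\sqrt{\alpha L_\alpha}+\alpha\sqrt{L_\alpha}$.

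\emph{Main obstacle.} I expect the delicate step to be the measurability bookkeeping for the random epoch starts $\tau_r$. The union bound must be over the deterministic index $r\le\bar R_\alpha$, not over random times. I would handle this by defining the martingale for epoch $r$ in the filtration started at the stopping time $\tau_r$, and noting that $\gb^j$ is drawn after $\mathcal H_{\tau_r}$-measurable quantities are fixed. On the event $\{\tau_r\le t-1\}$, the pair $(\ab,\Vb_\tau)$ is then predictable, and epochs that never occur contribute trivially.

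\emph{Second statement.} This is arithmetic. With $\alpha\le1$ and $\alpha\le\sqrt{\Lambda_*/d}$, the pieces bound as follows:
\begin{itemize}
\item $\sqrt{\alpha L_\alpha}\le\sqrt{L_\alpha}$ and $\alpha\sqrt{L_\alpha}\le\sqrt{L_\alpha}$, and $\sqrt{L_\alpha}\le\sqrt{C_\Lambda\Lambda_*}$;
\item $\alpha\sqrt d\le\sqrt{\Lambda_*}$;
\item $\alpha\sqrt{\alpha(d+L_\alpha)}\le\alpha\sqrt d+\sqrt{L_\alpha}$.
\end{itemize}
Summing gives the bound $\tilde C\sqrt{\Lambda_*}$.
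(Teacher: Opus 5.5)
Your proof is correct, but it is organized differently from the paper's.

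\textbf{The paper's route.} The paper works in the epoch-start geometry. It writes the score as $\yb^\top\Ab_{r,t}^{-1}(\gb_r^j+\nbb_{r,t-1}^j)/\sqrt{\yb^\top\Ab_{r,t}^{-1}\yb}$ with $\Ab_{r,t}=\Vb_{\tau_r}^{-1/2}\Vb_{t-1}\Vb_{\tau_r}^{-1/2}$. It then proves a separate stability lemma (Lemma~\ref{lem:matrix-drift-perturbation}): the normalized projection under $\Ab^{-1}$ is within $2\alpha\|\vb\|_2$ of the Euclidean projection. This leaves four quantities, controlled separately:
\begin{itemize}
\item the Gaussian projection $\langle\ub,\gb_r^j\rangle$;
\item the scalar within-epoch martingale $\langle\ub,\nbb_{r,t-1}^j\rangle$, with variance at most $\alpha$, via Ville's inequality;
\item $\alpha\|\gb_r^j\|_2$;
\item $\alpha\|\nbb_{r,t-1}^j\|_2$, via a new $1/2$-net argument.
\end{itemize}

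\textbf{Your route.} You split $\Vb_{t-1}^{-1}$ additively. You treat $\xb^\top\Vb_\tau^{-1}\sbb_{t-1}^j$, Gaussian start and within-epoch increments together, as a single Gaussian martingale with variance at most $(1+\alpha)\|\xb\|^2_{\Vb_\tau^{-1}}$. You bound the whole drift by $\alpha\|\Vb_{t-1}^{-1/2}\sbb_{t-1}^j\|_2$, which Lemma~\ref{lem:uniform-state-bound} already controls.

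\textbf{What each buys.} Your argument is shorter. It needs no stability lemma and no new net, and it reuses an already proved lemma in place of the paper's Terms 3--4. Your final bound is $\lesssim (1+\alpha)\sqrt{L_\alpha}+\alpha\sqrt d+\alpha^{3/2}\sqrt d$, which implies the stated one; the $\sqrt{\alpha L_\alpha}$ piece never arises. Your explicit stopping at the random epoch end is also more careful than the paper's Term 2, which uses $A_{r,q,t}(\xb)\le\alpha$ as if it held deterministically. The paper's decomposition has the minor advantage that its four terms map one-to-one onto the displayed bound. Its vector-martingale drift is also of the finer order $\alpha\sqrt{\alpha(d+L_\alpha)}$, versus your $\alpha\sqrt{\alpha d+L_\alpha}$. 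This difference is immaterial, since $\sqrt{L_\alpha}$ dominates both.

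\textbf{One small precision.} $\ab^\top\Vb_u\ab$ is known before $\xi_u^j$ is drawn, so your threshold-stopping rule should exclude the increment that would cross the threshold. Alternatively, stop only at $\tau_{r+1}-1$. The event $\{u\le\tau_{r+1}-1\}$ is determined before $\xi_u^j$ is sampled, and inside the epoch the variance never exceeds $(1+\alpha)\|\xb\|^2_{\Vb_\tau^{-1}}$.
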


\paragraph{Proof sketch of Lemma~\ref{lem:uniform arm-wise concentration}.}
Fix an epoch \(r\) and an ensemble member \(j\in[m]\), and let \(\tau_r\) denote the starting round of the \(r\)-th epoch.
For a round \(t\in\{\tau_r+1,\ldots,\tau_{r+1}\}\), the quantity we need to control is the normalized perturbation score for an arm \(\xb\in\Xcal_t\), defined as
\begin{equation*}
    Z_{r,t}^{j}(\xb)
    :=
    \frac{\xb^\top \Vb_{t-1}^{-1}\sbb_{t-1}^j}
    {\|\xb\|_{\Vb_{t-1}^{-1}}} \, .
\end{equation*}

The ensemble perturbation
\(\sbb^j_{t-1}\) can be written as
\begin{equation*}
    \sbb_{t-1}^j
    = \Vb_{\tau_r}^{1/2} \gb^j_r + \sum_{s=\tau_r + 1}^{t-1} \xb_s \xi^j_s = 
    \Vb_{\tau_r}^{1/2}
    \left(
        \gb_r^j
        +
        \nbb_{r,t-1}^j
    \right),
    \quad \text{where} \quad
    \nbb_{r,t-1}^j
    :=
    \sum_{s=\tau_r+1}^{t-1}
    \Vb_{\tau_r}^{-1/2}\xb_s\xi_s^j \, .
\end{equation*}
Writing $\Ab_{r,t} := \Vb_{\tau_r}^{-1/2} \Vb_{t-1} \Vb_{\tau_r}^{-1/2}$ and $\yb_r(\xb) := \Vb_{\tau_r}^{-1/2} \xb$, the normalized perturbation score can be decomposed as
\begin{equation*}
    Z_{r,t}^{j}(\xb)
    =
    \frac{
        \yb_r(\xb)^\top
        \Ab_{r,t}^{-1}
        \gb_r^j
    }{
        \sqrt{
        \yb_r(\xb)^\top
        \Ab_{r,t}^{-1}
        \yb_r(\xb)}
    }
    +
    \frac{
        \yb_r(\xb)^\top
        \Ab_{r,t}^{-1}
        \nbb_{r,t-1}^j
    }{
        \sqrt{
        \yb_r(\xb)^\top
        \Ab_{r,t}^{-1}
        \yb_r(\xb)}
    } \, .    
\end{equation*}
Moreover, by the epoch-refresh rule, we have $\Ib_d \preceq \Ab_{r,t} \preceq (1+\alpha)\Ib_d$.
Hence, when \(\alpha\) is small, all eigenvalues of \(\Ab_{r,t}\) are close to one. Consequently, \(\Ab_{r,t}^{-1}\) only mildly distorts both the direction \(\yb_r(\xb)\) and the corresponding normalizing factor. More precisely, for any \(\vb\in\mathbb R^d\), 
\begin{equation*}
    \left|
    \frac{
        \yb_r(\xb)^\top \Ab_{r,t}^{-1}\vb
    }{
        \sqrt{\yb_r(\xb)^\top \Ab_{r,t}^{-1}\yb_r(\xb)}
    }
    -
    \left\langle
        \frac{\yb_r(\xb)}{\|\yb_r(\xb)\|_2},
        \vb
    \right\rangle
    \right|
    \lesssim
    \alpha\|\vb\|_2 \, .
\end{equation*}
In words, the self-normalized projection under the current Gram matrix is close to the Euclidean projection after normalization by the epoch-start Gram matrix, and the discrepancy is proportional to the within-epoch Gram-matrix drift \(\alpha\).
Applying this comparison with \(\vb=\gb_r^j\) and
\(\vb=\nbb_{r,t-1}^j\) gives
\begin{equation*}
    |Z_{r,t}^{j}(\xb)|
    \lesssim
    \underbrace{
    \left|
    \left\langle
        \frac{\yb_r(\xb)}{\|\yb_r(\xb)\|_2},
        \gb_r^j
    \right\rangle
    \right|
    }_{(i)}
    +
    \underbrace{
    \left|
    \left\langle
        \frac{\yb_r(\xb)}{\|\yb_r(\xb)\|_2},
        \nbb_{r,t-1}^j
    \right\rangle
    \right|
    }_{(ii)}
    +
    \underbrace{
    \alpha\|\gb_r^j\|_2
    }_{(iii)}
    +
    \underbrace{
    \alpha\|\nbb_{r,t-1}^j\|_2
    }_{(iv)} \, .
\end{equation*}
Thus, the desired finite-arm concentration reduces to controlling four terms: (i) the epoch-start Gaussian projection,
(ii) the scalar within-epoch martingale projection,
(iii) the norm of the epoch-start Gaussian vector, and 
(iv) the norm of the within-epoch martingale vector. 
The first two terms give the main stochastic fluctuations, while the last two are drift terms caused by the change of the Gram matrix within the epoch and are multiplied by the refresh parameter \(\alpha\).

It remains to control these four terms uniformly over all epochs, ensemble members, and arms. The nontrivial terms are (ii) and (iv), since they are accumulated within the epoch and are coupled with the adaptively selected
arms. The scalar martingale projection in (ii) is controlled by an exponential-martingale argument and Ville's inequality. The vector martingale norm in (iv) is controlled by combining the same martingale tail bound with a
sphere-net argument. Combining these estimates proves Lemma~\ref{lem:uniform arm-wise concentration}; the full proof is deferred to Appendix~\ref{appx:proof of finite arm set}. This concentration result, together with the standard optimistic-randomization regret argument, gives the following regret bound.

\begin{restatable}[Regret for finite-arm-dominated regime]{theorem}{RegretForFiniteArmSets}
\label{thm:finite-arm-regret}
    Suppose Assumption~\ref{assm:boundedness} holds, and suppose that the arm sets \(\{\Xcal_t\}_{t=1}^T\) are chosen by an oblivious adversary with \(|\Xcal_t|=K \ge 2\) for all \(t\in[T]\).
    Fix \(\delta\in(0,1/4)\), and choose \(\lambda\ge 80\), \(\gamma=40\), and \(\{\beta_t\}_{t\ge0}\) as in
    Theorem~\ref{thm:regret bound for infinite arm set}.
    For \(\alpha_0 = 1/\sqrt{d}\) and \(\bar R_0 := \bar R_{\alpha_0}\), define
    \begin{equation*}
        M_0
        = \left\lceil
        C_m
        \left[
        d\log\!\left(d+\log\frac{\bar R_0}{\delta}\right)
        +
        \log\frac{\bar R_0}{\delta}
        \right]
        \right\rceil \, ,
        \quad 
        \Lambda_0
        :=
        \max\left\{
            1,\,
            C_\Lambda
            \log\!\left(
                \frac{K M_0 T\bar R_0}{\delta}
            \right)
        \right\} \, ,
    \end{equation*}
    for some constants \(C_m, C_\Lambda > 0\).
    Choose \(\alpha := \min \{1, \sqrt{\Lambda_0/d} \}\) and \(m := M_0\). 
    Then, with probability at least \(1 - 4\delta\), Algorithm~\ref{alg:main algorithm} incurs cumulative regret \(\regret_T=\tilde{\Ocal}(d\sqrt{T\log K})\).
\end{restatable}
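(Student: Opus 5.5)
We follow the standard randomized-optimism template used for Theorem~\ref{thm:regret bound for infinite arm set}. The change is on the concentration side: the crude bound $\|\Vb_{t-1}^{-1/2}\sbb^j_{t-1}\|_2\le\Gamma_\delta=\tilde{\Ocal}(\sqrt d)$ is replaced by the arm-wise bound of Lemma~\ref{lem:uniform arm-wise concentration}. We work on the intersection of four events, each of probability at least $1-\delta$, which gives the $1-4\delta$ in the statement.

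\emph{(a) Confidence set.} The usual self-normalized bound of \citet{abbasi2011improved} gives $\|\hat\thetab_t-\thetab^*\|_{\Vb_t}\le\beta_t$ for all $t$.

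\emph{(b) Uniform optimism fraction.} Lemma~\ref{lem:uniform-exceedance-across-epochs} gives $\min_t\inf_{\ub}E_{t,m}(\ub,1/\gamma)\ge 1/10$.

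\emph{(c) Arm-wise bound.} Lemma~\ref{lem:uniform arm-wise concentration}, applied with $\Lambda_*=\Lambda_0$, gives $|\xb^\top\Vb_{t-1}^{-1}\sbb^j_{t-1}|\le\tilde C\sqrt{\Lambda_0}\,\|\xb\|_{\Vb_{t-1}^{-1}}$ for all $t$, $j$ and $\xb\in\Xcal_t$.

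\emph{(d) Martingale terms.} An Azuma event handles the martingale differences coming from the random index $J_t$.

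The parameters must be checked to be compatible.
\begin{itemize}
\item Lemma~\ref{lem:uniform-exceedance-across-epochs} needs $m$ to satisfy Eq.~\eqref{eq:ER-condition} at the chosen $\alpha$. Since $\alpha\le1$, Corollary~\ref{cor:explicit-ensemble-size} applies. Moreover $\alpha\ge\min\{1,1/\sqrt d\}=\alpha_0$ because $\Lambda_0\ge1$, so $\bar R_\alpha\le\bar R_{0}$. Hence $M_0$, with $C_m$ at least the constant $C$ of the corollary, suffices.
\item Lemma~\ref{lem:uniform arm-wise concentration} needs $L_\alpha=\log(8KmT\bar R_\alpha/\delta)\le C_\Lambda\Lambda_0$. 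This holds because $m=M_0$, $\bar R_\alpha\le\bar R_0$ and $\Lambda_0$ is defined from the same quantities, with $C_\Lambda$ absorbing the factor $8$. The choice $\alpha=\min\{1,\sqrt{\Lambda_0/d}\}$ is exactly the one required in the ``in particular'' part of that lemma.
\item The oblivious-adversary assumption is what lets us apply Lemma~\ref{lem:uniform arm-wise concentration} to the realized arm sets.
\end{itemize}
Since $\log\bar R_0=\Ocal(\log d+\log\log T)$, we get $\Lambda_0=\tilde{\Ocal}(\log K)$, with the other factors polylogarithmic.

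\emph{Per-round decomposition.} Write
\[
\langle\xb_t^*-\xb_t,\thetab^*\rangle=\langle\xb_t^*,\thetab^*\rangle-\langle\xb_t,\thetab^{J_t}_{t-1}\rangle+\langle\xb_t,\thetab^{J_t}_{t-1}-\thetab^*\rangle .
\]
\begin{itemize}
\item \textbf{Second term.} Use $\thetab^j_{t-1}-\thetab^*=(\hat\thetab_{t-1}-\thetab^*)+\gamma\beta_{t-1}\Vb_{t-1}^{-1}\sbb^j_{t-1}$ together with (a) and (c). This bounds it by $(1+\gamma\tilde C\sqrt{\Lambda_0})\beta_{t-1}\|\xb_t\|_{\Vb_{t-1}^{-1}}$, and the same bound holds for any arm of $\Xcal_t$ and any $j$. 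This step is where $\sqrt{\Lambda_0}$ replaces the $\sqrt d$ of the general-arm analysis.
\item \textbf{First term.} Event (b) says that, with $\ub\propto\Vb_{t-1}^{-1/2}\xb_t^*$ suitably normalized, at least a $1/10$ fraction of members $j$ have $\langle\xb_t^*,\gamma\beta_{t-1}\Vb_{t-1}^{-1}\sbb^j_{t-1}\rangle\ge\beta_{t-1}\|\xb_t^*\|_{\Vb_{t-1}^{-1}}$. With (a), every such member is optimistic: $\max_{\xb\in\Xcal_t}\langle\xb,\thetab^j_{t-1}\rangle\ge\langle\xb_t^*,\thetab^*\rangle$.
\end{itemize}

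\emph{Converting optimism into regret.} We use the standard argument of \citet{abeille2017Linear} and \citet{janz2026sharp}. Let $\Jcal_t$ be the set of optimistic members and let $\tilde\xb^j_t$ be the greedy arm of member $j$. The first term is at most $\langle\tilde\xb^{j}_t,\thetab^{j}_{t-1}\rangle-\langle\xb_t,\thetab^{J_t}_{t-1}\rangle$ for any $j\in\Jcal_t$. Using the two-sided bound from the second bullet, each greedy value $\langle\tilde\xb^j_t,\thetab^j_{t-1}\rangle$ lies within $(1+\gamma\tilde C\sqrt{\Lambda_0})\beta_{t-1}\|\tilde\xb^j_t\|_{\Vb_{t-1}^{-1}}$ of the true reward of its arm. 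Therefore
\[
\text{regret}_t\lesssim\sqrt{\Lambda_0}\,\beta_{t-1}\Big(\min_{j\in\Jcal_t}\|\tilde\xb^j_t\|_{\Vb_{t-1}^{-1}}+\|\xb_t\|_{\Vb_{t-1}^{-1}}\Big).
\]
Because $\PP(J_t\in\Jcal_t\mid\text{past})\ge1/10$, we have $\min_{j\in\Jcal_t}\|\tilde\xb^j_t\|_{\Vb_{t-1}^{-1}}\le 10\,\EE_{J_t}\|\xb_t\|_{\Vb_{t-1}^{-1}}$. Event (d) then replaces this expectation by the realized $\|\xb_t\|_{\Vb_{t-1}^{-1}}$, at a cost of $\tilde{\Ocal}(\sqrt T)$ times the scale, using $\lambda\ge80\ge1$ so that each norm is at most $1$.

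\emph{Summing.} The elliptical potential lemma gives $\sum_t\|\xb_t\|_{\Vb_{t-1}^{-1}}\le\sqrt{2dT\log(1+T/(d\lambda))}$. Combined with $\beta_T=\tilde{\Ocal}(\sqrt d)$ and $\sqrt{\Lambda_0}=\tilde{\Ocal}(\sqrt{\log K})$, this yields $\regret_T=\tilde{\Ocal}(d\sqrt{T\log K})$.

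\emph{Main obstacle.} The delicate step is the parameter bookkeeping in the compatibility checks above. The ensemble size $m$ depends on $\bar R_\alpha$, while $\Lambda_0$ (and hence $\alpha$) depends on $m$. The definitions avoid circularity by fixing $M_0$ at the smallest admissible $\alpha_0=1/\sqrt d$. This is consistent only because shrinking $\alpha$ increases $\bar R_\alpha$ and $\kappa_\alpha$ stays bounded, so monotonicity in $\alpha$ must be verified for every quantity in Eq.~\eqref{eq:ER-condition}.
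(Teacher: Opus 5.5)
Your proposal is correct. Its skeleton matches the paper's proof:
\begin{itemize}
\item the same bookkeeping: $\Lambda_0\ge1$ gives $\alpha\ge\alpha_0$, hence $\bar R_\alpha\le\bar R_0$, so $m=M_0$ meets Corollary~\ref{cor:explicit-ensemble-size} and $L_\alpha\lesssim\Lambda_0$;
\item the same three events: confidence ellipsoid, uniform exceedance, and arm-wise concentration;
\item the same scale $(1+\gamma\tilde C\sqrt{\Lambda_0})\beta_{t-1}$.
\end{itemize}

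\textbf{Where you differ.} The difference is in how these events are converted into a regret bound. The paper feeds them into Lemma~\ref{lem:finite-arm-master-regret}. That lemma is the master theorem of \citet{janz2024ensemble} with the ellipsoid event replaced by an arm-wise one. The paper justifies it by remarking that the original proof only uses the arm-wise inequality at $\xb_t$ and $\xb_t^*$.

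You instead carry out the \citet{abeille2017Linear} argument by hand for the ensemble. You compare with an optimistic member, bound the minimum over optimistic members by $10$ times the conditional mean over $J_t$, and finish with Azuma and the elliptical potential.

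\textbf{What each route buys.} Your route is longer but self-contained, and it makes explicit something the paper's two-arm remark does not. The convexity step evaluates perturbations at the greedy arms $\tilde\xb^j_t$ of \emph{other} (optimistic) members. So the arm-wise bound is needed for every $j\in[m]$ and every $\xb\in\Xcal_t$, which is exactly what your event (c) supplies. Your four events also account for the $1-4\delta$ directly. The paper's route buys brevity by reusing an existing black box.

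\textbf{Two local corrections.}
\begin{itemize}
\item The optimism direction must be $\ub\propto\Vb_{t-1}^{-1}\xb_t^*$, since then $\langle\ub,\sbb^j_{t-1}\rangle/\|\ub\|_{\Vb_{t-1}}=\xb_t^{*\top}\Vb_{t-1}^{-1}\sbb^j_{t-1}/\|\xb_t^*\|_{\Vb_{t-1}^{-1}}$. Your choice $\Vb_{t-1}^{-1/2}\xb_t^*$ does not give this identity.
\item In the conversion step, comparing each greedy value separately with the true reward of its own arm leaves the term $\langle\tilde\xb^j_t-\xb_t,\thetab^*\rangle$. This can be as large as the instantaneous regret, so read literally the step is circular. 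Use the greediness of $\xb_t$ for $\thetab^{J_t}_{t-1}$ instead: $\langle\xb_t,\thetab^{J_t}_{t-1}\rangle\ge\langle\tilde\xb^j_t,\thetab^{J_t}_{t-1}\rangle$. Then the first term is at most $\langle\tilde\xb^j_t,\thetab^j_{t-1}-\thetab^{J_t}_{t-1}\rangle\le 2\gamma\tilde C\sqrt{\Lambda_0}\,\beta_{t-1}\|\tilde\xb^j_t\|_{\Vb_{t-1}^{-1}}$ by (c), applied to members $j$ and $J_t$ at the same arm $\tilde\xb^j_t\in\Xcal_t$.
\end{itemize}
With these fixes, your per-round bound and the summation go through as you describe.
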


\section{Numerical Experiments}
\begin{figure}[t!]
    \centering
    \includegraphics[width=\linewidth]{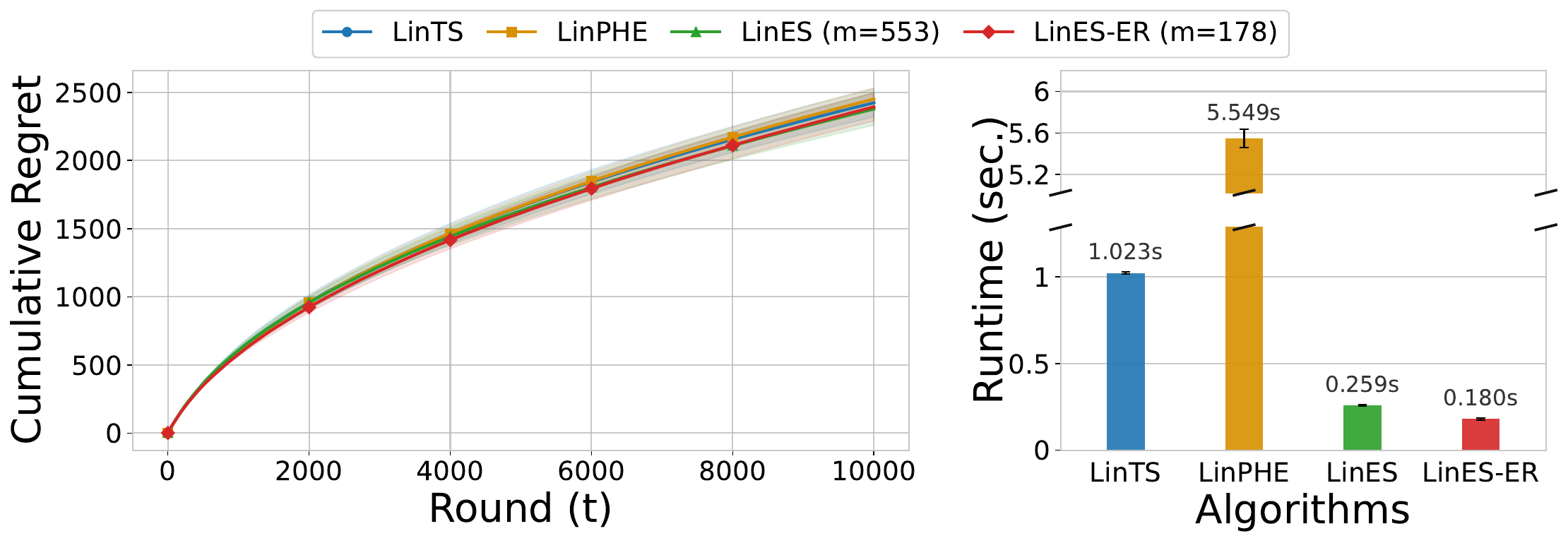}
    \caption{\small
    Cumulative regret and runtime on stochastic linear bandits over \(\Bcal_2^d\) with \(d=30\) and \(T=10{,}000\).
    }
    \label{fig:regret-runtime-d-30}
\end{figure}
We evaluate the proposed ensemble algorithm $\algname$ against \texttt{LinTS}~\citep{abeille2017Linear}, \texttt{LinPHE}~\citep{kveton2020perturbed}, and \texttt{LinES}~\citep{janz2026sharp}.
The experiments are conducted on stochastic linear bandits over the unit ball \(\Bcal_2^d\) with Gaussian noise of variance proxy \(0.3\).
Further experimental details are provided in Appendix~\ref{appx:experiments}.

Figure~\ref{fig:regret-runtime-d-30} reports cumulative regret and runtime for \(d=30\) and \(T=10{,}000\).
For \texttt{LinES}, we use the theoretical ensemble size \(m=\lceil d\log(T/\delta)\rceil\), while for \texttt{LinES-ER}, we use \(m=\lceil d\log d+d\log\log T+\log(1/\delta)\rceil\), with \(\delta=1/T\).
The proposed method, \texttt{LinES-ER}, achieves regret comparable to \texttt{LinES} while using only about one-third as many ensemble members (\(178\) vs. \(553\)).
The runtime comparison shows that this reduction in ensemble size also lowers the computational cost.
In particular, \texttt{LinES-ER} takes \(0.180\) seconds, whereas \texttt{LinES} takes \(0.259\) seconds; thus, \texttt{LinES-ER} reduces the total runtime by approximately \(\frac{0.259-0.180}{0.259}\times 100 \approx 30.5\%\) relative to \texttt{LinES}.

\section{Conclusion}
We proposed an ensemble sampling algorithm for stochastic linear bandits that achieves the same regret order as existing ensemble sampling methods with fewer ensemble members. In particular, the proposed algorithm attains \(\tilde{\Ocal}(d^{3/2}\sqrt T)\) regret for arbitrary bounded arm sets with ensemble size \(\Theta(d\log d+d\log\log T)\), and further achieves the sharper \(\tilde{\Ocal}(d\sqrt{T\log K})\) regret when the arm sets are chosen by an oblivious adversary and each has cardinality \(K\).
The key technical idea is to refresh the ensemble only when the regularized Gram matrix changes substantially, thereby localizing the perturbation analysis to epochs with controlled covariance drift. The algorithm also admits an anytime implementation without resetting past data, making it more data-efficient in practical scenarios.
A promising direction for future work is to characterize the statistical and computational behavior of ensemble sampling in more complex nonlinear reward settings.

\clearpage
\bibliographystyle{plainnat}
\bibliography{references}

@article{hamidi2020frequentist,
  title={On frequentist regret of linear thompson sampling},
  author={Hamidi, Nima and Bayati, Mohsen},
  journal={arXiv preprint arXiv:2006.06790},
  year={2020}
}

@article{osband2016deep,
  title={Deep exploration via bootstrapped DQN},
  author={Osband, Ian and Blundell, Charles and Pritzel, Alexander and Van Roy, Benjamin},
  journal={Advances in neural information processing systems},
  volume={29},
  year={2016}
}

@inproceedings{zhu2023deep,
  title={Deep exploration for recommendation systems},
  author={Zhu, Zheqing and Van Roy, Benjamin},
  booktitle={Proceedings of the 17th ACM Conference on Recommender Systems},
  pages={963--970},
  year={2023}
}

@article{zhou2025stochastic,
  title={Stochastic low-rank tensor bandits for multi-dimensional online decision making},
  author={Zhou, Jie and Hao, Botao and Wen, Zheng and Zhang, Jingfei and Sun, Will Wei},
  journal={Journal of the American Statistical Association},
  volume={120},
  number={549},
  pages={198--211},
  year={2025},
  publisher={Taylor \& Francis}
}

@inproceedings{lu2018efficient,
  title={Efficient online recommendation via low-rank ensemble sampling},
  author={Lu, Xiuyuan and Wen, Zheng and Kveton, Branislav},
  booktitle={Proceedings of the 12th ACM conference on recommender systems},
  pages={460--464},
  year={2018}
}

@InProceedings{janz2026sharp,
  title = 	 {Sharp analysis of linear ensemble sampling},
  author =       {Janz, David and Akhavan, Arya and Szepesv{\'a}ri, Csaba},
  booktitle = 	 {Proceedings of Thirty Ninth Conference on Learning Theory},
  pages = 	 {3716--3750},
  year = 	 {2026},
  volume = 	 {336},
  series = 	 {Proceedings of Machine Learning Research},
  publisher =    {PMLR}
}

@article{osband2018randomized,
	title        = {Randomized prior functions for deep reinforcement learning},
	author       = {Osband, Ian and Aslanides, John and Cassirer, Albin},
	year         = 2018,
	journal      = {Advances in neural information processing systems},
	volume       = 31
}

@article{qin2022analysis,
	title        = {An analysis of ensemble sampling},
	author       = {Qin, Chao and Wen, Zheng and Lu, Xiuyuan and Van Roy, Benjamin},
	year         = 2022,
	journal      = {Advances in Neural Information Processing Systems},
	volume       = 35,
	pages        = {21602--21614}
}

@article{lu2017ensemble,
	title        = {Ensemble sampling},
	author       = {Lu, Xiuyuan and Van Roy, Benjamin},
	year         = 2017,
	journal      = {Advances in neural information processing systems},
	volume       = 30
}

@article{sun2025provable,
	title        = {Provable Anytime Ensemble Sampling Algorithms in Nonlinear Contextual Bandits},
	author       = {Sun, Jiazheng and Wang, Weixin and Xu, Pan},
	year         = 2025,
	journal      = {arXiv preprint arXiv:2510.10730}
}

@inproceedings{janz2024ensemble,
	title        = {Ensemble sampling for linear bandits: small ensembles suffice},
	author       = {David Janz and Alexander Litvak and Csaba Szepesvari},
	year         = 2024,
	booktitle    = {The Thirty-eighth Annual Conference on Neural Information Processing Systems}
}

@inproceedings{lee2024improved,
	title        = {Improved Regret of Linear Ensemble Sampling},
	author       = {Lee, Harin and Oh, Min-hwan},
	year         = 2024,
	booktitle    = {The Thirty-eighth Annual Conference on Neural Information Processing Systems}
}

@inproceedings{kveton2019perturbed,
	title        = {Perturbed-History Exploration in Stochastic Multi-Armed Bandits},
	author       = {Branislav Kveton and Csaba Szepesvari and Mohammad Ghavamzadeh and Craig Boutilier},
	year         = 2019,
	booktitle    = {International Joint Conference on Artificial Intelligence}
}

@article{osband2019deep,
	title        = {Deep Exploration via Randomized Value Functions.},
	author       = {Osband, Ian and Van Roy, Benjamin and Russo, Daniel J and Wen, Zheng and others},
	year         = 2019,
	journal      = {Journal of Machine Learning Research},
	volume       = 20,
	number       = 124,
	pages        = {1--62}
}

@article{abbasi2011improved,
	title        = {Improved algorithms for linear stochastic bandits},
	author       = {Abbasi-Yadkori, Yasin and P{\'a}l, D{\'a}vid and Szepesv{\'a}ri, Csaba},
	year         = 2011,
	journal      = {Advances in neural information processing systems},
	booktitle    = {Advances in Neural Information Processing Systems},
	volume       = 24,
	pages        = {2312--2320}
}

@inproceedings{abeille2017Linear,
	title        = {{Linear Thompson Sampling Revisited}},
	author       = {Abeille, Marc and Lazaric, Alessandro},
	year         = 2017,
	month        = {20--22 Apr},
	booktitle    = {Proceedings of the 20th International Conference on Artificial Intelligence and Statistics},
	publisher    = {PMLR},
	series       = {Proceedings of Machine Learning Research},
	volume       = 54,
	pages        = {176--184},
	editor       = {Singh, Aarti and Zhu, Jerry},
	organization = {PMLR}
}

@inproceedings{chu2011contextual,
	title        = {Contextual bandits with linear payoff functions},
	author       = {Chu, Wei and Li, Lihong and Reyzin, Lev and Schapire, Robert},
	year         = 2011,
	booktitle    = {Proceedings of the Fourteenth International Conference on Artificial Intelligence and Statistics},
	pages        = {208--214},
	organization = {JMLR Workshop and Conference Proceedings}
}

@inproceedings{agrawal2013thompson,
	title        = {Thompson sampling for contextual bandits with linear payoffs},
	author       = {Agrawal, Shipra and Goyal, Navin},
	year         = 2013,
	booktitle    = {International conference on machine learning},
	pages        = {127--135},
	organization = {PMLR}
}

@inproceedings{kveton2020perturbed,
	title        = {Perturbed-History Exploration in Stochastic Linear Bandits},
	author       = {Kveton, Branislav and Szepesv{\'a}ri, Csaba and Ghavamzadeh, Mohammad and Boutilier, Craig},
	year         = 2020,
	booktitle    = {Uncertainty in Artificial Intelligence},
	pages        = {530--540},
	organization = {PMLR}
}

@inproceedings{kveton2020randomized,
	title        = {Randomized exploration in generalized linear bandits},
	author       = {Kveton, Branislav and Zaheer, Manzil and Szepesvari, Csaba and Li, Lihong and Ghavamzadeh, Mohammad and Boutilier, Craig},
	year         = 2020,
	booktitle    = {International Conference on Artificial Intelligence and Statistics},
	pages        = {2066--2076},
	organization = {PMLR}
}

@article{thompson1933likelihood,
	title        = {On the likelihood that one unknown probability exceeds another in view of the evidence of two samples},
	author       = {Thompson, William R},
	year         = 1933,
	journal      = {Biometrika},
	publisher    = {JSTOR},
	volume       = 25,
	number       = {3/4},
	pages        = {285--294}
}

@article{besson2018doubling,
	title        = {What doubling tricks can and can't do for multi-armed bandits},
	author       = {Besson, Lilian and Kaufmann, Emilie},
	year         = 2018,
	journal      = {arXiv preprint arXiv:1803.06971}
}

\clearpage
\appendix
\onecolumn

\renewcommand{\contentsname}{Contents of Appendix}
\addtocontents{toc}{\protect\setcounter{tocdepth}{3}}
{
  \hypersetup{hidelinks}
  \tableofcontents
}

\clearpage

\section{Regret Analysis for Dimension-Dominated Regime} \label{appx:regret bound for infinite arm set}
In this section, we provide the detailed proof of Theorem~\ref{thm:regret bound for infinite arm set}.
We first establish a deterministic upper bound on the number of epochs, as discussed in the main text.
We then prove a self-normalized exceedance guarantee within a single epoch and extend it uniformly across all epochs.
Finally, we combine these ingredients with the standard regret-analysis framework for optimistic randomized algorithms to complete the proof of Theorem~\ref{thm:regret bound for infinite arm set}.

\subsection{Deterministic Upper Bound on the Number of Epochs}
The number of epochs generated by Algorithm~\ref{alg:main algorithm} is random, since epoch refreshes depend on the realized Gram matrices, which in turn are determined by the algorithmic randomization and the observed data. In this section, we derive a deterministic upper bound on the total number of epochs.

Define the epoch start times recursively by
\begin{equation*}
    \tau_0 := 0,
    \qquad
    \tau_{r+1}
    :=
    \min \left\{
    \inf \left\{
    t>\tau_r:\ \Vb_t \npreceq (1 + \alpha)\Vb_{\tau_r}
    \right\}
    \, , T \right\} \, ,
\end{equation*}
where we adopt the convention that $\inf \emptyset = \infty$. 
Then, the $r$-th epoch then consists of the rounds
\begin{equation*}
    \tau_r < t \le \tau_{r+1} \, .
\end{equation*}
The total number of epochs is bounded as follows.

\begin{lemma}[Number of epochs] \label{lem:num-epochs}
    For $\alpha > 0$, let $R_\alpha$ denote the total number of epochs. Under the assumption that $\|\xb\|_2 \le 1$ for all $\xb \in \Xcal$, we have
    \begin{equation*}
        R_{\alpha}
        \le
        1 + \frac{d \log \left(1+\frac{T}{\lambda d}\right)}{\log (1 + \alpha)} \, .
    \end{equation*}
    Consequently, the deterministic quantity
    \begin{equation*}
        \bar{R}_{\alpha}
        :=
        1+\left\lfloor \frac{d \log \left(1+\frac{T}{\lambda d}\right)}{\log (1 + \alpha)} \right\rfloor
    \end{equation*}
    satisfies $R_\alpha \le \bar{R}_\alpha$.
\end{lemma}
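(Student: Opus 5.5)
We bound the number of epochs by a determinant-growth (potential) argument. Each refresh multiplies $\det \Vb$ by at least $1+\alpha$, while the total growth of $\det\Vb$ over the horizon is controlled by the standard elliptical-potential bound.

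\emph{Step 1 (determinant at refresh times).} Suppose epoch $r$ ends because of a refresh, i.e.\ $\tau_{r+1}<T$ is given by the infimum in the definition. Then $\Vb_{\tau_{r+1}}\npreceq(1+\alpha)\Vb_{\tau_r}$. Write $\Ab:=\Vb_{\tau_r}^{-1/2}\Vb_{\tau_{r+1}}\Vb_{\tau_r}^{-1/2}$. Since Gram matrices are monotone, $\Ab\succeq \Ib_d$, so every eigenvalue of $\Ab$ is at least $1$. The failure of $\Ab\preceq(1+\alpha)\Ib_d$ means $\lambda_{\max}(\Ab)>1+\alpha$. Hence
\[
\det\Ab=\prod_i\lambda_i(\Ab)\ge\lambda_{\max}(\Ab)>1+\alpha,
\qquad\text{so}\qquad
\det\Vb_{\tau_{r+1}}>(1+\alpha)\det\Vb_{\tau_r}.
\]
This is the key observation. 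A single direction growing by more than $1+\alpha$ suffices, because no eigenvalue ever shrinks.

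\emph{Step 2 (telescoping).} Suppose there are $R_\alpha$ epochs. Then at least $R_\alpha-1$ of the boundaries $\tau_1<\dots<\tau_{R_\alpha-1}$ are genuine refresh times. (The final boundary may be the truncation at $T$, which yields no multiplicative gain.) Chaining Step 1 gives
\[
(1+\alpha)^{R_\alpha-1}\det\Vb_0<\det\Vb_{\tau_{R_\alpha-1}}\le\det\Vb_T .
\]

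\emph{Step 3 (upper bound on the determinant).} Using $\Vb_0=\lambda\Ib_d$, the AM--GM inequality on eigenvalues, and $\|\xb_s\|_2\le1$,
\[
\det\Vb_T\le\left(\frac{\operatorname{tr}\Vb_T}{d}\right)^d\le\left(\lambda+\frac{T}{d}\right)^d .
\]
Dividing by $\det\Vb_0=\lambda^d$ and taking logarithms,
\[
(R_\alpha-1)\log(1+\alpha)\le d\log\!\left(1+\frac{T}{\lambda d}\right),
\]
which is the first claim. Since $R_\alpha$ is an integer, $R_\alpha-1$ is an integer bounded by the real number on the right, so it is bounded by its floor. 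This gives $R_\alpha\le\bar R_\alpha$.

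The main point of care is bookkeeping rather than analysis. First, the refresh test in Algorithm~\ref{alg:main algorithm} is applied to $\Vb_{t-1}$ at the start of round $t$, whereas the recursive definition of $\tau_{r+1}$ uses $\Vb_t$. One must check that these index the same matrices, namely that the new epoch start is exactly the round whose Gram matrix first violates the condition. Second, one must confirm that only non-terminal boundaries are counted toward the multiplicative gain. Positivity of the eigenvalues of $\Ab$ and monotonicity $\Vb_{\tau_{r+1}}\succeq\Vb_{\tau_r}$ are immediate, since $\Vb$ only accumulates rank-one PSD updates.
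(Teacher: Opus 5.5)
Your proof is correct and follows the paper's argument essentially step for step. The steps match: the eigenvalue argument gives $\det\Vb_{\tau_{r+1}} > (1+\alpha)\det\Vb_{\tau_r}$ for each of the $R_\alpha-1$ completed epochs, you telescope up to $\Vb_T$, apply the determinant--trace bound (which you derive directly via AM--GM where the paper cites Lemma~\ref{aux_lem:det-tr inequality}), and use integrality to pass to the floor. The only nit, which the paper shares, is that your strict chain degenerates to an equality when $R_\alpha=1$; this is harmless because the final bound is non-strict.
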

\begin{proof}[Proof of Lemma~\ref{lem:num-epochs}]
    If epoch $r$ is completed before the horizon, then by definition,
    \[
    \Vb_{\tau_{r+1}} \npreceq (1+\alpha)\Vb_{\tau_r} \, .
    \]
    Let
    \[
    \Ab_r := \Vb_{\tau_r}^{-1/2}\Vb_{\tau_{r+1}}\Vb_{\tau_r}^{-1/2} \, .
    \]
    Since $\Vb_{\tau_{r+1}} \succeq \Vb_{\tau_r}$, all eigenvalues of $\Ab_r$ are at least $1$. Moreover, since $\Vb_{\tau_{r+1}} \npreceq (1+\alpha)\Vb_{\tau_r}$ by construction, at least one eigenvalue of $\Ab_r$ is strictly larger than $1+\alpha$. Hence,
    \begin{equation}\label{eq:num-epochs-1}
        \detm(\Ab_r) > 1+\alpha
        \quad \Longrightarrow \quad
        \frac{\detm(\Vb_{\tau_{r+1}})}{\detm(\Vb_{\tau_r})} > 1+\alpha \, .
    \end{equation}
    
    Since there are $R_\alpha$ epochs in total, exactly $R_\alpha-1$ epochs are completed before the horizon, namely those indexed by $r=0,1,\dots,R_\alpha-2$. Multiplying the inequalities in \eqref{eq:num-epochs-1} over these indices yields
    \[
    \frac{\detm(\Vb_{\tau_{R_\alpha-1}})}{\detm(\lambda \Ib_d)}
    =
    \prod_{r=0}^{R_\alpha-2}
    \frac{\detm(\Vb_{\tau_{r+1}})}{\detm(\Vb_{\tau_r})}
    >
    (1+\alpha)^{R_\alpha-1} \, .
    \]
    
    Finally, since $\tau_{R_\alpha}=T$ and $\Vb_t$ is increasing in $t$ with respect to the positive semidefinite order, we have
    \[
    \Vb_T \succeq \Vb_{\tau_{R_\alpha-1}},
    \]
    which implies
    \[
    \detm(\Vb_T) \ge \detm(\Vb_{\tau_{R_\alpha-1}}) \, .
    \]
    Therefore,
    \[
    (1+\alpha)^{R_\alpha-1}
    <
    \frac{\detm(\Vb_{\tau_{R_\alpha-1}})}{\detm(\lambda \Ib_d)}
    \le
    \frac{\detm(\Vb_T)}{\detm(\lambda \Ib_d)} \, .
    \]
    This proves that
    \[
    R_\alpha
    \le
    1+\frac{\log\!\bigl(\detm(\Vb_T)/\detm(\lambda \Ib_d)\bigr)}{\log(1+\alpha)} \, .
    \]
    
    Also, the determinant-trace inequality (Lemma~\ref{aux_lem:det-tr inequality}) gives
    \[
    \detm(\Vb_T)\le \left(\lambda+\frac{T}{d}\right)^d,
    \]
    and hence
    \[
    \frac{\detm(\Vb_T)}{\detm(\lambda \Ib_d)}
    \le
    \left(1+\frac{T}{\lambda d}\right)^d \, .
    \]
    Combining the above displays yields the final bound
    \[
    R_\alpha
    \le
    1+\frac{d\log\!\left(1+\frac{T}{\lambda d}\right)}{\log(1+\alpha)} \, .
    \]
    Since $R_\alpha$ is an integer, this further implies
    \[
        R_\alpha
        \le
        1+
        \left\lfloor
            \frac{d\log\!\left(1+\frac{T}{\lambda d}\right)}
            {\log(1+\alpha)}
        \right\rfloor = \bar R_\alpha .
    \]    
\end{proof}

\subsection{Self-Normalized Exceedance Frequencies Inside One Epoch}

The central object in the analysis is the self-normalized exceedance frequency, which measures the fraction of ensemble perturbations that are sufficiently large in a given direction at a given round.
In the setting of~\citet{janz2026sharp}, for each fixed direction, the projected perturbation process can be represented by Brownian motions whose clock is the corresponding self-normalized norm. Since the ensemble is not refreshed, this clock must be controlled over the full Gram-matrix growth along the horizon.
Our epoch-refresh construction localizes this difficulty: within each epoch, the Gram matrix changes by at most a factor \(1+\alpha\), so the associated Brownian clock varies only over a local multiplicative range. Therefore, the Brownian exceedance theorem can be applied with clock ratio \(1+\alpha\), rather than the global Gram-matrix growth. This is the key reason the fixed-direction exceedance guarantee can be obtained with fewer ensemble members.

\FixedDriectionEpoch*

\begin{proof}[Proof of Lemma~\ref{lem:fixed-direction-epoch}]
    Fix an epoch $r$ and condition on the history up to time $\tau_r$.
    By construction,
    \[
    \forall j \in [m]: \qquad
    \sbb_{\tau_r}^j = \Vb_{\tau_r}^{1/2}\gb_r^j,
    \qquad
    \gb_r^j \sim \Ncal(\zero_d,\Ib_d).
    \]
    Hence, for every $t\in\{\tau_r+1,\dots,\tau_{r+1}\}$ and every $j\in[m]$,
    \[
    \ips{\ub}{\sbb_{t-1}^j}
    =
    \ip{\ub}{\Vb_{\tau_r}^{1/2}\gb_r^j}
    +
    \sum_{s=\tau_r+1}^{t-1}\ip{\ub}{\xb_s}\,\xi_s^j.
    \]
    Note that for \(t\in\{\tau_r+1,\dots,\tau_{r+1}\}\), the arm \(\xb_t\) is selected before \(\xi_t^j\) is sampled. Hence, the coefficient \(\ips{\ub}{\xb_t}\) is predictable with respect to the filtration immediately before the perturbation noise \(\xi_t^j\) is drawn. Therefore, the projected process is a Gaussian martingale transform with predictable scalar coefficients.    

    On the other hand, by the epoch-refresh construction, all ensemble perturbations are reinitialized at time \(\tau_r\), so within epoch \(r\) they evolve from a common Gaussian restart point. This differs from the Brownian representation in~\citet{janz2026sharp}, where the process is tracked globally over the entire horizon starting from time \(0\). Here, by conditioning on the history up to \(\tau_r\), we obtain an \emph{epoch-wise} Brownian representation whose clock starts from the local covariance matrix \(\Vb_{\tau_r}\).
    We formalize this observation in the following lemma, which gives an epoch-wise Brownian representation for a diagonal martingale transform of Gaussian noise. Its proof is deferred to Appendix~\ref{appx:proof of epoch-brownian-representation}.

    \begin{restatable}[Epoch-wise scalar-clock Brownian representation]{lemma}{EpochBrownianRepresentation} \label{lem:epoch-brownian-representation}
        Fix an epoch \(r\) and a direction \(\ub\in \usd\). Let
        \[
        \ell_r := \tau_{r+1}-\tau_r \, .
        \]
        For \(k=0,1,\dots,\ell_r-1\), define the \(m\)-dimensional vector
        \[
        \Mb_k
        :=
        \bigl(
        \ips{\ub}{\sbb_{\tau_r+k}^1},\dots,\ips{\ub}{\sbb_{\tau_r+k}^m}
        \bigr)\in\RR^m \, .
        \]
        Then \((\Mb_k)_{k=0}^{\ell_r-1}\) is a diagonal martingale transform of a standard
        \(m\)-dimensional Gaussian noise sequence with scalar coefficients
        \[
        D_0 := \norm{\ub}_{\Vb_{\tau_r}},
        \qquad
        D_k := \ip{\ub}{\xb_{\tau_r+k}},
        \qquad k=1,\dots,\ell_r-1 \, .
        \]
        Consequently, on an extension of the probability space, there exist independent
        standard Brownian motions \(W^1,\dots,W^m\) such that for every
        \(k=0,\dots,\ell_r-1\) and every \(j\in[m]\),
        \[
        \Mb_k^j
        =
        W^j(A_k^2) \, ,
        \qquad
        A_k^2
        :=
        D_0^2+\sum_{s=1}^k D_s^2
        =
        \norm{\ub}_{\Vb_{\tau_r+k}}^2 \, .
        \]
        Equivalently, for every \(t\in\{\tau_r+1,\dots,\tau_{r+1}\}\) and every \(j\in[m]\),
        \[
        \ips{\ub}{\sbb_{t-1}^j}
        =
        W^j\!\bigl(\norm{\ub}_{\Vb_{t-1}}^2\bigr) \, .
        \]        
    \end{restatable}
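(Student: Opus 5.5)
The plan is to first write the projected process as an explicit martingale transform and then embed it into Brownian motion by the Dambis--Dubins--Schwarz time-change construction, carried out in discrete form. Condition on $\Hcal$ up to time $\tau_r$, so that $\Vb_{\tau_r}$ and $\ub$ are fixed. Define the filtration $\Fcal_k$ generated by the history through round $\tau_r+k$, including the selected arm $\xb_{\tau_r+k+1}$ but excluding the perturbation noises $\xi^j_{\tau_r+k+1}$. Set $\zeta_0^j := \ips{\ub}{\Vb_{\tau_r}^{1/2}\gb_r^j}/D_0$ (when $D_0>0$, which holds since $\Vb_{\tau_r}\succeq\lambda\Ib_d$) and $\zeta_k^j := \xi^j_{\tau_r+k}$ for $k\ge1$. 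Then $\Mb^j_k=\sum_{s=0}^k D_s\zeta_s^j$. The first step is to check that $\zeta_0^j\sim\Ncal(0,1)$, since $\ips{\ub}{\Vb_{\tau_r}^{1/2}\gb}\sim\Ncal(0,\|\ub\|^2_{\Vb_{\tau_r}})$, and that $\zeta_0^1,\dots,\zeta_0^m$ are i.i.d. The second step is to check that, conditionally on $\Fcal_{k-1}$, the vector $(\zeta_k^1,\dots,\zeta_k^m)$ is standard Gaussian and independent of $\Fcal_{k-1}$, while $D_k=\ips{\ub}{\xb_{\tau_r+k}}$ is $\Fcal_{k-1}$-measurable. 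This holds because the arm at round $\tau_r+k$ is chosen (through $J_{\tau_r+k}$, the environment, and past $\thetab$'s) before the fresh $\xi_{\tau_r+k}$ are drawn, and the reward noise $\eta$ is independent of the $\xi$'s given the arm. This establishes the diagonal martingale-transform claim and the identity $A_k^2=\|\ub\|^2_{\Vb_{\tau_r+k}}$, which follows directly from $\Vb_{\tau_r+k}=\Vb_{\tau_r}+\sum_{s\le k}\xb_{\tau_r+s}\xb_{\tau_r+s}^\top$.

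For the embedding, I would construct $W^j$ piecewise. On an extended space, take independent Brownian bridges and auxiliary Brownian motions, all independent of everything else. On the interval $[A_{k-1}^2,A_k^2]$, with $A_{-1}^2:=0$, define $W^j$ as $W^j(A_{k-1}^2)$ plus the increment $D_k\zeta_k^j$ linearly interpolated in time, plus an independent Brownian bridge of length $D_k^2$. After the final time $A^2_{\ell_r-1}$, append an independent Brownian motion. If $D_k=0$, the interval is degenerate and nothing is added, which is consistent because $\Mb_k=\Mb_{k-1}$. The values at the clock times then match $\Mb_k^j$ by construction.

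The main obstacle is proving that the resulting $W^1,\dots,W^m$ are jointly independent standard Brownian motions even though the clock $A_k^2$ is random and adaptively chosen. I would handle this by the standard argument of checking finite-dimensional distributions through a sequential conditioning, or equivalently by verifying that $W$ is a continuous $m$-dimensional martingale with respect to the enlarged filtration, with quadratic covariation $\langle W^i,W^j\rangle_s=s\,\ind\{i=j\}$, and then applying L\'evy's characterization. The key point is that on each random interval $[A_{k-1}^2,A_k^2]$, the length $D_k^2$ is $\Fcal_{k-1}$-measurable, and the increment there is conditionally an independent Brownian segment. A Gaussian increment $\sqrt{\text{length}}\cdot\Ncal(0,1)$ combined with an independent bridge yields exactly a Brownian path over that length. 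Since Lemma~\ref{lem:epoch-brownian-representation} concerns only a fixed epoch $r$ and the statement conditions on the pre-epoch history, the randomness of $\tau_{r+1}$ poses no difficulty: it is a stopping time, and extending the construction past it with independent motion preserves the law. The final equivalent form follows by substituting $t-1=\tau_r+k$.
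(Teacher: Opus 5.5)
Your proposal is correct. Its first half is exactly the paper's argument: the normalization $\zeta_0^j=\ips{\ub}{\Vb_{\tau_r}^{1/2}\gb_r^j}/D_0$, the predictability of $D_k=\ips{\ub}{\xb_{\tau_r+k}}$ (the arm is selected before $\xi_{\tau_r+k}$ is drawn), and the identity $A_k^2=\norm{\ub}_{\Vb_{\tau_r+k}}^2$.

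The two routes diverge at the embedding step. The paper simply invokes the diagonal Gaussian embedding theorem of Janz et al.\ (Lemma~\ref{aux_lem:akhavan-diagonal-embedding}) with coefficient matrices $D_k\Ib_m$, and then notes that scalar coefficients make all coordinate clocks coincide. You instead construct $W$ explicitly: linear interpolation of $D_k\zeta_k$ over $[A_{k-1}^2,A_k^2]$, plus an independent rescaled Brownian bridge, then an independent Brownian tail. You then identify its law by L\'evy's characterization.

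\textbf{What your route buys.} It is self-contained and uses the scalar-clock structure more fully. Because all $m$ coordinates share one time change, a single $m$-dimensional L\'evy argument with $\langle W^i,W^j\rangle_s=s\,\ind\{i=j\}$ gives independence of the coordinates immediately. The cited theorem is built for the harder case of coordinate-specific clocks.

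\textbf{What it costs.} You take on bookkeeping that the citation hides. You must define the continuous-time filtration across the random switching times. For example, let $\mathcal G_s$ be generated by the path of $W$ up to $s$ together with the sets $G\cap\{A_{k-1}^2\le s\}$, $G\in\Fcal_{k-1}\vee\sigma(b_0,\dots,b_{k-1})$, where the $b_i$ are your bridges. You must then check that the extra information revealed at time $A_{k-1}^2$ (arm sets, $J$, rewards, none of which are functions of $W$) is conditionally independent of all later segments. This holds because later $\xi$'s and bridges are fresh draws.

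Your explicit treatment of the random end time $\tau_{r+1}$, by stopping and extending with independent motion, is also slightly more careful than the paper. The paper's restated embedding theorem is formulated for a deterministic horizon $N$ but is applied with $N=\ell_r$ random.
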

    
    Applying Lemma~\ref{lem:epoch-brownian-representation}, on an extension of the probability space, there exist independent standard Brownian motions
    \(W^1,\dots,W^m\) such that for all
    \(t\in\{\tau_r+1,\dots,\tau_{r+1}\}\) and all \(j\in[m]\),
    \[
    \ips{\ub}{\sbb_{t-1}^j}
    =
    W^j\!\bigl(\norm{\ub}_{\Vb_{t-1}}^2\bigr).
    \]
    Therefore,
    \[
    E_{t,m}(\ub,2/\gamma)
    =
    \frac1m\sum_{j=1}^m
    \ind \!\left\{
    \frac{W^j(\norm{\ub}_{\Vb_{t-1}}^2)}{\norm{\ub}_{\Vb_{t-1}}}
    \ge \frac{2}{\gamma}
    \right\}.
    \]
    
    Now, for every \(t\in\{\tau_r+1,\dots,\tau_{r+1}\}\), we have
    \[
    \Vb_{\tau_r}\preceq \Vb_{t-1}\preceq (1+\alpha)\Vb_{\tau_r} \, ,
    \]
    which implies
    \[
    \norm{\ub}_{\Vb_{t-1}}^2
    \in
    \bigl[\norm{\ub}_{\Vb_{\tau_r}}^2,\ (1+\alpha)\norm{\ub}_{\Vb_{\tau_r}}^2\bigr] \, .
    \]
    Thus, the Brownian clock ratio is at most \(1+\alpha\).
    
    We now invoke the time-uniform exceedance count result for Brownian motions along a fixed direction due to~\citet{janz2026sharp}.
    
    \begin{lemma}[Theorem 5.4 in~\citet{janz2026sharp}] \label{thm:akhavan-brownian}
        Let \(W^1,\dots,W^m\) be independent standard Brownian motions.
        Fix \(0<\tau\le \tau'<\infty\), \(c>0\), and \(0<p<p_0(c):=\frac14(1-\Phi(c))\).
        Then there exists \(h\in(0,1]\) such that, for any \(\delta\in(0,1)\), if
        \[
        m \ge \frac{4}{p}\log\!\left(\frac{\lceil \log(\tau'/\tau)/h\rceil}{\delta}\right),
        \]
        then with probability at least \(1-\delta\),
        \[
        \inf_{t\in[\tau,\tau']}
        \frac1m\sum_{j=1}^m
        \ind \!\left\{\frac{W^j(t)}{\sqrt{t}}\ge c\right\}
        \ge p.
        \]
        Moreover, if \(c\le 1/20\) and \(p\le 1/10\), then \(h=1/250\) is admissible.
    \end{lemma}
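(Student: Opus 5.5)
The plan is to discretize the time window $[\tau,\tau']$ on a geometric grid, reduce the infimum over each grid cell to a single ``good event'' per Brownian motion, and then apply a Chernoff bound across the $m$ independent motions together with a union bound over the cells.

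\emph{Grid and per-cell event.} Set $t_i := \tau e^{ih}$ for $i=0,\dots,N-1$, where $N:=\lceil \log(\tau'/\tau)/h\rceil$. The cells $[t_i, t_i e^{h}]$ then cover $[\tau,\tau']$. For each $j$ and $i$, consider the event
\[
G_i^j := \Bigl\{ \inf_{t\in[t_i,t_ie^h]} \frac{W^j(t)}{\sqrt t} \ge c \Bigr\}.
\]
By Brownian scaling, $t\mapsto W^j(t_i t)/\sqrt{t_i}$ is a standard Brownian motion. Hence $q:=\Pr(G_i^j)$ does not depend on $i$ or $j$, and
\[
q = \Pr\Bigl(\inf_{s\in[1,e^h]} W(s)/\sqrt s \ge c\Bigr).
\]
For fixed $i$, the events $G_i^1,\dots,G_i^m$ are independent across $j$.

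\emph{Lower bound on $q$ (the main step).} For $s\in[1,e^h]$ we have
\[
\frac{W(s)}{\sqrt s} \ge \frac{W(1) - \sup_{u\in[1,e^h]}|W(u)-W(1)|}{e^{h/2}}
\]
whenever the numerator is nonnegative. Fix a margin $\eta>0$. Then $G$ holds as soon as $W(1)\ge ce^{h/2}+\eta$ and $\sup_{u\le e^h-1}|B(u)|<\eta$, where $B$ is the independent increment process after time $1$. Using the reflection principle,
\[
\Pr\Bigl(\sup_{u\le e^h-1}|B(u)|\ge\eta\Bigr) \le 4\bigl(1-\Phi(\eta/\sqrt{e^h-1})\bigr).
\]
Since the two conditions are independent,
\[
q \ge \bigl(1-\Phi(ce^{h/2}+\eta)\bigr)\bigl(1-4(1-\Phi(\eta/\sqrt{e^h-1}))\bigr).
\]
As $h,\eta\to 0$ (with $\eta/\sqrt{h}\to\infty$), the right-hand side tends to $1-\Phi(c)=4p_0(c)$. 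So some $h\in(0,1]$ gives $q\ge 2p_0(c) > 2p$. For the explicit constants, take $c\le 1/20$, $h=1/250$ and $\eta=0.2$. Then $\sqrt{e^h-1}\approx0.063$, so the oscillation term is at most about $10^{-3}$. Also $ce^{h/2}+\eta\le 0.26$, so $1-\Phi(0.26)\approx0.40$. Hence $q\ge 0.39\ge 2p$ for every $p\le 1/10$.

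\emph{Concentration and union bound.} For a fixed cell $i$, the count $\sum_j \ind\{G_i^j\}$ is $\mathrm{Bin}(m,q)$ with $q\ge 2p$. The multiplicative Chernoff bound gives
\[
\Pr\Bigl(\tfrac1m\textstyle\sum_j\ind\{G_i^j\}<p\Bigr) \le \Pr\bigl(\mathrm{Bin}(m,q)<qm/2\bigr) \le e^{-qm/8} \le e^{-pm/4}.
\]
A union bound over the $N$ cells bounds the failure probability by $N e^{-pm/4}$. This is at most $\delta$ exactly when $m\ge \frac4p\log(N/\delta)$. On the complementary event, every $t\in[\tau,\tau']$ lies in some cell $i$. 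On that cell, at least $pm$ indices $j$ satisfy $G_i^j$, and each such $j$ has $W^j(t)/\sqrt t\ge c$. This gives the claimed infimum bound.

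I expect the main obstacle to be the quantitative lower bound on $q$, that is, choosing $h$ and $\eta$ so that the continuous-time infimum costs at most a factor of two relative to $1-\Phi(c)$. The remaining steps are routine Chernoff and union-bound bookkeeping.
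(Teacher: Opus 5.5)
The paper does not prove this statement. It quotes it verbatim as Theorem 5.4 of \citet{janz2026sharp} and uses it as a black box in the proof of Lemma~\ref{lem:fixed-direction-epoch}, so there is no in-paper argument to compare yours against.

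Your argument is a correct, self-contained proof:
\begin{itemize}
\item Brownian scaling makes the per-cell probability $q$ the same for every cell. The events are independent across $j$ within a cell, and the union bound needs no independence across cells.
\item The bound $q\ge(1-\Phi(ce^{h/2}+\eta))(1-4(1-\Phi(\eta/\sqrt{e^h-1})))$ is valid. It follows from the independence of $W(1)$ and the post-$1$ increments, together with the reflection principle.
\item Applying Chernoff at half the mean, with $q\ge 2p$, reproduces exactly the threshold $\frac4p\log(N/\delta)$. This also shows where the form of the condition in the statement comes from.
\end{itemize}
Relative to simply citing the result, your route makes the admissibility of $h=1/250$ directly checkable, and it shows that there is substantial slack there.

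Two small remarks.

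First, in the explicit check the oscillation term is $4(1-\Phi(0.2/\sqrt{e^{1/250}-1}))\approx 4(1-\Phi(3.16))\approx 3\times 10^{-3}$, not $10^{-3}$; you appear to have dropped the factor $4$. This is harmless: $q\ge 0.397\cdot 0.996>0.39$ still holds, which is well above the required $2p\le 0.2$.

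Second, when $\tau=\tau'$ you get $N=\lceil 0\rceil=0$. Your cover is then empty and the threshold $\log(0/\delta)$ is meaningless. This degeneracy comes from the statement itself, and using $\max\{1,N\}$ cells removes it. It never arises in the paper's application, where $\tau'/\tau=1+\alpha>1$, so $N=\kappa_\alpha\ge 1$.
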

    
    Applying Lemma~\ref{thm:akhavan-brownian} on the interval
    \[
    [\tau,\tau']
    =
    \bigl[\norm{\ub}_{\Vb_{\tau_r}}^2,\ (1+\alpha)\norm{\ub}_{\Vb_{\tau_r}}^2\bigr],
    \]
    with \(p=1/10\), \(c=1/20\), and the admissible choice \(h=1/250\), we obtain the desired conclusion provided that
    \[
    m \ge \frac{4}{p}\log\!\left(\frac{\lceil \log(\tau'/\tau)/h\rceil}{\delta'}\right)
    =
    40 \log\!\left(\frac{\lceil 250 \log(\tau'/\tau)\rceil}{\delta'}\right).
    \]
    Since \(\tau'/\tau\le 1+\alpha\), it suffices to require
    \[
    m \ge 40\log \frac{\kappa_\alpha}{\delta'} \, ,
    \qquad
    \kappa_\alpha:=\left\lceil 250\log(1+\alpha)\right\rceil.
    \]
    This concludes the proof.
\end{proof}

\subsection{Self-Normalized Exceedance Frequencies Across Epochs}
The fixed-direction bound is not sufficient for general bounded arm sets, because the optimistic direction can depend on both the round and the unknown optimal arm. We therefore extend the fixed-direction guarantee uniformly over all directions on the unit sphere.
The first ingredient is a local Lipschitz property of the normalized map induced by the Gram-matrix change within an epoch. 

\begin{lemma}[Local Lipschitz map within one epoch] \label{lem:local-lipschitz}
    Fix an epoch \(r\) and a round \(t\in\{\tau_r+1,\dots,\tau_{r+1}\}\).
    Let
    \[
    \Bb_{r,t} := \Vb_{t-1}^{1/2}\Vb_{\tau_r}^{-1/2}.
    \]
    Define a map \(\phib_{r,t} : \usd \to \usd\) by
    \[
    \phib_{r,t}(\wb) := \frac{\Bb_{r,t}\wb}{\norm{\Bb_{r,t}\wb}_2},
    \qquad \wb\in \usd.
    \]
    Then, for any \(\wb,\wb' \in \usd\),
    \[
    \norm{\phib_{r,t}(\wb)-\phib_{r,t}(\wb')}_2
    \le
    2\sqrt{1+\alpha}\,\norm{\wb-\wb'}_2 \, .
    \]
\end{lemma}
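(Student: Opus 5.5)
The plan is to prove the standard normalization inequality: for any nonzero \(\ab,\bb\in\RR^d\),
\[
\Bigl\|\frac{\ab}{\|\ab\|_2}-\frac{\bb}{\|\bb\|_2}\Bigr\|_2\le \frac{2\|\ab-\bb\|_2}{\max\{\|\ab\|_2,\|\bb\|_2\}}\le\frac{2\|\ab-\bb\|_2}{\min\{\|\ab\|_2,\|\bb\|_2\}} \, .
\]
We then apply it with \(\ab=\Bb_{r,t}\wb\) and \(\bb=\Bb_{r,t}\wb'\). The inequality follows from the triangle inequality. Assuming without loss of generality \(\|\ab\|_2\ge\|\bb\|_2\), write
\[
\frac{\ab}{\|\ab\|_2}-\frac{\bb}{\|\bb\|_2}=\frac{\ab-\bb}{\|\ab\|_2}+\bb\Bigl(\frac{1}{\|\ab\|_2}-\frac{1}{\|\bb\|_2}\Bigr) \, .
\]
The second term has norm \(\bigl|\|\bb\|_2-\|\ab\|_2\bigr|/\|\ab\|_2\le\|\ab-\bb\|_2/\|\ab\|_2\). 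So the total is at most \(2\|\ab-\bb\|_2/\|\ab\|_2\), which is in turn at most \(2\|\ab-\bb\|_2/\min\{\|\ab\|_2,\|\bb\|_2\}\).

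It remains to bound the numerator and the denominator using the epoch sandwich \(\Vb_{\tau_r}\preceq\Vb_{t-1}\preceq(1+\alpha)\Vb_{\tau_r}\). This holds for \(t\in\{\tau_r+1,\dots,\tau_{r+1}\}\): by definition of \(\tau_{r+1}\), every \(s\in(\tau_r,\tau_{r+1})\) satisfies \(\Vb_s\preceq(1+\alpha)\Vb_{\tau_r}\), and \(t-1\) ranges over \([\tau_r,\tau_{r+1}-1]\). Monotonicity of \(\Vb\) gives the lower bound.

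Next I compute \(\Bb_{r,t}^\top\Bb_{r,t}=\Vb_{\tau_r}^{-1/2}\Vb_{t-1}\Vb_{\tau_r}^{-1/2}=:\Ab_{r,t}\). By congruence, \(\Ib_d\preceq\Ab_{r,t}\preceq(1+\alpha)\Ib_d\). Hence for every \(\vb\),
\[
\|\vb\|_2^2\le\|\Bb_{r,t}\vb\|_2^2=\vb^\top\Ab_{r,t}\vb\le(1+\alpha)\|\vb\|_2^2 \, .
\]
Consequently \(\|\Bb_{r,t}\wb-\Bb_{r,t}\wb'\|_2\le\sqrt{1+\alpha}\,\|\wb-\wb'\|_2\). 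For unit vectors, \(\|\Bb_{r,t}\wb\|_2\ge1\) and \(\|\Bb_{r,t}\wb'\|_2\ge1\), so both images are nonzero and \(\phib_{r,t}\) is well defined.

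Plugging these into the normalization inequality gives \(\|\phib_{r,t}(\wb)-\phib_{r,t}(\wb')\|_2\le 2\sqrt{1+\alpha}\,\|\wb-\wb'\|_2\). No real obstacle is expected. The only points needing care are checking the sandwich at the endpoint \(t=\tau_{r+1}\) (it concerns \(\Vb_{t-1}\), not \(\Vb_t\)) and the boundary case \(\tau_{r+1}=T\) in the definition.
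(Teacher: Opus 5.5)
Your proposal is correct and follows the paper's proof essentially step for step. Both arguments use $\Ib_d \preceq \Bb_{r,t}^\top\Bb_{r,t} \preceq (1+\alpha)\Ib_d$ to get $\|\Bb_{r,t}\|_{\op}\le\sqrt{1+\alpha}$ and $\|\Bb_{r,t}\wb\|_2\ge 1$, then apply $\|\ab/\|\ab\|_2-\bb/\|\bb\|_2\|_2\le 2\|\ab-\bb\|_2/\min\{\|\ab\|_2,\|\bb\|_2\}$ to $\ab=\Bb_{r,t}\wb$ and $\bb=\Bb_{r,t}\wb'$. The paper asserts that inequality and the within-epoch sandwich without proof, whereas you prove the inequality (in the slightly stronger max-denominator form) and check the sandwich directly from the definition of $\tau_{r+1}$.
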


\begin{proof}[Proof of Lemma~\ref{lem:local-lipschitz}]
    Since \(t\) lies in epoch \(r\), by the definition of the epoch refresh rule we have
    \[
    \Vb_{\tau_r}\preceq \Vb_{t-1}\preceq (1+\alpha)\Vb_{\tau_r}.
    \]
    Multiplying on the left and right by \(\Vb_{\tau_r}^{-1/2}\), we obtain
    \[
    \Ib_d
    \preceq
    \Vb_{\tau_r}^{-1/2}\Vb_{t-1}\Vb_{\tau_r}^{-1/2}
    =
    \Bb_{r,t}^{\top}\Bb_{r,t}
    \preceq
    (1+\alpha)\Ib_d.
    \]
    Therefore,
    \[
    \norm{\Bb_{r,t}}_{\op}\le \sqrt{1+\alpha},
    \qquad
    \inf_{\norm{\wb}_2=1}\norm{\Bb_{r,t}\wb}_2 \ge 1.
    \]
    
    Now let
    \[
    \ab:=\Bb_{r,t}\wb,
    \qquad
    \bb:=\Bb_{r,t}\wb',
    \]
    where \(\wb,\wb'\in\usd\). Since \(\norm{\ab}_2,\norm{\bb}_2\ge 1\), we may use the elementary inequality
    \[
     \left\| \frac{\ab}{\norm{\ab}_2}-\frac{\bb}{\norm{\bb}_2} \right\|_2
    \le
    \frac{2\norm{\ab-\bb}_2}{\min\{\norm{\ab}_2,\norm{\bb}_2\}} \, .
    \]
    Applying this with \(\ab=\Bb_{r,t}\wb\) and \(\bb=\Bb_{r,t}\wb'\), and using \(\min\{\norm{\ab}_2,\norm{\bb}_2\}\ge 1\), we obtain
    \[
    \norm{\phib_{r,t}(\wb)-\phib_{r,t}(\wb')}_2
    \le
    2\norm{\Bb_{r,t}(\wb-\wb')}_2
    \le
    2\norm{\Bb_{r,t}}_{\op}\norm{\wb-\wb'}_2
    \le
    2\sqrt{1+\alpha}\,\norm{\wb-\wb'}_2.
    \]
\end{proof}

To make the net argument quantitative, we also need a uniform bound on the normalized ensemble perturbations over all epochs. This guarantees that transferring exceedance from a nearby net point to an arbitrary direction incurs only a controlled loss, uniformly over time and across epochs.

\begin{lemma}[Uniform bound on normalized perturbations] \label{lem:uniform-state-bound}
    For a given \(\delta \in (0,1)\), define
    \[
    \Gamma_\delta
    :=
    \underbrace{
    \sqrt{d} + \sqrt{2\log\!\left(\frac{4m\bar{R}_{\alpha}}{\delta}\right)}
    }_{\text{refreshed Gaussian term}}
    +
    \underbrace{
    \sqrt{d\log(1+\alpha) + 2\log\!\left(\frac{4m\bar{R}_{\alpha}}{\delta}\right)}
    }_{\text{within-epoch self-normalized term}}.
    \]
    With probability at least \(1-\delta\), we have
    \[
    \forall r\le R_\alpha,\ \forall j\in[m],\ \forall t\in\{\tau_r+1,\dots,\tau_{r+1}\}:
    \qquad
    \norm{\Vb_{t-1}^{-1/2}\sbb_{t-1}^j}_2 \le \Gamma_\delta.
    \]
\end{lemma}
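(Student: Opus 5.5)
Within epoch $r$ and for $t\in\{\tau_r+1,\dots,\tau_{r+1}\}$, I will split the perturbation into its refresh part and its within-epoch part,
\[
\sbb_{t-1}^j=\Vb_{\tau_r}^{1/2}\gb_r^j+\Sb_{r,t-1}^j,\qquad \Sb_{r,t-1}^j:=\sum_{s=\tau_r+1}^{t-1}\xb_s\xi_s^j .
\]
The triangle inequality then gives
\[
\norm{\Vb_{t-1}^{-1/2}\sbb_{t-1}^j}_2\le \norm{\Vb_{t-1}^{-1/2}\Vb_{\tau_r}^{1/2}\gb_r^j}_2+\norm{\Sb_{r,t-1}^j}_{\Vb_{t-1}^{-1}} .
\]
I will bound the two terms separately. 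Each bound holds with probability $1-\delta/(2m\bar R_\alpha)$ per pair $(r,j)$. A union bound over $j\in[m]$ and $r\le \bar R_\alpha$ then gives total failure probability at most $\delta$.

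\textbf{Refreshed Gaussian term.} Since $\Vb_{t-1}\succeq\Vb_{\tau_r}$, we have $\norm{\Vb_{t-1}^{-1/2}\Vb_{\tau_r}^{1/2}}_{\op}\le1$, so the first term is at most $\norm{\gb_r^j}_2$. Conditional on the history up to $\tau_r$, the vector $\gb_r^j$ is standard Gaussian. Gaussian concentration of the $1$-Lipschitz norm gives $\norm{\gb_r^j}_2\le\sqrt d+\sqrt{2\log(4m\bar R_\alpha/\delta)}$ with probability at least $1-\delta/(4m\bar R_\alpha)$. This bound does not depend on $t$, so it holds uniformly over the epoch at no extra cost.

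\textbf{Within-epoch self-normalized term.} Within the epoch, $\xb_s$ is chosen before $\xi_s^j$ is drawn and $\xi_s^j\sim\Ncal(0,1)$ independently, so $\Sb_{r,t-1}^j$ is a vector martingale with $1$-sub-Gaussian increments. I will apply the self-normalized bound of Abbasi-Yadkori et al., started at time $\tau_r$ with regularizer $\Vb_{\tau_r}$. This works because $\Vb_{t-1}=\Vb_{\tau_r}+\sum_{s=\tau_r+1}^{t-1}\xb_s\xb_s^\top$. The bound is time-uniform by Ville's inequality, which gives, simultaneously for all $t$ in the epoch,
\[
\norm{\Sb_{r,t-1}^j}_{\Vb_{t-1}^{-1}}^2\le 2\log\frac{\det(\Vb_{t-1})^{1/2}}{\det(\Vb_{\tau_r})^{1/2}\,\delta''}.
\]
The epoch rule gives $\Vb_{t-1}\preceq(1+\alpha)\Vb_{\tau_r}$, so the determinant ratio is at most $(1+\alpha)^d$. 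The right-hand side is therefore at most $d\log(1+\alpha)+2\log(1/\delta'')$. Taking $\delta''=\delta/(4m\bar R_\alpha)$ yields exactly the second summand of $\Gamma_\delta$.

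\textbf{Main obstacle.} The delicate step is legitimizing the union bound over epochs, because both the epoch start times $\tau_r$ and the number of epochs $R_\alpha$ are random. I plan to index by the deterministic epoch count $r\le\bar R_\alpha$ (Lemma~\ref{lem:num-epochs}) and to note that each $\tau_r$ is a stopping time. If $\tau_r$ does not exist, the event for that index is vacuous. If it does, then conditional on $\Hcal_{\tau_r}$ the fresh $\gb_r^j$ and the subsequent $\xi^j_s$ are independent of the past. The conditional bounds above therefore hold, and taking expectations removes the conditioning. In the martingale step I also have to check that the stopped process at $\tau_{r+1}$ still falls under Ville's inequality. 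It does, because the bound is already uniform over all $t\ge\tau_r$, so stopping only restricts the times at which it is applied.
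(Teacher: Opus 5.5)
Your proposal is correct and follows the paper's proof essentially step for step. It uses the same split into the refreshed term $\Vb_{t-1}^{-1/2}\Vb_{\tau_r}^{1/2}\gb_r^j$, controlled via $\Vb_{t-1}\succeq\Vb_{\tau_r}$ and Gaussian norm concentration, and the within-epoch martingale, controlled by the Abbasi-Yadkori self-normalized inequality with initial matrix $\Vb_{\tau_r}$ and the determinant-ratio bound $(1+\alpha)^d$, followed by a union bound over $r\le\bar R_\alpha$ and $j\in[m]$. The only differences are cosmetic: you allot $\delta/(4m\bar R_\alpha)$ to both pieces (the paper uses $\delta/(2m\bar R_\alpha)$ for the martingale and then loosens the logarithm), and you spell out the stopping-time justification for conditioning on $\mathcal F_{\tau_r}$, which the paper leaves implicit.
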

\begin{proof}[Proof of Lemma~\ref{lem:uniform-state-bound}]
    
    Fix an epoch \(r\) and an ensemble index \(j\in[m]\).
    For each \(t\in\{\tau_r+1,\dots,\tau_{r+1}\}\), the construction of the algorithm gives
    \[
    \sbb_{t-1}^j
    =
    \Vb_{\tau_r}^{1/2}\gb_r^j
    +
    \sum_{s=\tau_r+1}^{t-1} \xb_s \xi_s^j.
    \]
    Hence,
    \[
    \Vb_{t-1}^{-1/2}\sbb_{t-1}^j
    =
    \underbrace{\Vb_{t-1}^{-1/2}\Vb_{\tau_r}^{1/2} \gb_r^j}_{\text{refreshed Gaussian term}}
    +
    \underbrace{\Vb_{t-1}^{-1/2}\sum_{s=\tau_r+1}^{t-1} \xb_s \xi_s^j}_{\text{within-epoch martingale term}} \, .
    \]
    We bound these two terms separately.
    
    \medskip
    \noindent
    \textbf{Step 1: The refreshed Gaussian term.}
    Since \(\Vb_{t-1}\succeq \Vb_{\tau_r}\), we have
    \[
    \norm{\Vb_{t-1}^{-1/2}\Vb_{\tau_r}^{1/2}}_{\op}\le 1.
    \]
    Therefore,
    \[
    \norm{\Vb_{t-1}^{-1/2}\Vb_{\tau_r}^{1/2}\gb_r^j}_2
    \le
    \norm{\gb_r^j}_2.
    \]
    Moreover, by the concentration inequality for the Euclidean norm of a standard Gaussian vector (Lemma~\ref{aux_lem:gaussian-norm-concentration}),
    \[
    \PP\!\left(\norm{\gb_r^j}_2 > \sqrt{d} + \sqrt{2x}\right)\le e^{-x}.
    \]
    Taking
    \[
    x=\log\!\left(\frac{4m\bar R_\alpha}{\delta}\right),
    \]
    and applying a union bound over all pairs \((r,j)\), we conclude that, with probability at least \(1-\delta/2\),
    \[
    \forall r\le \bar R_\alpha,\ \forall j\in[m]:
    \qquad
    \norm{\gb_r^j}_2
    \le
    \sqrt{d}+\sqrt{2\log\!\left(\frac{4m\bar R_\alpha}{\delta}\right)}.
    \]
    
    \medskip
    \noindent
    \textbf{Step 2: The within-epoch martingale term.}
    
    For a fixed epoch \(r\) and ensemble index \(j\in[m]\), define
    \[
    \Mb_t^{(r,j)}
    :=
    \sum_{s=\tau_r+1}^{t-1}\xb_s \xi_s^j,
    \qquad
    t\in\{\tau_r+1,\dots,\tau_{r+1}\}.
    \]
    Conditional on the history up to time \(\tau_r\), namely \(\mathcal F_{\tau_r}\), the process \(\{\Mb_t^{(r,j)}\}_{t=\tau_r+1}^{\tau_{r+1}}\) is a vector-valued martingale transform with conditionally Gaussian increments and predictable coefficients. More precisely, for each \(s\in\{\tau_r+1,\dots,\tau_{r+1}-1\}\), the increment
    \[
    \xb_s \xi_s^j
    \]
    has conditional mean zero given the past, the scalar noise \(\xi_s^j\) is conditionally standard Gaussian, and the coefficient vector \(\xb_s\) is predictable.
    
    We may therefore apply the standard self-normalized inequality for vector-valued martingales with conditionally sub-Gaussian increments (Lemma~\ref{aux_lem:self-normalized inequality}), with initial matrix \(\Vb_{\tau_r}\). This implies that for every \(\eta\in(0,1)\),
    \begin{equation} \label{eq:uniform-state-bound-1}
        \PP\!\left(
        \exists\, t\in\{\tau_r+1,\dots,\tau_{r+1}\} :
        \norm{\Vb_{t-1}^{-1/2}\Mb_t^{(r,j)}}_2
        >
        \sqrt{
        2\log(1/\eta)
        +
        \log\!\frac{\detm(\Vb_{t-1})}{\detm(\Vb_{\tau_r})}
        }
        \ \middle|\ \mathcal F_{\tau_r}
        \right)
        \le \eta.    
    \end{equation}
    
    Next, we control the determinant ratio uniformly within the epoch. Since \(t\) lies in epoch \(r\), the epoch refresh rule implies
    \[
    \Vb_{t-1}\preceq (1+\alpha)\Vb_{\tau_r}.
    \]
    Hence,
    \[
    \frac{\detm(\Vb_{t-1})}{\detm(\Vb_{\tau_r})}
    \le
    (1+\alpha)^d.
    \]
    Substituting this bound into the Eq.~\eqref{eq:uniform-state-bound-1}, we obtain
    \[
    \PP\!\left(
    \sup_{t\in\{\tau_r+1,\dots,\tau_{r+1}\}}
    \norm{\Vb_{t-1}^{-1/2}\Mb_t^{(r,j)}}_2
    >
    \sqrt{d\log(1+\alpha)+2\log(1/\eta)}
    \right)
    \le \eta.
    \]
    
    Finally, taking
    \[
    \eta=\frac{\delta}{2m\bar R_\alpha},
    \]
    and applying a union bound over all epochs \(r\le \bar R_\alpha\) and all ensemble indices \(j\in[m]\), we conclude that, with probability at least \(1-\delta/2\),
    $\forall r \le \bar R_\alpha$, $\forall j \in [m]$, and $\forall t\in\{\tau_r+1,\dots,\tau_{r+1}\}$, it holds:
    \begin{equation*}
        \norm{\Vb_{t-1}^{-1/2}\Mb_t^{(r,j)}}_2
        \le
        \sqrt{d\log(1+\alpha)+2\log\!\left(\frac{2m\bar R_\alpha}{\delta}\right)} \, .
    \end{equation*}
    Intersecting the events from Steps 1 and 2 and applying the triangle inequality, we conclude that, with probability at least \(1-\delta\),
    \[
    \norm{\Vb_{t-1}^{-1/2}\sbb_{t-1}^j}_2
    \le
    \Gamma_\delta
    \]
    for all \(r\le \bar R_\alpha\), all \(j\in[m]\), and all \(t\in\{\tau_r+1,\dots,\tau_{r+1}\}\), where
    \[
    \Gamma_\delta
    =
    \left(
    \sqrt{d}+\sqrt{2\log\!\left(\frac{4m\bar R_\alpha}{\delta}\right)}
    \right)
    +
    \left(
    \sqrt{d\log(1+\alpha)+2\log\!\left(\frac{4m\bar R_\alpha}{\delta}\right)}
    \right).
    \]
    This concludes the proof.
\end{proof}

We are now ready to combine the preceding ingredients. The fixed-direction epochwise exceedance bound provides control at a single net point within a single epoch. The local Lipschitz lemma shows that nearby directions induce nearby normalized geometries, while the uniform perturbation bound controls the loss incurred when passing from a net point to an arbitrary direction. Taking a union bound over the sphere net and over all epochs then yields a uniform exceedance lower bound simultaneously for all times and all directions.
\UniformExceedanceCountAcrossEpochs*

\begin{proof}[Proof of Lemma~\ref{lem:uniform-exceedance-across-epochs}]
    Fix an epoch \(r\).
    For each nonzero \(\ub\in \RR^d\), define
    \[
    \wb(\ub) := \frac{\Vb_{\tau_r}^{1/2}\ub}{\norm{\ub}_{\Vb_{\tau_r}}}\in \usd.
    \]
    Then for every \(t\in\{\tau_r+1,\dots,\tau_{r+1}\}\) and every \(j\in[m]\), since
    \[
    \frac{\Vb_{t-1}^{1/2}\ub}{\norm{\ub}_{\Vb_{t-1}}}
    =
    \frac{\Vb_{t-1}^{1/2}\Vb_{\tau_r}^{-1/2}\wb(\ub)}
    {\norm{\Vb_{t-1}^{1/2}\Vb_{\tau_r}^{-1/2}\wb(\ub)}_2}
    =
    \frac{\Bb_{r,t}\wb(\ub)}
    {\norm{\Bb_{r,t}\wb(\ub)}_2}
    =
    \phib_{r,t}(\wb(\ub)),
    \]
    we have
    \begin{equation}
    \label{eq:epoch-reparam}
    \frac{\ips{\ub}{\sbb_{t-1}^j}}{\norm{\ub}_{\Vb_{t-1}}}
    =
    \frac{\ip{\Vb_{t-1}^{1/2}\ub}{\Vb_{t-1}^{-1/2}\sbb_{t-1}^j}}{\norm{\ub}_{\Vb_{t-1}}}
    =
    \ip{
    \phib_{r,t}(\wb(\ub))
    }{
    \Vb_{t-1}^{-1/2}\sbb_{t-1}^j
    }.
    \end{equation}
    
    Now let \(\mathcal N_r\) be a deterministic \(\varepsilon\)-net of \(\usd\) with \(\varepsilon := 1/(2\sqrt{1+\alpha}\,\gamma \Gamma_\delta)\). By the standard covering bound, we may choose \(\mathcal N_r\) so that
    \[
    |\mathcal N_r|
    \le
    \left(\frac{3}{\varepsilon}\right)^d
    =
    \left( 6\sqrt{1+\alpha}\,\gamma \Gamma_\delta \right)^d
    =: N_\delta \, .
    \]
    
    For each fixed \(\wb'\in \mathcal N_r\), choose a nonzero vector \(\ub(\wb')\) such that \(\wb(\ub(\wb'))=\wb'\). Such a vector exists because \(\Vb_{\tau_r}\) is positive definite, and hence \(\Vb_{\tau_r}^{1/2}\) is invertible. Therefore, the map \(\ub \mapsto \Vb_{\tau_r}^{1/2}\ub/\norm{\ub}_{\Vb_{\tau_r}}\) is surjective onto \(\usd\).
    
    Applying Lemma~\ref{lem:fixed-direction-epoch} with \(\delta_{r,\wb'} := \delta/(2\bar R_\alpha N_\delta)\), and using condition~\eqref{eq:ER-condition} to ensure that
    \begin{equation*}
        m \ge 40 \log \frac{\kappa_\alpha}{\delta_{r,\wb'}} = 40 \log \frac{2 \kappa_\alpha \bar R_\alpha N_\delta }{\delta} \, ,
    \end{equation*}
    we conclude that, with probability at least \(1-\delta_{r,\wb'}\),
    \[
    \forall t\in\{\tau_r+1,\dots,\tau_{r+1}\}:
    \qquad
    E_{t,m}(\ub(\wb'),2/\gamma)\ge \frac{1}{10}.
    \]
    
    Define the event
    \[
    \mathcal E_{\mathrm{net}}
    :=
    \left\{
    \forall r\le \bar R_\alpha,\ \forall \wb'\in\mathcal N_r,\ \forall t\in\{\tau_r+1,\dots,\tau_{r+1}\}:
    \quad
    E_{t,m}(\ub(\wb'),2/\gamma)\ge \frac{1}{10}
    \right\}.
    \]
    Taking a union bound over all epochs \(r\le \bar R_\alpha\) and all net points \(\wb'\in\mathcal N_r\), we obtain \(\PP(\mathcal E_{\mathrm{net}})\ge 1-\delta/2\).
    
    Next, define
    \[
    \mathcal E_{\mathrm{state}}
    :=
    \left\{
    \forall r\le \bar R_\alpha,\ \forall j\in[m],\ \forall t\in\{\tau_r+1,\dots,\tau_{r+1}\}:
    \quad
    \norm{\Vb_{t-1}^{-1/2}\sbb_{t-1}^j}_2\le \Gamma_\delta
    \right\}.
    \]
    By Lemma~\ref{lem:uniform-state-bound}, we have \(\PP(\mathcal E_{\mathrm{state}})\ge 1-\delta/2\). Hence, by the union bound, \(\PP(\mathcal E_{\mathrm{net}}\cap \mathcal E_{\mathrm{state}})\ge 1-\delta\).
    
    We now work on the event \(\mathcal E_{\mathrm{net}}\cap \mathcal E_{\mathrm{state}}\).
    Fix an epoch \(r\), a round \(t\in\{\tau_r+1,\dots,\tau_{r+1}\}\), an ensemble index \(j\in[m]\), and \(\wb,\wb'\in\usd\). By Lemma~\ref{lem:local-lipschitz},
    \[
    \norm{\phib_{r,t}(\wb)-\phib_{r,t}(\wb')}_2
    \le 2\sqrt{1+\alpha}\,\norm{\wb-\wb'}_2.
    \]
    Since \(\norm{\Vb_{t-1}^{-1/2}\sbb_{t-1}^j}_2\le \Gamma_\delta\) on \(\mathcal E_{\mathrm{state}}\), we obtain
    \begin{align*}
    \left|
    \ip{\phib_{r,t}(\wb)}{\Vb_{t-1}^{-1/2}\sbb_{t-1}^j}
    -
    \ip{\phib_{r,t}(\wb')}{\Vb_{t-1}^{-1/2}\sbb_{t-1}^j}
    \right|
    &=
    \left|
    \ip{\phib_{r,t}(\wb)-\phib_{r,t}(\wb')}{\Vb_{t-1}^{-1/2}\sbb_{t-1}^j}
    \right| \\
    &\le
    \norm{\phib_{r,t}(\wb)-\phib_{r,t}(\wb')}_2\,
    \norm{\Vb_{t-1}^{-1/2}\sbb_{t-1}^j}_2 \\
    &\le
    2\sqrt{1+\alpha}\,\Gamma_\delta \norm{\wb-\wb'}_2.
    \end{align*}
    
    Now fix any \(t\in[T]\) and any \(\ub\in\usd\), and let \(r\) be the epoch containing round \(t\). Choose \(\wb'\in\mathcal N_r\) such that \(\norm{\wb(\ub)-\wb'}_2\le \varepsilon\). Then, for every \(j\in[m]\),
    \[
    \left|
    \ip{\phib_{r,t}(\wb(\ub))}{\Vb_{t-1}^{-1/2}\sbb_{t-1}^j}
    -
    \ip{\phib_{r,t}(\wb')}{\Vb_{t-1}^{-1/2}\sbb_{t-1}^j}
    \right|
    \le
    2\sqrt{1+\alpha}\,\Gamma_\delta \varepsilon
    =
    \frac{1}{\gamma}.
    \]
    Hence, if
    \[
    \ip{\phib_{r,t}(\wb')}{\Vb_{t-1}^{-1/2}\sbb_{t-1}^j}\ge \frac{2}{\gamma},
    \]
    then necessarily
    \[
    \ip{\phib_{r,t}(\wb(\ub))}{\Vb_{t-1}^{-1/2}\sbb_{t-1}^j}\ge \frac{1}{\gamma}.
    \]
    Using~\eqref{eq:epoch-reparam}, this is equivalent to
    \[
    \frac{\ips{\ub(\wb')}{\sbb_{t-1}^j}}{\norm{\ub(\wb')}_{\Vb_{t-1}}}\ge \frac{2}{\gamma}
    \quad\Longrightarrow\quad
    \frac{\ips{\ub}{\sbb_{t-1}^j}}{\norm{\ub}_{\Vb_{t-1}}}\ge \frac{1}{\gamma}.
    \]
    Therefore,
    \[
    \ind\!\left\{
    \frac{\ips{\ub(\wb')}{\sbb_{t-1}^j}}{\norm{\ub(\wb')}_{\Vb_{t-1}}}\ge \frac{2}{\gamma}
    \right\}
    \le
    \ind\!\left\{
    \frac{\ips{\ub}{\sbb_{t-1}^j}}{\norm{\ub}_{\Vb_{t-1}}}\ge \frac{1}{\gamma}
    \right\}.
    \]
    Summing over \(j\in[m]\) and dividing by \(m\), we obtain
    \[
    E_{t,m}(\ub,1/\gamma)\ge E_{t,m}(\ub(\wb'),2/\gamma).
    \]
    Since \(\mathcal E_{\mathrm{net}}\) guarantees that \(E_{t,m}(\ub(\wb'),2/\gamma)\ge 1/10\), it follows that
    \[
    E_{t,m}(\ub,1/\gamma)\ge \frac{1}{10}.
    \]
    As \(t\in[T]\) and \(\ub\in\usd\) were arbitrary, we conclude that
    \[
    \forall t\in[T],\ \forall \ub\in\usd:
    \qquad
    E_{t,m}(\ub,1/\gamma)\ge \frac{1}{10}.
    \]
    Since this holds on \(\mathcal E_{\mathrm{net}}\cap \mathcal E_{\mathrm{state}}\), whose probability is at least \(1-\delta\), this concludes the proof.    
\end{proof}

Now, we show that the sufficient condition in Eq.~\eqref{eq:ER-condition} can be simplified as follows:
\SufficientEnsembleSize*
\begin{proof}[Proof of Corollary~\ref{cor:explicit-ensemble-size}]
    Let
    \[
    z:=\log\frac{\bar R_\alpha}{\delta}.
    \]
    Since \(0<\alpha\le1\), we have \(\sqrt{1+\alpha}\le \sqrt2\), \(\log(1+\alpha)\le \log2\), which implies
    \[
    \kappa_\alpha=\left\lceil250\log(1+\alpha)\right\rceil
    \le
    \left\lceil250\log2\right\rceil.
    \]
    Thus all explicit \(\alpha\)-dependent prefactors in \(\kappa_\alpha\) and \(N_\delta\) are bounded by universal constants.
    
    Recall that
    \[
    N_\delta
    =
    \left(6\sqrt{1+\alpha}\,\gamma\Gamma_\delta\right)^d.
    \]
    Moreover, by the definition of \(\Gamma_\delta\),
    \[
    \Gamma_\delta
    =
    \sqrt d
    +
    \sqrt{2\log\!\left(\frac{4m\bar R_\alpha}{\delta}\right)}
    +
    \sqrt{d\log(1+\alpha)+2\log\!\left(\frac{4m\bar R_\alpha}{\delta}\right)}.
    \]
    Using \(0<\alpha\le1\), we get
    \[
    \Gamma_\delta
    \le
    C_1\sqrt{d+\log m+z}
    \]
    for a universal constant \(C_1>0\). Therefore,
    \begin{equation*} 
        \log N_\delta
        =
        d\log\!\left(6\sqrt{1+\alpha}\,\gamma\Gamma_\delta\right)
        \le
        C_2 d\log\!\left(d+\log m+z\right)
    \end{equation*}
    for another universal constant \(C_2>0\).
    
    Substituting this bound into Eq.~\eqref{eq:ER-condition}, we see that it is enough to ensure
    \begin{equation} \label{eq:m-cond-eq-1}
        m
        \ge
        C_3\left[
        z+d\log\!\left(d+\log m+z\right)
        \right]
    \end{equation}
    for a universal constant \(C_3>0\).
    Now define
    \begin{equation*}
        M:=d\log(d+z)+z.
    \end{equation*}
    We claim that choosing \(m=\lceil C_4 M\rceil\) with a sufficiently large universal constant \(C_4>0\) satisfies the inequality in Eq.~\eqref{eq:m-cond-eq-1}.
    Suppose that $m=\lceil C_4 M\rceil$.
    Indeed, since
    \begin{equation*}
        M =
        d\log(d+z)+z
        \le
        d(d+z)+z
        \le
       (d+z)^2 \, .
    \end{equation*}
    Then, we obtain
    \begin{equation*}
        \log m
        \le
        C_5+\log M
        \le
        C_6\log(d+z)
    \end{equation*}
    for universal constants \(C_5,C_6>0\). Hence
    \begin{equation*}
        d+\log m+z
        \le
        d+z+ C_6 \log(d+z) 
        \le 
        C_7(d+z)
    \end{equation*}
    for a universal constant \(C_7>0\), and therefore
    \begin{equation*}
        d\log\!\left(d+\log m+z\right)
        \le
        d\log\!\left(C_7 (d+z)\right)
        \le
        C_8 d\log(d+z)
    \end{equation*}
    for a universal constant \(C_8>0\). It follows that
    \begin{equation*}
        C_3\left[
        z+d\log\!\left(d+\log m+z\right)
        \right]
        \le
        C_3\left[
        z + C_8 d \log(d+z)
        \right]
        \le
        C_{9}\left[d\log(d+z)+z\right]
        =
        C_{9}M \, .
    \end{equation*}
    Choosing \(C_4\ge C_{9}\) proves that \(m=\lceil C_4M\rceil\) implies \eqref{eq:ER-condition}.
\end{proof}

\subsection{Proof of Theorem~\ref{thm:regret bound for infinite arm set}}
In the previous section, we established that our epoch-refresh technique yields time-uniform exceedance control with a substantially smaller ensemble size. Accordingly, Theorem~\ref{thm:regret bound for infinite arm set} follows by replacing the uniform exceedance control component in the proof of~\citet{janz2026sharp} with our Lemma~\ref{lem:uniform-exceedance-across-epochs}. For completeness, we provide the full proof below.

\RegretBoundForInfiniteArmSet*

\begin{proof}[Proof of Theorem~\ref{thm:regret bound for infinite arm set}]
    We begin by recalling several standard lemmas used in the regret analysis of linear bandit algorithms. The first is the well-known concentration result for the ridge estimator.    
    \begin{lemma}[Theorem 2 of~\citet{abbasi2011improved}] \label{aux_lem:confidence ellipsoid}
        With probability at least $1 - \delta$, for all $t \ge 0$, $\thetab^*$ lies in the set
        \begin{equation*}
            \left\{ \thetab \in \RR^d : \| \hat{\thetab}_t - \thetab \|_{\Vb_t} \le \sqrt{d \log \left(1 + \frac{t}{d \lambda}\right) + 2 \log \frac{1}{\delta}} + \sqrt{\lambda}S \right\} \, .  
        \end{equation*}
    \end{lemma}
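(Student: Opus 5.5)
The plan is the standard Abbasi-Yadkori argument: an algebraic decomposition of the ridge error, followed by a time-uniform self-normalized martingale bound proved by the method of mixtures.

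\textbf{Step 1: decomposition.} Write $\Sb_t:=\sum_{s=1}^t \xb_s\eta_s$ and $\bar\Vb_t:=\sum_{s=1}^t\xb_s\xb_s^\top$, so that $\Vb_t=\lambda\Ib_d+\bar\Vb_t$. Since $y_s=\ips{\xb_s}{\thetab^*}+\eta_s$, we get $\hat\thetab_t-\thetab^*=\Vb_t^{-1}(\Sb_t-\lambda\thetab^*)$. Hence
\[
\|\hat\thetab_t-\thetab^*\|_{\Vb_t}\le \|\Sb_t\|_{\Vb_t^{-1}}+\lambda\|\thetab^*\|_{\Vb_t^{-1}}\le \|\Sb_t\|_{\Vb_t^{-1}}+\sqrt{\lambda}\,S .
\]
The last inequality uses $\Vb_t^{-1}\preceq\lambda^{-1}\Ib_d$ together with Assumption~\ref{assm:boundedness}. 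It therefore suffices to show that, with probability at least $1-\delta$, simultaneously for all $t\ge0$,
\[
\|\Sb_t\|_{\Vb_t^{-1}}^2\le 2\log\!\left(\detm(\Vb_t)^{1/2}\detm(\lambda\Ib_d)^{-1/2}/\delta\right).
\]

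\textbf{Step 2: filtration.} We need a filtration in which $\xb_t$ is predictable and $\eta_t$ is conditionally $1$-sub-Gaussian. We take $\mathcal F_{t}$ generated by $\Hcal_t$ together with the algorithm's internal randomness used up to the choice of $\xb_t$, namely the $\gb^j$, the $\xi_s^j$ for $s<t$, and $J_s$ for $s\le t$. Because this internal randomness is drawn independently of the reward noise, the sub-Gaussian assumption stated relative to $\Hcal_t$ carries over to $\mathcal F_t$. I expect justifying this step to be the only point that needs care beyond the textbook argument.

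\textbf{Step 3: exponential supermartingale.} For fixed $\zb\in\RR^d$, define
\[
M_t(\zb)=\exp\!\left(\ips{\zb}{\Sb_t}-\tfrac12\|\zb\|_{\bar\Vb_t}^2\right).
\]
Conditional sub-Gaussianity gives $\EE[\exp(\ips{\zb}{\xb_t}\eta_t-\tfrac12\ips{\zb}{\xb_t}^2)\mid\mathcal F_t]\le1$, so $M_t(\zb)$ is a nonnegative supermartingale with $M_0=1$.

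\textbf{Step 4: Gaussian mixture.} Integrate $\zb$ against $\Ncal(\zero_d,\lambda^{-1}\Ib_d)$; by Fubini the mixture $\bar M_t$ is again a nonnegative supermartingale with $\bar M_0=1$. Completing the square gives the closed form
\[
\bar M_t=\left(\frac{\detm(\lambda\Ib_d)}{\detm(\Vb_t)}\right)^{1/2}\exp\!\left(\tfrac12\|\Sb_t\|_{\Vb_t^{-1}}^2\right).
\]

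\textbf{Step 5: time-uniform bound and determinant control.} Ville's maximal inequality gives $\PP(\exists t:\bar M_t\ge1/\delta)\le\delta$; equivalently, one can use a stopping-time argument with Fatou's lemma. On the complement, rearranging the closed form yields exactly the bound displayed at the end of Step 1, uniformly in $t$. Finally, Lemma~\ref{aux_lem:det-tr inequality} together with $\|\xb_s\|_2\le1$ gives $\log\!\left(\detm(\Vb_t)/\detm(\lambda\Ib_d)\right)\le d\log(1+t/(d\lambda))$. Substituting into Step 1 gives the claimed radius $\sqrt{d\log(1+t/(d\lambda))+2\log(1/\delta)}+\sqrt\lambda S$, for all $t\ge0$ simultaneously.
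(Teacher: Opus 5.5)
Your proof is correct and is essentially the argument behind the cited result. The paper gives no proof of its own and simply invokes Theorem~2 of \citet{abbasi2011improved}, which is proved by your ridge-error decomposition, the method-of-mixtures self-normalized bound (recorded in the paper as Lemma~\ref{aux_lem:self-normalized inequality}), and the determinant--trace inequality (Lemma~\ref{aux_lem:det-tr inequality}). One remark: the filtration enlargement in Step~2, which you flag as delicate, is unnecessary, because $\xb_t$ is already $\Hcal_t$-measurable and $\eta_t$ is $\Hcal_{t+1}$-measurable, so you can run the supermartingale argument directly on $\{\Hcal_t\}$. Doing so also avoids the extra assumption your justification implicitly needs, namely that the noise is conditionally independent of the learner's internal randomness given $\Hcal_t$.
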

    
    Next, we recall the general regret analysis framework of~\citet{janz2024ensemble} for linear bandit algorithms that select arms greedily with respect to an estimated parameter.
    Although the original result is stated for a fixed arm set, its proof only requires that the available arm set at each round be measurable with respect to the history before the arm is selected. Therefore, the same argument applies to time-varying arm sets under this measurability condition. 
    
    \begin{lemma}[Adapted from Theorem 2 of~\citet{janz2024ensemble}] \label{aux_lem:master regret}
        Fix $\delta \in (0,1]$, $\lambda \ge 1$, and $T \in \NN$.
        Let $\{\Fcal_t\}_{t\ge 0}$ be a filtration, and let $\{\Acal_t\}_{t\ge 0}$ be a filtration on the same probability space such that
        \begin{equation*}
            \Fcal_{t-1} \subset \Acal_{t-1} \subset \Fcal_t
            \qquad \text{for all } t \ge 1 .
        \end{equation*}
        Assume that, for each $t \ge 1$, the sigma-field $\Fcal_t$ makes $\Xcal_1, \xb_1, y_1, \ldots, \Xcal_t, \xb_t, y_t$ as well as all learner randomizations used up to round $t$ measurable.
    
        Let $\{\Xcal_t\}_{t \ge 1}$ be a sequence of closed subsets of the unit ball $\Bcal_2^d$, each $\Acal_{t-1}$-measurable.
        Let $\{\thetab_t\}_{t\ge 1}$ be an $\Fcal_t$-adapted $\RR^d$-valued sequence, and suppose that
        \begin{equation*}
            \forall t \ge 1,
            \qquad
            \xb_t \in \argmax_{\xb \in \Xcal_t} \ips{\xb}{\thetab_t} \, .
        \end{equation*}
        For each $t \ge 1$, let
        \begin{equation*}
            \Vb_{t-1}
            :=
            \lambda \Ib_d + \sum_{s=1}^{t-1} \xb_s \xb_s^\top ,
        \end{equation*}
        and let $\hat{\thetab}_{t-1}$ be the usual ridge regression estimate of $\thetab^*$, which we assume to be $\Fcal_{t-1}$-measurable.
    
        Let $\{b_{t-1}\}_{t\ge 1}$ be a nonnegative $\Fcal_t$-predictable sequence such that:
        \begin{enumerate}
            \item the event $\mathcal E=\{\forall t\le T,\ \|\thetab_t-\hat{\thetab}_{t-1}\|_{\Vb_{t-1}}\le b_{t-1}\}$
            satisfies \(\mathbb P(\mathcal E^\complement)\le \delta\); and
            \item the event $\mathcal E_*=\{\forall t\le T,\ \|\thetab^*-\hat{\thetab}_{t-1}\|_{\Vb_{t-1}}\le b_{t-1}\}$
            satisfies \(\mathbb P(\mathcal E_*^\complement)\le \delta\) \, .
        \end{enumerate}
    
        For each $t \ge 1$, let
        \begin{equation*}
            \xb_t^*
            \in
            \argmax_{\xb \in \Xcal_t} \ips{\xb}{\thetab^*}
        \end{equation*}
        be an $\Acal_{t-1}$-measurable choice, and define the $\Acal_{t-1}$-conditional probability of optimism by
        \begin{equation*}
            p_{t-1}
            :=
            \PP\!\left(
                \ips{\xb_t}{\thetab_t}
                \ge
                \ips{\xb_t^*}{\thetab^*}
                \,\middle|\,
                \Acal_{t-1}
            \right) \, .
        \end{equation*}
        
        Then, on an event $\bar{\Ecal} \subset \Ecal \cap \Ecal_*$ with $\PP(\bar{\Ecal}) \ge 1 - 3\delta$,
        for all $\tau \in [T]$, the $\tau$-step regret
        \begin{equation*}
            \regret_\tau
            :=
            \sum_{t=1}^\tau \ips{\xb_t^* - \xb_t}{\thetab^*}
        \end{equation*}
        satisfies
        \begin{equation*}
            \regret_\tau
            \le
            2\max_{i\in[\tau]}\frac{b_{i-1}}{p_{i-1}}
            \left(
                2\sqrt{2d\tau\log\!\left(1+\frac{\tau}{d\lambda}\right)}
                +
                \sqrt{
                    2(4\tau/\lambda+1)
                    \log\!\left(
                        \frac{\sqrt{4\tau/\lambda+1}}{\delta}
                    \right)
                }
            \right) \, .
        \end{equation*}
    \end{lemma}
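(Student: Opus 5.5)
The plan is to follow the Abeille--Lazaric optimism decomposition in the form used by \citet{janz2024ensemble}. The only change is that $\Xcal_t$ is now $\Acal_{t-1}$-measurable, so every conditional expectation is taken given $\Acal_{t-1}$ instead of $\Fcal_{t-1}$.

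\textbf{Step 1: reduce to bounded iterates.} First we make the iterates bounded almost surely. Replace $\thetab_t$ by its projection $\tilde\thetab_t$ onto the ellipsoid $\{\thetab:\|\thetab-\hat\thetab_{t-1}\|_{\Vb_{t-1}}\le b_{t-1}\}$, and let $\tilde\xb_t$ be the corresponding greedy arm. On $\Ecal$ the two sequences coincide. It therefore suffices to bound the regret of the surrogate sequence on $\Ecal\cap\Ecal_*$, where the surrogate now satisfies the ellipsoid constraint surely.

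Write $J_t(\thetab):=\max_{\xb\in\Xcal_t}\ips{\xb}{\thetab}$. This function is convex, and any greedy arm $\xb(\thetab)$ is a subgradient at $\thetab$. Split the per-round regret as
\[
\ips{\xb_t^*-\xb_t}{\thetab^*}=\bigl(J_t(\thetab^*)-J_t(\thetab_t)\bigr)+\ips{\xb_t}{\thetab_t-\thetab^*}.
\]
On $\Ecal\cap\Ecal_*$ the second term is at most $2b_{t-1}\|\xb_t\|_{\Vb_{t-1}^{-1}}$.

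\textbf{Step 2: the optimism term.} Let $\thetab'_t$ be an independent copy of $\thetab_t$ given $\Acal_{t-1}$, and let $\xb'_t$ be its greedy arm. On the event $\{J_t(\thetab'_t)\ge J_t(\thetab^*)\}$, which has conditional probability $p_{t-1}$, convexity gives
\[
J_t(\thetab^*)-J_t(\thetab_t)\le J_t(\thetab'_t)-J_t(\thetab_t)\le \ips{\xb'_t}{\thetab'_t-\thetab_t}\le 2b_{t-1}\|\xb'_t\|_{\Vb_{t-1}^{-1}}.
\]
Taking conditional expectations over both copies and dividing by $p_{t-1}$ yields
\[
\EE[J_t(\thetab^*)-J_t(\thetab_t)\mid\Acal_{t-1}]\le \frac{2b_{t-1}}{p_{t-1}}\,\EE[\|\xb_t\|_{\Vb_{t-1}^{-1}}\mid\Acal_{t-1}],
\]
because $\xb'_t$ has the same conditional law as $\xb_t$. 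Here we use that $\Vb_{t-1}$ and $b_{t-1}$ are $\Acal_{t-1}$-measurable, which holds by predictability together with $\Fcal_{t-1}\subset\Acal_{t-1}$. The right-hand side is bounded by $2\max_i(b_{i-1}/p_{i-1})$ times the conditional mean of $\|\xb_t\|_{\Vb_{t-1}^{-1}}$. Since $p_{t-1}\le 1$, the term from Step 1 obeys the same factor.

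\textbf{Step 3: summation.} Sum over $t\le\tau$. The realized quantity $\sum_t\|\xb_t\|_{\Vb_{t-1}^{-1}}$ is controlled by Cauchy--Schwarz and the elliptical potential lemma. Because $\lambda\ge1$ gives $\|\xb_t\|^2_{\Vb_{t-1}^{-1}}\le1$, this yields $\sqrt{2d\tau\log(1+\tau/(d\lambda))}$; it appears twice, once from each term, which explains the factor $2$ in the bound.

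What remains is to replace conditional expectations by realized values. This concerns the differences $\EE[\cdot\mid\Acal_{t-1}]-(\cdot)$ for both $\|\xb_t\|_{\Vb_{t-1}^{-1}}$ and the optimism gap. After dividing by the $\Acal_{t-1}$-measurable factor $b_{t-1}/p_{t-1}$, these form a martingale difference sequence with increments bounded by a constant multiple of $1/\sqrt\lambda$. I will apply a time-uniform self-normalized (Laplace-method) Azuma bound, which produces the term $\sqrt{2(4\tau/\lambda+1)\log(\sqrt{4\tau/\lambda+1}/\delta)}$ uniformly in $\tau$ at cost $\delta$. Intersecting with $\Ecal\cap\Ecal_*$ gives $\bar\Ecal$ with probability at least $1-3\delta$.

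\textbf{Main obstacle.} The delicate step is this last martingale step. The weight $b_{t-1}/p_{t-1}$ is random, so I pull out $\max_{i\le\tau}b_{i-1}/p_{i-1}$ only after bounding the normalized martingale, using the nonnegativity of the conditional means. Throughout, one must check that the projection in Step 1 keeps the increments bounded and leaves the optimism probabilities unchanged on $\Ecal$.
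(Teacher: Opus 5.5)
Your overall route (the split with $J_t$, an independent copy given $\Acal_{t-1}$, the elliptical potential lemma, and a time-uniform Azuma bound) is the Janz et al.\ argument the paper invokes by replacing $J$ with $J_t$. The gap is in Step~1.

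Step~2 needs the copy $\thetab'_t$ to satisfy two things at once. First, it must lie in $\Theta_{t-1}:=\{\thetab:\|\thetab-\hat{\thetab}_{t-1}\|_{\Vb_{t-1}}\le b_{t-1}\}$ almost surely under the conditional law, to get $\ips{\xb'_t}{\thetab'_t-\thetab_t}\le 2b_{t-1}\|\xb'_t\|_{\Vb_{t-1}^{-1}}$. Second, it must be optimistic with conditional probability at least $p_{t-1}$. The event $\Ecal$ constrains only the realized $\thetab_t$, not its conditional law given $\Acal_{t-1}$, so the raw copy can leave $\Theta_{t-1}$ with positive conditional probability even on $\Ecal$. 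Projecting repairs the first requirement but breaks the second.

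Your claim that projection ``leaves the optimism probabilities unchanged on $\Ecal$'' is false. $p_{t-1}$ is a conditional probability that sees the off-ellipsoid mass even when the realized iterate is inside, and projecting can destroy optimism. For example, take $\lambda=1$ and $t=1$ (so $\Vb_0=\Ib$, $\hat{\thetab}_0=0$), $b_0=1$, $\Xcal_1=\{e_1\}$ and $\thetab^*=(0.9,0)$. The point $(1,10)$ is optimistic, but its projection $(1,10)/\sqrt{101}$ has first coordinate about $0.1$ and is not. As written, your argument proves the bound with the surrogate's optimism probability $\tilde p_{t-1}$, which can be much smaller than $p_{t-1}$.

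The missing idea is a surrogate that can only gain optimism. On the $\Fcal_{t-1}$-measurable event $\{\thetab^*\in\Theta_{t-1}\}$, set $\tilde{\thetab}_t:=\thetab_t$ if $\thetab_t\in\Theta_{t-1}$ and $\tilde{\thetab}_t:=\thetab^*$ otherwise, with greedy arm $\xb_t^*$. Then:
\begin{itemize}
\item $\tilde{\thetab}_t\in\Theta_{t-1}$ surely;
\item $\tilde{\thetab}_t=\thetab_t$ on $\Ecal$;
\item $\PP(J_t(\tilde{\thetab}_t)\ge J_t(\thetab^*)\mid\Acal_{t-1})\ge p_{t-1}$, because the replaced samples are trivially optimistic.
\end{itemize}
With this change the rest of your plan goes through.

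Two smaller points. First, you do not need a martingale for the optimism gap. For every realization $\tilde{\thetab}_t\in\Theta_{t-1}$ and every optimistic $\thetab'\in\Theta_{t-1}$, $J_t(\thetab^*)-J_t(\tilde{\thetab}_t)\le 2b_{t-1}\|\xb(\thetab')\|_{\Vb_{t-1}^{-1}}$. Averaging over the copy alone gives the pointwise bound $J_t(\thetab^*)-J_t(\tilde{\thetab}_t)\le (2b_{t-1}/p_{t-1})\,\EE[\|\tilde{\xb}_t\|_{\Vb_{t-1}^{-1}}\mid\Acal_{t-1}]$, so only the $\|\xb_t\|_{\Vb_{t-1}^{-1}}$ martingale remains. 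Second, if you keep the two-martingale version, work with the positive part $(J_t(\thetab^*)-J_t(\tilde{\thetab}_t))_+$. Pulling out $\max_i b_{i-1}/p_{i-1}$ requires nonnegative summands, and the raw optimism gap can be negative.
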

    Note that Lemma~\ref{aux_lem:master regret} follows by a direct extension of Theorem~2 of~\citet{janz2024ensemble}. Indeed, suppose that each $\Xcal_t \subset \Bcal_2^d$ is nonempty, compact, and measurable before the arm selection step at round $t$. Then the fixed support function
    $J(\thetab)=\max_{\xb\in\Xcal}\ips{\xb}{\thetab}$ used in~\citep{janz2024ensemble} can be replaced by the time-dependent support function $J_t(\thetab)=\max_{\xb\in\Xcal_t}\ips{\xb}{\thetab}$, and the optimistic region can be replaced by $\Theta^{\mathrm{OPT}}_t = \{\thetab : J_t(\thetab)\ge J_t(\thetab^*)\}$.
    The proof is otherwise unchanged, since it only uses, round by round, the fact that $\xb_t \in \argmax_{\xb\in\Xcal_t}\ips{\xb}{\thetab_t}$, which implies $\xb_t\in\partial J_t(\thetab_t)$, together with the confidence events $\thetab_t,\thetab^*\in\Theta_{t-1}$ and the elliptical potential bound for the selected arms. Hence, the same regret bound holds for the time-varying regret 
    $\displaystyle \regret_\tau = \sum_{s=1}^{\tau} \left( \max_{\xb\in\Xcal_s}\ips{\xb}{\thetab^*}
            -
            \ips{\xb_s}{\thetab^*}
        \right)
    $,
    with $p_{t-1}$ defined using $\Theta^{\mathrm{OPT}}_t$ instead of a fixed $\Theta^{\mathrm{OPT}}$.

    In the remaining part of the proof, we first set up the notation and verify the conditions required to apply Lemma~\ref{aux_lem:master regret}.
    For \(t\in[T]\), define
    \begin{equation*}
        b_{t-1} := \gamma \Gamma_\delta \beta_{t-1} \, .    
    \end{equation*}
    Also, we define
    \begin{equation*}
        \mathcal E
        :=
        \left\{
        \forall t\le T:\ \norm{\thetab_t-\hat{\thetab}_{t-1}}_{\Vb_{t-1}}\le b_{t-1}
        \right\} \, .    
    \end{equation*}

    Since
    \[
    \thetab_t=\hat{\thetab}_{t-1}+\gamma\beta_{t-1}\Vb_{t-1}^{-1}\sbb_{t-1}^{J_t},
    \]
    we have
    \[
    \norm{\thetab_t-\hat{\thetab}_{t-1}}_{\Vb_{t-1}}
    =
    \gamma\beta_{t-1}\norm{\Vb_{t-1}^{-1/2}\sbb_{t-1}^{J_t}}_2.
    \]
    By Lemma~\ref{lem:uniform-state-bound}, with probability at least \(1-\delta\),
    \[
    \forall t\le T,\ \forall j\in[m]:
    \qquad
    \norm{\Vb_{t-1}^{-1/2}\sbb_{t-1}^j}_2\le \Gamma_\delta.
    \]
    Hence \(\PP(\mathcal E)\ge 1-\delta\).
    
    Next, let
    \[
    \mathcal E_*
    :=
    \left\{
    \forall t\le T:\ \norm{\thetab^*-\hat{\thetab}_{t-1}}_{\Vb_{t-1}}\le b_{t-1}
    \right\}.
    \]
    Since $\Gamma_\delta \ge \sqrt{d}$, we have $\gamma \Gamma_{\delta} \ge 1$, we have $b_{t-1} \ge \beta_{t-1}$.
    By Lemma~\ref{aux_lem:confidence ellipsoid} and our choice of \(\beta_t\), we have $\PP(\mathcal E^*)\ge 1-\delta$.
    
    We now define filtrations adapted to the epoch-refresh construction. For each epoch-start time \(\tau_r\), let \(\gb_{\tau_r}^1,\ldots,\gb_{\tau_r}^m\) denote the refreshed Gaussian vectors sampled at time \(\tau_r\). 
    Define
    \begin{equation} \label{eq:filtration F}
        \Fcal_t :=
        \sigma( \{\gb_{\tau_r}^1,\ldots,\gb_{\tau_r}^m\} : \tau_r \le t)
        \vee
        \sigma( \{\Xcal_s, \xb_s,y_s,J_s\}: s \le t)
        \vee
        \sigma(\{\xi_s^1,\ldots,\xi_s^m\} : s < t \bigr),    
    \end{equation}
    and
    \begin{equation} \label{eq:filtration A}
        \Acal_t := \sigma( \{\sbb_t^1, \ldots, \sbb_t^m \}) \vee \sigma(\Xcal_{t+1}) \vee \Fcal_t
    \end{equation}
    where \(\vee\) denotes the join of \(\sigma\)-algebras. By construction, \(\hat{\thetab}_t\), \(\Vb_t\), and \(\beta_t\) are \(\Fcal_t\)-measurable, while \(\{\sbb_t^j\}_{j=1}^m\) are \(\Acal_t\)-measurable. Moreover, conditional on \(\Acal_{t-1}\), the index \(J_t\) is uniform on \([m]\).
        
    For each \(t\in[T]\), define the \(\Acal_{t-1}\)-conditional probability of optimism by
    \[
    p_{t-1}
    :=
    \PP\!\left(
    \ip{\xb_t}{\thetab_t}\ge \ip{\xb_t^*}{\thetab^*}
    \ \middle|\ \Acal_{t-1}
    \right),
    \]
    where
    \[
    \xb_t^* \in \argmax_{\xb\in\Xcal_t}\ip{\xb}{\thetab^*}.
    \]
    
    Fix \(t\in[T]\). If \(\xb_t^*=\zero_d\), then trivially
    \[
    \ip{\xb_t}{\thetab_t}
    =
    \max_{\xb\in\Xcal_t}\ip{\xb}{\thetab_t}
    \ge 0
    =
    \ip{\xb^*_t}{\thetab^*},
    \]
    and hence \(p_{t-1}=1\).
    
    Assume henceforth that \(\xb_t^*\neq \zero_d\), and define
    \[
    \ub_t^*:=\Vb_{t-1}^{-1}\xb_t^*.
    \]
    Then \(\norm{\ub_t^*}_{\Vb_{t-1}}=\norm{\xb_t^*}_{\Vb_{t-1}^{-1}}\). 
    On the other hand, by Lemma~\ref{aux_lem:confidence ellipsoid}, we have
    \[
    \ips{\thetab^*-\hat{\thetab}_{t-1}}{\xb^*_t}
    \le
    \norm{\thetab^*-\hat{\thetab}_{t-1}}_{\Vb_{t-1}}
    \norm{\xb^*_t}_{\Vb_{t-1}^{-1}}
    \le
    \beta_{t-1} \norm{\xb^*_t}_{\Vb_{t-1}^{-1}}.
    \]
    Therefore, if
    \[
    \ips{\thetab_t-\hat{\thetab}_{t-1}}{\xb^*_t}
    \ge
    \beta_{t-1}\norm{\xb^*_t}_{\Vb_{t-1}^{-1}},
    \]
    then
    \begin{align*}
        \ips{\thetab_t}{\xb^*_t}
        & = 
        \ips{\hat{\thetab}_{t-1}}{\xb^*_t}
        +
        \ips{\thetab_t-\hat{\thetab}_{t-1}}{\xb^*_t}
        \\
        & \ge
        \ips{\hat{\thetab}_{t-1}}{\xb^*_t}
        + 
        \beta_{t-1}\norm{\xb^*_t}_{\Vb_{t-1}^{-1}}
        \\
        & \ge
        \ips{\hat{\thetab}_{t-1}}{\xb^*_t}
        + 
        \ips{\thetab^*-\hat{\thetab}_{t-1}}{\xb^*_t} 
        = 
        \ips{\thetab^*}{\xb^*_t} \, . 
    \end{align*}
    Since \(\xb_t\in\argmax_{\xb\in\Xcal_t}\ip{\xb}{\thetab_t}\), this implies
    \[
    \ips{\xb_t}{\thetab_t}\ge \ips{\xb^*_t}{\thetab^*}.
    \]
    
    Now,
    \[
    \ips{\thetab_t-\hat{\thetab}_{t-1}}{\xb^*_t}
    =
    \gamma\beta_{t-1}\ip{\Vb_{t-1}^{-1}\sbb_{t-1}^{J_t}}{\xb^*_t}
    =
    \gamma\beta_{t-1}\ip{\sbb_{t-1}^{J_t}}{\Vb_{t-1}^{-1}\xb^*_t}
    =
    \gamma\beta_{t-1}\ips{\sbb_{t-1}^{J_t}}{\ub^*_t} \, .
    \]
    Hence the sufficient condition
    \[
        \frac{\ips{\sbb_{t-1}^{J_t}}{\ub^*_t}}{\norm{\ub_t^*}_{\Vb_{t-1}}}
        \ge \frac{1}{\gamma}
        \]
        implies
        \[
        \ips{\thetab_t-\hat{\thetab}_{t-1}}{\xb^*_t}
        \ge
        \beta_{t-1}\norm{\ub^*_t}_{\Vb_{t-1}}
        =
        \beta_{t-1}\norm{\xb^*_t}_{\Vb_{t-1}^{-1}},
    \]
    and therefore implies optimism:
    \[
    \frac{\ips{\sbb_{t-1}^{J_t}}{\ub^*_t}}{\norm{\ub^*_t}_{\Vb_{t-1}}}
    \ge \frac{1}{\gamma}
    \quad\Longrightarrow\quad
    \ips{\xb_t}{\thetab_t}\ge \ips{\xb^*_t}{\thetab^*} \, .
    \]

    Since \(\xb_t^*\neq \zero_d\), we have \(\ub_t^*\neq \zero_d\). Define
    $
        \bar{\ub}_t^*
        :=
        \frac{\ub_t^*}{\|\ub_t^*\|_2}
        \in \mathbb S^{d-1} \, .
    $
    Moreover, since the self-normalized ratio is invariant under positive rescaling of
    \(\ub_t^*\), we have
    \[
        \frac{\ips{\sbb_{t-1}^{J_t}}{\ub_t^*}}
        {\|\ub_t^*\|_{\Vb_{t-1}}}
        =
        \frac{\ips{\sbb_{t-1}^{J_t}}{\bar{\ub}_t^*}}
        {\|\bar{\ub}_t^*\|_{\Vb_{t-1}}} \, .
    \]
    Therefore, using the conditional uniformity of \(J_t\) given \(\Acal_{t-1}\),
    \begin{align*}
        p_{t-1}
        &:=
        \PP\left(
            \ips{\xb_t}{\thetab_t}
            \ge
            \ips{\xb_t^*}{\thetab^*}
            \,\middle|\,\Acal_{t-1}
        \right)
        \\
        &\ge
        \PP\left(
            \ips{\sbb_{t-1}^{J_t}}{\ub_t^*}/\|\ub_t^*\|_{\Vb_{t-1}}
            \ge
            1/\gamma
            \,\middle|\,\Acal_{t-1}
        \right)
        \\
        & =
        \frac{1}{m}
        \sum_{j=1}^m
        \ind\left\{
            \frac{\ips{\sbb_{t-1}^{j}}{\bar{\ub}_t^*}}
            {\|\bar{\ub}_t^*\|_{\Vb_{t-1}}}
            \ge
            \frac{1}{\gamma}
        \right\}
        \\
        &=
        E_{t,m}\left(\bar{\ub}_t^*,1/\gamma\right) \, .
    \end{align*}

    By Lemma~\ref{lem:uniform-exceedance-across-epochs} and Corollary~\ref{cor:explicit-ensemble-size}, our choice of the ensemble-size condition gives, with probability at least \(1-\delta\), 
    \begin{equation*}
            \displaystyle \min_{t\in[T]}\ \inf_{\ub\in \usd} E_{t,m}(\ub,1/\gamma)\ge \frac{1}{10} \, ,
    \end{equation*}
    which implies    
    \[
    \forall t\in[T]:
    \qquad
    p_{t-1}\ge \frac{1}{10} \, .
    \]
    
    We have therefore verified the key ingredients for Lemma~\ref{aux_lem:master regret}:
    the sampled parameter lies in a \(b_{t-1}\)-ellipsoid around \(\hat{\thetab}_{t-1}\)
    on an event of probability at least \(1-\delta\), the true parameter lies in a
    \(b_{t-1}\)-ellipsoid around \(\hat{\thetab}_{t-1}\) on an event of probability at least
    \(1-\delta\), and, by Lemma~\ref{lem:uniform-exceedance-across-epochs} together with
    Corollary~\ref{cor:explicit-ensemble-size}, the conditional optimism probabilities satisfy
    \(p_{t-1}\ge 1/10\) uniformly over \(t\) on an event of probability at least \(1-\delta\).
    Applying Lemma~\ref{aux_lem:master regret} and intersecting with the uniform optimism event, the following final regret bound holds with probability at least \(1-4\delta\):
    \begin{align*}
        \regret_T
        &\le
        2\max_{t\in[T]}\frac{b_{t-1}}{p_{t-1}}
        \left(
        2\sqrt{2dT\log\!\left(1+\frac{T}{d\lambda}\right)}
        +
        \sqrt{
        2(4T/\lambda+1)
        \log\!\left(\frac{\sqrt{4T/\lambda+1}}{\delta}\right)
        }
        \right) \\
        &\le
        \tilde{\Ocal}\!\left(b_T\sqrt{dT}\right).
    \end{align*}
    Finally, since \( b_T=\gamma\Gamma_\delta\beta_T\), \(\Gamma_\delta=\tilde{\Ocal}(\sqrt d)\), and \(\beta_T=\tilde{\Ocal}(\sqrt d)\), we conclude that
    \[
    \regret_T=\tilde{\Ocal}(d^{3/2}\sqrt T).
    \]
\end{proof}

\section{Regret Analysis for Finite-Arm-Dominated Regime} \label{appx:proof of finite arm set}
In this section, we provide the regret analysis of our algorithm for finite-arm-dominated regime.
First, we revisit the regret analyses of previous randomized linear bandit algorithms in the finite-arm-dominated regime. Let $\tilde{\thetab}_t$ denote the perturbed estimator used for exploration at round $t$.
In TS~\citep{abeille2017Linear}, $\tilde{\thetab}_t$ is given by
\begin{equation*}
    \tilde{\thetab}_t
    =
    \hat{\thetab}_{t-1}
    +
    \beta_{t-1}\Vb_{t-1}^{-1/2}\gb_t,
    \qquad
    \gb_t \sim \Ncal(\zero_d, \Ib_d).
\end{equation*}
Then, for each candidate arm $\xb \in \Xcal_t$ at round $t$, we have
\begin{equation*}
    \xb^\top (\tilde{\thetab}_t - \hat{\thetab}_{t-1})
    =
    \beta_{t-1}\xb^\top \Vb_{t-1}^{-1/2} \gb_t
    \sim
    \Ncal\!\left(0, \beta_{t-1}^2 \| \xb \|_{\Vb_{t-1}^{-1}}^2 \right).
\end{equation*}
Therefore, for each fixed arm $\xb$, the quantity
$
|\xb^\top (\tilde{\thetab}_t - \hat{\thetab}_{t-1})|
$
can be controlled by a Gaussian tail bound at the scale
$
\beta_{t-1}\|\xb\|_{\Vb_{t-1}^{-1}}.
$
On the other hand, in PHE~\citep{kveton2020perturbed}, $\tilde{\thetab}_t$ is given by
\begin{equation*}
    \tilde{\thetab}_t
    :=
    \Vb_{t-1}^{-1}
    \left[
        \sum_{s=1}^{t-1} \xb_s (y_s + Z_{t,s})
    \right],
    \qquad
    \{Z_{t,s}\}_{s=1}^{t-1}
    \overset{\textit{i.i.d.}}{\sim}
    \Ncal(0,1),
\end{equation*}
so that
\begin{equation*}
    \xb^\top (\tilde{\thetab}_t - \hat{\thetab}_{t-1})
    =
    \xb^\top \Vb_{t-1}^{-1}
    \left[
        \sum_{s=1}^{t-1} \xb_s Z_{t,s}
    \right].
\end{equation*}
Since the perturbations $\{Z_{t,s}\}_{s=1}^{t-1}$ are sampled freshly at round $t$, conditional on the history up to round $t-1$, the above quantity is again a centered Gaussian random variable with variance of order $\|\xb\|_{\Vb_{t-1}^{-1}}^2$. Hence, for each candidate arm $\xb \in \Xcal_t$, one obtains an arm-wise concentration bound at the same self-normalized scale.

The key point is that, in both TS and PHE, the randomized score of each arm admits a clean \emph{arm-wise} concentration bound conditional on the past. More precisely, for every $\xb \in \Xcal_t$,
\[
    |\xb^\top (\tilde{\thetab}_t - \hat{\thetab}_{t-1})|
    \lesssim
    \beta_{t-1}\|\xb\|_{\Vb_{t-1}^{-1}}
\]
holds with high probability for a fixed arm, and thus, by taking a union bound over the $K$ candidate arms, one obtains
\[
    \max_{\xb \in \Xcal_t}
    |\xb^\top (\tilde{\thetab}_t - \hat{\thetab}_{t-1})|
    \lesssim
    \beta_{t-1}\sqrt{\log K}\,
    \max_{\xb \in \Xcal_t}\|\xb\|_{\Vb_{t-1}^{-1}}.
\]
This is precisely the mechanism by which the finite-arm assumption yields a $\sqrt{\log K}$ factor instead of a $\sqrt d$ factor in the regret analysis. In other words, the $\log K$ improvement becomes available because the perturbation can be controlled separately for each of the $K$ arms, and the only price for uniformity over the current arm set is a union bound over $K$ events.

By contrast, this argument does not directly apply to ensemble sampling. In ensemble sampling, \textit{the perturbation used at round $t$ is not freshly generated from scratch, but is accumulated along the realized arm sequence.} More specifically, for each ensemble member $j$, the perturbation takes the form
\begin{equation*}
    \sbb_{t-1}^j
    =
    \sqrt{\lambda}\,\zetab^j
    +
    \sum_{s=1}^{t-1} \xb_s \xi_s^j \, ,
\end{equation*}
and the perturbed estimator is given by
\begin{equation*}
    \tilde{\thetab}_t^j
    =
    \hat{\thetab}_{t-1}
    +
    \gamma\beta_{t-1}\Vb_{t-1}^{-1}\sbb_{t-1}^j \, .
\end{equation*}
The key difficulty is that the arm sequence $\{\xb_s\}_{s=1}^{t-1}$ is itself selected using the ensemble, and therefore depends on the same perturbations that define $\sbb_{t-1}^j$. As a result, after conditioning on the history up to round $t-1$, the quantity
\begin{equation*}
    \xb^\top (\tilde{\thetab}_t^j - \hat{\thetab}_{t-1})
    =
    \gamma \beta_{t-1}\xb^\top \Vb_{t-1}^{-1}\sbb_{t-1}^j
\end{equation*}
can no longer be viewed as a centered Gaussian random variable with variance proportional to $\|\xb\|_{\Vb_{t-1}^{-1}}^2$. In other words, the perturbation and the current Gram matrix are coupled through the past trajectory, and thus the simple arm-wise conditional concentration argument used in TS and PHE breaks down.

This distinction is crucial. In TS and PHE, the perturbation at round $t$ remains conditionally well-behaved for each arm after fixing the past, so a union bound over the $K$ candidate arms yields a $\sqrt{\log K}$ factor. In ensemble sampling, however, the perturbation is persistent and evolves together with the action sequence that it helps generate. Therefore, the projected perturbation along a given arm does not admit the same clean arm-wise tail bound after conditioning on the past. Consequently, one cannot directly obtain
\[
    \max_{\xb\in\Xcal}
    |\xb^\top(\tilde{\thetab}_t^j-\hat{\thetab}_{t-1})|
    \lesssim
    \beta_{t-1}\sqrt{\log K}\,
    \max_{\xb\in\Xcal}\|\xb\|_{\Vb_{t-1}^{-1}}
\]
by the same argument as in TS or PHE.

Instead, previous analyses of ensemble sampling control the perturbation in a more global manner, for example through bounds on self-normalized directional processes or on the vector norm $\|\Vb_{t-1}^{-1/2}\sbb_{t-1}^j\|$. Such bounds are sufficient to establish optimism and sublinear regret, but they naturally lead to a $\sqrt d$-scale complexity rather than a $\sqrt{\log K}$ one. This is the main reason why the standard finite-arm $\log K$ improvement available in TS and PHE does not automatically carry over to ensemble sampling.

However, our epoch-refresh construction makes an arm-wise analysis possible. Within each epoch, the refresh rule ensures that the Gram matrix remains close to its epoch-start value, so the self-normalized perturbation score of each fixed arm changes only mildly during the epoch. This allows us to control the perturbations directly over the finite set of \(K\) arms, rather than uniformly over all directions. Lemma~\ref{lem:uniform arm-wise concentration} formalizes this idea by showing that the ensemble perturbations admit a uniform arm-wise concentration bound at essentially the \(\sqrt{\log K}\) scale, which is the key ingredient for the regret analysis for the finite-arm-dominated regime.

\subsection{Proof of Lemma~\ref{lem:uniform arm-wise concentration}}
\UniformArmWiseConcentration*
\begin{proof}[Proof of Lemma~\ref{lem:uniform arm-wise concentration}]
    Fix an epoch \(r\) and a round \(t\in\{\tau_r+1,\dots,\tau_{r+1}\}\) played in epoch \(r\). Recall that by construction,
    \[
        \sbb_{t-1}^j
        =
        \Vb_{\tau_r}^{1/2}\gb_r^j
        +
        \sum_{s=\tau_r+1}^{t-1}\xb_s\xi_s^j \, .
    \]
    Fix a candidate arm \(\xb \in \Xcal_t\) at round \(t\). The case \(\xb=\zero_d\) is trivial, so we assume \(\xb\ne\zero_d\). We define
    \[
        \Ab_{r,t}
        :=
        \Vb_{\tau_r}^{-1/2}\Vb_{t-1}\Vb_{\tau_r}^{-1/2},
        \qquad
        \yb_{r,t}(\xb)
        :=
        \Vb_{\tau_r}^{-1/2}\xb,
        \qquad 
        \nbb_{r,t-1}^j
        :=
        \Vb_{\tau_r}^{-1/2}
        \sum_{s=\tau_r+1}^{t-1}\xb_s\xi_s^j.
    \]
    Then we have
    \[
        \Vb_{\tau_r}^{-1/2}\sbb_{t-1}^j
        =
        \gb_r^j+\nbb_{r,t-1}^j \, .
    \]
    Moreover, since \(\Vb_{t-1}^{-1} = \Vb_{\tau_r}^{-1/2}\Ab_{r,t}^{-1}\Vb_{\tau_r}^{-1/2}\), for \(\xb \neq \zero_d\), we have
    \[
        Z^{j}_{r, t}(\xb)
        :=
        \frac{
        \xb^\top \Vb_{t-1}^{-1}\sbb_{t-1}^j
        }{
        \|\xb\|_{\Vb_{t-1}^{-1}}
        }
        =
        \frac{
        \yb_{r,t}(\xb)^\top \Ab_{r,t}^{-1}(\gb_r^j+\nbb_{r,t-1}^j)
        }{
        \sqrt{\yb_{r,t}(\xb)^\top \Ab_{r,t}^{-1}\yb_{r,t}(\xb)}
        } \, .
    \]
    In the following proof, we establish a uniform bound on \(|Z_{r,t}^{j}(\xb)|\) over all \(t\in[T]\), \(j\in[m]\), and \(\xb \in \Xcal_t\).
    
    Note that since \(t\) is played in epoch \(r\), the refresh rule guarantees
    \[
    \Ib_d\preceq \Ab_{r,t}\preceq (1+\alpha)\Ib_d \, .
    \]
    
    Next, we introduce a lemma which quantifies how little the self-normalized projection changes when the Gram matrix drifts only mildly within an epoch. In particular, when \(\Ab\) remains within a multiplicative factor \(1+\alpha\) of the identity, the normalized linear functional induced by \(\Ab^{-1}\) stays uniformly close to the corresponding Euclidean projection.
    A proof of Lemma~\ref{lem:matrix-drift-perturbation} is provided in Appendix~\ref{appx:proof of lem:matrix-drift-perturbation}. 

    \begin{restatable}[Stability of normalized projections under small matrix drift]{lemma}{MatrixDriftPerturbation}
    \label{lem:matrix-drift-perturbation}
        Let \(\alpha \ge0\), and suppose
        \[
        \Ib_d\preceq \Ab\preceq (1+\alpha)\Ib_d \, .
        \]
        Then, for every \(\yb\neq \zero_d\) and every \(\vb\in\RR^d\),
        \[
        \left|
        \frac{\yb^\top \Ab^{-1}\vb}{\sqrt{\yb^\top \Ab^{-1}\yb}}
        -
        \ip{\frac{\yb}{\norm{\yb}_2}}{\vb}
        \right|
        \le
        2\alpha \norm{\vb}_2 \, .
        \]   
    \end{restatable}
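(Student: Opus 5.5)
The statement is scale-invariant in $\yb$, so I will assume $\|\yb\|_2=1$ throughout. Write $\Bb:=\Ab^{-1}$, whose eigenvalues lie in $[1/(1+\alpha),1]$ because $\Ib_d\preceq\Ab\preceq(1+\alpha)\Ib_d$. Set $q:=\sqrt{\yb^\top\Bb\yb}$, so $q\in[1/\sqrt{1+\alpha},\,1]$. I split into two regimes of $\alpha$, since a single perturbation estimate carries an extra $\sqrt{1+\alpha}$ factor that is only harmless when $\alpha$ is small.

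\emph{Case $\alpha\ge 1$ (crude bound).}
\begin{itemize}
\item By Cauchy--Schwarz in the $\Bb$-inner product, $|\yb^\top\Bb\vb|\le\sqrt{\yb^\top\Bb\yb}\,\sqrt{\vb^\top\Bb\vb}$.
\item Hence the normalized term satisfies $|\yb^\top\Bb\vb|/q\le\|\Bb^{1/2}\vb\|_2\le\|\vb\|_2$, using $\Bb\preceq\Ib_d$.
\item The Euclidean projection satisfies $|\ip{\yb}{\vb}|\le\|\vb\|_2$.
\item The triangle inequality gives a difference of at most $2\|\vb\|_2\le 2\alpha\|\vb\|_2$.
\end{itemize}

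\emph{Case $0\le\alpha<1$ (perturbative bound).} The difference can be written as a single inner product,
\[
\frac{\yb^\top\Bb\vb}{q}-\yb^\top\vb=\frac{1}{q}\,(\Bb\yb-q\yb)^\top\vb ,
\]
so it is at most $q^{-1}\|\Bb\yb-q\yb\|_2\,\|\vb\|_2$. I bound the vector norm by the triangle inequality, $\|\Bb\yb-q\yb\|_2\le\|\Bb\yb-\yb\|_2+(1-q)$, and treat the two pieces separately.
\begin{itemize}
\item Since the spectrum of $\Ib_d-\Bb$ lies in $[0,\alpha/(1+\alpha)]$, we have $\|\Bb\yb-\yb\|_2\le\|\Ib_d-\Bb\|_{\op}\le\alpha/(1+\alpha)$.
\item From the range of $q$, $1-q\le 1-(1+\alpha)^{-1/2}\le\alpha/2$, using $\sqrt{1+\alpha}\le 1+\alpha/2$.
\item Multiplying by $q^{-1}\le\sqrt{1+\alpha}$ gives
\[
\frac{\alpha}{\sqrt{1+\alpha}}+\frac{\alpha\sqrt{1+\alpha}}{2}\le\alpha+\frac{\sqrt2}{2}\,\alpha<2\alpha .
\]
\end{itemize}

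The only delicate step is this bookkeeping of the constant in the small-$\alpha$ case. The factor $q^{-1}\le\sqrt{1+\alpha}$ makes the perturbative bound grow like $\alpha^{3/2}$, which is why I restrict it to $\alpha<1$. For $\alpha\ge1$ the crude Cauchy--Schwarz argument already gives $2\alpha$, so combining the two cases yields the claimed bound for every $\alpha\ge0$.
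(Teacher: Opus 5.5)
Your proof is correct and is essentially the paper's argument: the same Cauchy--Schwarz reduction to $\|\Bb\yb/q-\yb\|_2\,\|\vb\|_2$, with $\Bb\yb/q-\yb$ split into $(\Bb-\Ib_d)\yb/q$ and $(q^{-1}-1)\yb$. The case split is unnecessary. Since $q^{-1}(1-q)=q^{-1}-1\le\sqrt{1+\alpha}-1\le\alpha/2$, the $\alpha^{3/2}$ growth comes only from bounding $q^{-1}$ and $1-q$ separately, and the paper's bound $\alpha/\sqrt{1+\alpha}+(\sqrt{1+\alpha}-1)\le 2\alpha$ holds for every $\alpha\ge0$ at once.
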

    
    Applying Lemma~\ref{lem:matrix-drift-perturbation} with
    \[
        \yb=\yb_{r,t}(\xb),
        \qquad
        \Ab=\Ab_{r,t},
        \qquad
        \vb=\gb_r^j+\nbb_{r,t-1}^j,
    \]
    we obtain
    \begin{equation} \label{eq:Z-decomposition}
        |Z_{r,t}^{j}(\xb)|
        \le
        \left|
        \ip{\frac{\yb_{r,t}(\xb)}{\norm{\yb_{r,t}(\xb)}_2}}{\gb_r^j}
        \right|
        +
        \left|
        \ip{\frac{\yb_{r,t}(\xb)}{\norm{\yb_{r,t}(\xb)}_2}}{\nbb_{r,t-1}^j}
        \right| 
        +
        2\alpha\norm{\gb_r^j}_2
        +
        2\alpha\norm{\nbb_{r,t-1}^j}_2 \, .    
    \end{equation}
    We now control the four terms on the right-hand side of Eq.\eqref{eq:Z-decomposition}.

    \paragraph{Term 1: epoch-start Gaussian projection.}
    Define
    \[
        \ub_{r,t}(\xb)
        :=
        \frac{\yb_{r,t}(\xb)}{\norm{\yb_{r,t}(\xb)}_2}.
    \]
    We first control the epoch-start Gaussian projection. 
    By the oblivious-arm-set assumption, the sequence \(\{\Xcal_t\}_{t=1}^T\) is generated independently of the algorithm's internal randomization.
    Moreover, the epoch-start Gaussian perturbation \(\gb_r^j\) is sampled independently of the history available at the beginning of epoch \(r\). Hence, for each nonzero \(\xb\in\Xcal_t\), the direction \(\ub_{r,t}(\xb)\) is independent of \(\gb_r^j\).
    Since \(\|\ub_{r,t}(\xb)\|_2=1\), it follows that 
    \begin{equation*}
        \ips{\ub_{r,t}(\xb)}{\gb_r^j}\sim \Ncal(0,1) \, .    
    \end{equation*}
    Thus, for every \(z>0\),
    \[
        \PP\!\left(
        \left|\ips{\ub_{r,t}(\xb)}{\gb_{r}^j}\right|\ge z
        \right)
        \le
        2\exp(-z^2/2) \, .
    \]
    Taking a union bound over epochs \(r\le R_\alpha\), rounds \(t\in[T]\),
    ensemble members \(j\in[m]\), and arms \(\xb\in\Xcal_t\), and using
    \(|\Xcal_t|\le K\) and \(R_\alpha\le \bar R_\alpha\), we obtain
    \[
        \PP\!\left(
        \max_{r\le R_\alpha,\;t\in[T],\;j\in[m],\;\xb\in\Xcal_t}
        \left|\ips{\ub_{r,t}(\xb)}{\gb_r^j}\right|
        \ge z
        \right)
        \le
        2KmT\bar R_\alpha \exp(-z^2/2) \, .
    \]
    Therefore, taking
    \[
        z
        =
        \sqrt{
            2\log\!\left(\frac{8KmT\bar R_\alpha}{\delta}\right)
        }
    \]
    gives, with probability at least \(1-\delta/4\),
    \begin{equation} \tag{E1} \label{eq:Z-deom-1}
        \max_{t\in[T],\; j\in[m],\; \xb\in\Xcal_t}
        \left|\ips{\ub_{r,t}(\xb)}{\gb_{r(t)}^j}\right|
        \le
        \sqrt{
            2\log\!\left(\frac{8KmT\bar R_\alpha}{\delta}\right) 
        }
        = \sqrt{2 L_\alpha} \, .        
    \end{equation}

    \paragraph{Term 2: scalar within-epoch martingale.}
    Fix an epoch \(r\), an ensemble index \(j\in[m]\), a round
    \(t\in\{\tau_r+1,\ldots,\tau_{r+1}\}\), and a nonzero candidate arm
    \(\xb\in\Xcal_t\). Define
    \begin{equation*}
        \Mcal_{r,\tau_r,t}^{j}(\xb):=0,
        \qquad
        \Mcal_{r,q,t}^{j}(\xb)
        :=
        \sum_{s=\tau_r+1}^{q}
        \ip{\ub_{r,t}(\xb)}
        {\Vb_{\tau_r}^{-1/2}\xb_s}\,\xi_s^j,
        \quad
        q\in\{\tau_r+1,\ldots,t-1\}.
    \end{equation*}
    Then
    \begin{equation*}
        \ips{\ub_{r,t}(\xb)}{\nbb_{r,t-1}^j}
        =
        \Mcal_{r,t-1,t}^{j}(\xb).
    \end{equation*}
    
    We now specify the filtration used in the martingale argument. Let
    \begin{equation*}
        \Gcal_t^{\Xcal}
        :=
        \sigma(\Xcal_1,\ldots,\Xcal_t)
    \end{equation*}
    be the sigma-algebra generated by the sequence of arm sets. 
    By the oblivious-arm-set assumption, \(\Gcal^{\Xcal}_t\) is independent of the algorithm's internal randomization. 
    For \(q\in\{\tau_r,\ldots,t-1\}\), define
    \begin{equation*}
        \Fcal_q^{(r,t)}
        :=
        \Gcal^{\Xcal}_t
        \vee
        \sigma(\gb_\ell^1,\ldots,\gb_\ell^m:\ell\le r)
        \vee
        \sigma(\xb_s,y_s,J_s:s\le q)
        \vee
        \sigma(\xi_s^1,\ldots,\xi_s^m:s<q) \, .
    \end{equation*}
    This filtration contains the realized arm set \(\Xcal_t\), the selected arm
    \(\xb_q\), and all perturbation noises generated before round \(q\), but not
    \(\xi_q^j\). Hence, by the oblivious-arm-set assumption and the independence of
    the perturbation noises, \(\xi_q^j\sim\Ncal(0,1)\) is independent of
    \(\Fcal_q^{(r,t)}\). 
    
    On the other hand, since $\xb_q$ is $\Fcal_q^{(r,t)}$-measurable, therefore the scalar $a_{r,q,t}(\xb) := \ip{\ub_{r,t}(\xb)}{\Vb_{\tau_r}^{-1/2}\xb_q}$ is also $\Fcal_q^{(r,t)}$-measurable. 
    
    Define the enlarged filtration
    \[
        \widehat{\Fcal}_{\tau_r}^{(r,t)}
        :=
        \Fcal_{\tau_r}^{(r,t)},
        \qquad
        \widehat{\Fcal}_{q}^{(r,t)}
        :=
        \Fcal_q^{(r,t)}
        \vee
        \sigma(\xi_q^1,\ldots,\xi_q^m),
        \quad
        q\in\{\tau_r+1,\ldots,t-1\}.
    \]
    Then
    \[
        \widehat{\Fcal}_{q-1}^{(r,t)}
        \subseteq
        \Fcal_q^{(r,t)}
        \subseteq
        \widehat{\Fcal}_q^{(r,t)}.
    \]
    Furthermore, we have
    \begin{align*}
        \Mcal_{r,q,t}^{j}(\xb) 
        & = 
        \sum_{s=\tau_r+1}^{q}
        \ip{\ub_{r,t}(\xb)}
        {\Vb_{\tau_r}^{-1/2}\xb_s}\,\xi_s^j
        \\
        & =
        \sum_{s=\tau_r+1}^{q-1}
        \ip{\ub_{r,t}(\xb)}
        {\Vb_{\tau_r}^{-1/2}\xb_s}\,\xi_s^j
        + \ip{\ub_{r,t}(\xb)}
        {\Vb_{\tau_r}^{-1/2}\xb_q}\,\xi_q^j
        \\
        &
        = \Mcal_{r,q-1,t}^{j}(\xb)
        + a_{r,q, t}(\xb)\xi_q^j \, .
    \end{align*}
    Since \(\xi_q^j\) is independent of \(\Fcal_q^{(r,t)}\) and has mean zero,
    \begin{equation*}
        \EE\!\left[
        \Mcal_{r,q,t}^{j}(\xb)
        \mid
        \Fcal_q^{(r,t)}
        \right]
        =
        \Mcal_{r,q-1,t}^{j}(\xb)
        + a_{r,q, t}(\xb) \EE\left[ \xi_q^j \mid \Fcal_q^{(r,t)}
        \right]
        = \Mcal_{r,q-1,t}^{j}(\xb) \, .
    \end{equation*}
    By the tower property,
    \[
        \EE\!\left[
        \Mcal_{r,q,t}^{j}(\xb)
        \mid
        \widehat{\Fcal}_{q-1}^{(r,t)}
        \right]
        =
        \EE\left[
        \EE\!\left[
        \Mcal_{r,q,t}^{j}(\xb)
        \mid \Fcal_q^{(r,t)}
        \right]\mid
        \widehat{\Fcal}_{q-1}^{(r,t)}
        \right]
        =
        \Mcal_{r,q-1,t}^{j}(\xb).
    \]
    Thus,
    \(\{\Mcal_{r,q,t}^{j}(\xb)\}_{q=\tau_r}^{t-1}\) is a martingale with conditionally Gaussian increments with respect to
    \(\{\widehat{\Fcal}_q^{(r,t)}\}_{q=\tau_r}^{t-1}\).
    We also define the variance process
    \[
        A_{r,\tau_r, t}(\xb):=0,
        \qquad
        A_{r,q,t}(\xb)
        :=
        \sum_{s=\tau_r+1}^{q}
        \left(a_{r,s,t}(\xb)\right)^2,
        \quad
        q\in\{\tau_r+1,\ldots,t-1\}.
    \]
    Since \(t\) belongs to epoch \(r\), for every \(q\le t-1\),
    \[
        \Vb_q\preceq (1+\alpha)\Vb_{\tau_r}.
    \]
    Hence for any $q \in \{ \tau_r+1, \ldots, t-1\}$, we obtain
    \begin{align*}
        A_{r,q,t}(\xb)
        &=
        \sum_{s=\tau_r+1}^{q}
        \left(
        \ip{\ub_{r,t}(\xb)}
        {\Vb_{\tau_r}^{-1/2}\xb_s}
        \right)^2  \\
        &=
        \ub_{r,t}(\xb)^\top
        \Vb_{\tau_r}^{-1/2}
        (\Vb_q-\Vb_{\tau_r})
        \Vb_{\tau_r}^{-1/2}
        \ub_{r,t}(\xb)
        \le \alpha \, . \numberthis \label{eq:variance-process-upper-bdd}
    \end{align*}
    Now, for a fixed $\lambda_0 > 0$, we define the exponential process
    \[
        \mathcal E_q(\lambda_0)
        :=
        \exp\!\left(
        \lambda_0\Mcal_{r,q,t}^{j}(\xb)
        -
        \frac{\lambda_0^2}{2}
        A_{r,q,t}(\xb)
        \right),
        \qquad
        q\in\{\tau_r,\ldots,t-1\} \, .
    \]
    We claim that \(\{\mathcal E_q(\lambda_0)\}_{q=\tau_r}^{t-1}\) is a nonnegative martingale with respect to $\{\widehat{\Fcal}_q^{(r,t)}\}_{q=\tau_r}^{t-1}$.
    Indeed, $\Mcal^j_{r,q,t}(\xb)$ is $\hat{\Fcal}_q^{(r,t)}$-measurable by construction.
    Moreover, since $\Mcal^j_{r,q-1,t}(\xb)$ and $a_{r,q,t}(\xb)$ are $\Fcal^{(r,t)}_q$-measurable and $\xi^j_q \sim \Ncal(0,1)$ is independent of $\Fcal^{(r,t)}_q$, we have
    \begin{equation*}
        \EE\left[ \Ecal_q(\lambda_0) \mid \Fcal^{(r,t)}_q \right]
        = \Ecal_{q-1}(\lambda_0) \EE \left[ \exp \left( \lambda_0 a_{r,q,t}(\xb) \xi^j_q - \frac{\lambda_0^2}{2}(a_{r,q,t}(\xb))^2 \right) \mid \Fcal^{(r,t)}_q \right]
        = \Ecal_{q-1}(\lambda_0)
    \end{equation*}
    where the last equality follows from the Gaussian moment generating function. 
    Then, the tower property again yields
    \begin{equation*}
        \EE\left[ \Ecal_q(\lambda_0) \mid \hat{\Fcal}^{(r,t)}_{q-1} \right]
        = \EE \left[
            \EE\left[ 
                \Ecal_q(\lambda_0) \mid \Fcal^{(r,t)}_q
                \right]
                \mid \hat{\Fcal}^{(r,t)}_{q-1}
            \right]
        = \Ecal_{q-1}(\lambda_0) \, .
    \end{equation*}
    Thus, \(\{\mathcal E_q(\lambda_0)\}_{q=\tau_r}^{t-1}\) is a nonnegative martingale.
    Now, for any $z \ge 0$, we define an event
    \begin{equation*}
        E_z := \left\{ \max_{q \in \{\tau_r+1, \ldots, t-1\}} \Mcal_{r,q,t}^{j} (\xb) \ge z \right\} \, .
    \end{equation*}
    On $E_z$, there exists $q^* \in \{\tau_r + 1, \ldots, t-1\}$ such that $\Mcal_{r,q^*,t}^j \ge z$. 
    By Eq.~\eqref{eq:variance-process-upper-bdd}, we have 
    \begin{equation*}
        A_{r, q^*, t}(\xb) \le \alpha \, .
    \end{equation*}
    Therefore, 
    \begin{equation*}
        \mathcal E_{q^*}(\lambda_0)
        =
        \exp\!\left(
        \lambda_0\Mcal_{r,q^*,t}^{j}(\xb)
        -
        \frac{\lambda_0^2}{2}
        A_{r,q^*,t}(\xb)
        \right)
        \ge \exp \left( \lambda_0 z - \frac{\lambda_0^2}{2} \alpha \right) \, .
    \end{equation*}
    Hence, we have the following relationship:
    \begin{equation*}
        E_z 
        \subset
        \left\{ \max_{q \in \{\tau_r, \ldots, t-1 \}} \Ecal_q (\lambda_0)
            \ge \exp 
                \left(
                \lambda_0 z - \frac{\lambda_0^2}{2} \alpha
                \right)
        \right\} \, .
    \end{equation*}
    By Ville's inequality, we have
    \begin{equation} \label{eq:ville-ineq-result}
        \PP(E_z)
        \le
        \PP \left(
            \max_{q \in \{\tau_r, \ldots, t-1\}} \Ecal_q (\lambda_0)
            \ge \exp 
                \left(
                \lambda_0 z - \frac{\lambda_0^2}{2} \alpha
                \right)
            \right)
            \le 
            \exp\left( -\lambda_0 z + \frac{\lambda_0^2}{2} \alpha\right) \, .
    \end{equation}
    Note that Eq.~\eqref{eq:ville-ineq-result} holds for any $\lambda_0 > 0$, for specific choicse $\lambda_0 = z/\alpha$, we have
    \begin{equation*}
        \PP(E_z) \le \exp\left( - \frac{z^2}{2 \alpha} \right) \, .
    \end{equation*}
    Applying the same argument to \(-\Mcal_{r,q,t}^{j}(\xb)\), we obtain
    \begin{equation} \label{eq:ville-ineq-result-negative}
        \PP\!\left(
        \left|    
        \ip{\ub_{r,t}(\xb)}{\nbb_{r,t-1}^j}
        \right|
        \ge z
        \right)
        \le 
        \PP\!\left(
        \max_{q \in \{\tau_r, \ldots, t-1\}}
        \left|    
        \ip{\ub_{r,t}(\xb)}{\nbb_{r,q}^j}
        \right|
        \ge z
        \right)
        \le
        2\exp\!\left(-\frac{z^2}{2\alpha}\right) \, .
    \end{equation}
    
    Finally, taking a union bound over epochs \(r\le R_\alpha\), rounds
    \(t\in[T]\), ensemble members \(j\in[m]\), and arms \(\xb\in\Xcal_t\), and
    using \(|\Xcal_t|\le K\) and \(R_\alpha\le \bar R_\alpha\), we get
    \[
        \PP\!\left(
        \max_{\substack{r\le R_\alpha,\;t\in[T],\\ j\in[m],\;\xb\in\Xcal_t}}
        \left|
        \ip{\ub_{r,t}(\xb)}{\nbb_{r,t-1}^j}
        \right|
        \ge z
        \right)
        \le
        2KmT\bar R_\alpha
        \exp\!\left(-\frac{z^2}{2\alpha}\right).
    \]
    Thus, taking \(z=\sqrt{2\alpha L_\alpha}\) yields, with probability at least \(1-\delta/4\),
    \[
        \max_{t\in[T],\,j\in[m],\,\xb\in\Xcal_t}
        \left|
        \ip{\ub_{r,t}(\xb)}{\nbb_{r,t-1}^j}
        \right|
        \le
        \sqrt{2\alpha L_\alpha} \, .
    \tag{E2}
    \]

    \paragraph{Term 3: epoch-start Gaussian norm.}
    Fix an epoch \(r\le \bar R_\alpha\) and an ensemble index \(j\in[m]\).
    Since \(\gb_r^j\sim \Ncal(\zero_d,\Ib_d)\), the standard concentration
    inequality for the Euclidean norm of a Gaussian vector
    (Lemma~\ref{aux_lem:gaussian-norm-concentration}) gives, for every \(u>0\),
    \[
        \PP\!\left(
        \norm{\gb_r^j}_2\ge \sqrt d+\sqrt{2u}
        \right)
        \le \exp(-u) \, .
    \]
    Taking a union bound over all epochs \(r\le \bar R_\alpha\) and ensemble members
    \(j\in[m]\), we obtain
    \[
        \PP\!\left(
        \max_{r\le \bar R_\alpha,\; j\in[m]}
        \norm{\gb_r^j}_2
        \ge
        \sqrt d+\sqrt{2u}
        \right)
        \le
        m\bar R_\alpha \exp(-u) \, .
    \]
    Choosing
    \[
        u:=\log\frac{4m\bar R_\alpha}{\delta}
    \]
    gives \(m\bar R_\alpha \exp(-u)=\delta/4\). Hence, with probability at least
    \(1-\delta/4\),
    \begin{equation} \tag{E3} \label{eq:Z-deom-3}
        \max_{r\le \bar R_\alpha,\; j\in[m]}
        \norm{\gb_r^j}_2
        \le
        \sqrt d+\sqrt{2\log\!\left(\frac{4m\bar R_\alpha}{\delta}\right)}
        \le
        \sqrt{d} + \sqrt{2 L_\alpha} \, .
    \end{equation}
    
    \paragraph{Term 4: vector within-epoch martingale.}
    Fix an epoch \(r\) and an ensemble index \(j\in[m]\).
    Define
    \[
        \ab_{r,q}:=\Vb_{\tau_r}^{-1/2}\xb_q,
        \qquad
        q\in\{\tau_r+1,\dots,\tau_{r+1}-1\}.
    \]
    Then, for every \(t\in\{\tau_r+1,\dots,\tau_{r+1}\}\),
    \[
        \nbb_{r,t-1}^j
        =
        \Vb_{\tau_r}^{-1/2}
        \sum_{s=\tau_r+1}^{t-1} \xb_s \xi_s^j
        =
        \sum_{s=\tau_r+1}^{t-1}\ab_{r,s}\xi_s^j .
    \]
    Moreover,
    \[
        \sum_{s=\tau_r+1}^{t-1}\ab_{r,s}\ab_{r,s}^{\top}
        =
        \Vb_{\tau_r}^{-1/2}
        (\Vb_{t-1}-\Vb_{\tau_r})
        \Vb_{\tau_r}^{-1/2}
        \preceq
        \alpha \Ib_d ,
    \]
    because \(t\) belongs to epoch \(r\) and hence
    \(\Vb_{t-1}\preceq (1+\alpha)\Vb_{\tau_r}\). Consequently, for every
    \(\ub\in\usd\),
    \[
        \sum_{s=\tau_r+1}^{t-1}
        \left(\ips{\ub}{\ab_{r,s}}\right)^2
        \le
        \alpha .
    \]
    
    Applying the scalar martingale bound from Term 2 in Eq.~\eqref{eq:ville-ineq-result} to the process
    \[
        \sum_{s=\tau_r+1}^{t-1}
        \ips{\ub}{\ab_{r,s}}\xi_s^j
        =
        \ips{\ub}{\nbb_{r,t-1}^j},
    \]
    we obtain, for every fixed \(\ub\in\usd\) and every \(z\ge0\),
    \begin{equation}
    \label{eq:tail-bound-vector-projection-martingale}
        \PP\!\left(
        \max_{t\in\{\tau_r+1,\dots,\tau_{r+1}\}}
        \left|\ips{\ub}{\nbb_{r,t-1}^j}\right|
        \ge z
        \right)
        \le
        2\exp\!\left(-\frac{z^2}{2\alpha}\right).
    \end{equation}
    
    Let \(\Ncal_{1/2}\subset\usd\) be a \(1/2\)-net satisfying
    \(|\Ncal_{1/2}|\le 5^d\). We use the standard net argument: for every
    \(\vb\in\RR^d\),
    \[
        \norm{\vb}_2
        \le
        2\max_{\ub\in\Ncal_{1/2}}|\ips{\ub}{\vb}|.
    \]
    Indeed, the claim is trivial if \(\vb=\zero_d\). Otherwise, let
    \(\wb:=\vb/\norm{\vb}_2\). By the definition of the net, there exists
    \(\ub^*\in\Ncal_{1/2}\) such that \(\norm{\ub^*-\wb}_2\le 1/2\). Then
    \[
        |\ips{\ub^*}{\vb}|
        \ge
        |\ips{\wb}{\vb}|-|\ips{\ub^*-\wb}{\vb}|
        \ge
        \norm{\vb}_2-\frac12\norm{\vb}_2
        =
        \frac12\norm{\vb}_2,
    \]
    which proves the claim.
    
    Applying this claim with \(\vb=\nbb_{r,t-1}^j\), we have
    \[
        \norm{\nbb_{r,t-1}^j}_2
        \le
        2\max_{\ub\in\Ncal_{1/2}}
        \left|\ips{\ub}{\nbb_{r,t-1}^j}\right|.
    \]
    Therefore,
    \[
        \left\{
        \max_{t\in\{\tau_r+1,\dots,\tau_{r+1}\}}
        \norm{\nbb_{r,t-1}^j}_2
        \ge 2z
        \right\}
        \subseteq
        \bigcup_{\ub\in\Ncal_{1/2}}
        \left\{
        \max_{t\in\{\tau_r+1,\dots,\tau_{r+1}\}}
        \left|\ips{\ub}{\nbb_{r,t-1}^j}\right|
        \ge z
        \right\}.
    \]
    Using Eq.~\eqref{eq:tail-bound-vector-projection-martingale} and taking a union
    bound over \(\Ncal_{1/2}\), we get
    \[
        \PP\!\left(
        \max_{t\in\{\tau_r+1,\dots,\tau_{r+1}\}}
        \norm{\nbb_{r,t-1}^j}_2
        \ge 2z
        \right)
        \le
        2|\Ncal_{1/2}|
        \exp\!\left(-\frac{z^2}{2\alpha}\right)
        \le
        2\cdot 5^d
        \exp\!\left(-\frac{z^2}{2\alpha}\right).
    \]
    Taking another union bound over all \(r\le \bar R_\alpha\) and \(j\in[m]\), we
    obtain
    \[
        \PP\!\left(
        \max_{\substack{r\le \bar R_\alpha,\; j\in[m],\\
        t\in\{\tau_r+1,\dots,\tau_{r+1}\}}}
        \norm{\nbb_{r,t-1}^j}_2
        \ge 2z
        \right)
        \le
        2m\bar R_\alpha 5^d
        \exp\!\left(-\frac{z^2}{2\alpha}\right).
    \]
    Choosing
    \[
        z^2
        =
        2\alpha
        \left(
        d\log 5+\log\frac{8m\bar R_\alpha}{\delta}
        \right),
    \]
    we conclude that, with probability at least \(1-\delta/4\),
    \[
        \max_{\substack{r\le \bar R_\alpha,\; j\in[m],\\
        t\in\{\tau_r+1,\dots,\tau_{r+1}\}}}
        \norm{\nbb_{r,t-1}^j}_2
        \le
        2\sqrt{
        2\alpha\left(
        d\log 5+\log\frac{8m\bar R_\alpha}{\delta}
        \right)
        }.
    \]
    Finally, since \(K,T\ge1\) and
    \(L_\alpha=\log\frac{8KmT\bar R_\alpha}{\delta}\), the logarithmic factor above
    is bounded by \(d\log 5+L_\alpha\). Hence, for a universal constant \(C>0\),
    \begin{equation}
    \tag{E4}
    \label{eq:Z-deom-4}
        \max_{\substack{r\le \bar R_\alpha,\; j\in[m],\\
        t\in\{\tau_r+1,\dots,\tau_{r+1}\}}}
        \norm{\nbb_{r,t-1}^j}_2
        \le
        C\sqrt{\alpha(d+L_\alpha)} .
    \end{equation}
    
    \paragraph{Combining the four events.}
    By a union bound, the events in Eqs.~\eqref{eq:Z-deom-1}--\eqref{eq:Z-deom-4}
    hold simultaneously with probability at least \(1-\delta\).
    On this event, Eq.~\eqref{eq:Z-decomposition} implies that, for all
    \(t\in[T]\), \(j\in[m]\), and nonzero \(\xb\in\Xcal_t\),
    \begin{equation}
    \label{eq:arm-wise-bound-general}
        \left|
        \frac{\xb^\top \Vb_{t-1}^{-1}\sbb_{t-1}^j}
        {\|\xb\|_{\Vb_{t-1}^{-1}}}
        \right|
        \le
        C\left[
        \sqrt{L_\alpha}
        +
        \sqrt{\alpha L_\alpha}
        +
        \alpha(\sqrt d+\sqrt{L_\alpha})
        +
        \alpha\sqrt{\alpha(d+L_\alpha)}
        \right],
    \end{equation}
    where \(C>0\) is a universal constant. This proves the first part of the theorem.
    
    For the second part, suppose that a deterministic quantity \(\Lambda_*\ge 1\)
    satisfies \(L_\alpha\le C_\Lambda \Lambda_*\) for some universal constant
    \(C_\Lambda>0\), and choose
    $
        \alpha=\min\{1,\sqrt{\Lambda_*/d}\} \, .
    $
    We show that every term on the right-hand side of
    Eq.~\eqref{eq:arm-wise-bound-general} is bounded by a universal constant times
    \(\sqrt{\Lambda_*}\).
    
    First, since \(L_\alpha\le C_\Lambda\Lambda_*\), we have
    \[
        \sqrt{L_\alpha}
        \le
        \sqrt{C_\Lambda}\sqrt{\Lambda_*}.
    \]
    Also, since \(\alpha\le 1\),
    \[
        \sqrt{\alpha L_\alpha}
        \le
        \sqrt{L_\alpha}
        \le
        \sqrt{C_\Lambda}\sqrt{\Lambda_*}.
    \]
    Next, we control the drift term \(\alpha(\sqrt d+\sqrt{L_\alpha})\).
    By the definition of \(\alpha\), we have
    \[
        \alpha\sqrt d
        =
        \min\left\{\sqrt d,\sqrt{\Lambda_*}\right\}
        \le
        \sqrt{\Lambda_*}.
    \]
    Moreover,
    \[
        \alpha\sqrt{L_\alpha}
        \le
        \sqrt{L_\alpha}
        \le
        \sqrt{C_\Lambda}\sqrt{\Lambda_*}.
    \]
    Therefore,
    \[
        \alpha(\sqrt d+\sqrt{L_\alpha})
        \le
        (1+\sqrt{C_\Lambda})\sqrt{\Lambda_*}.
    \]
    
    It remains to bound the last term. Using
    \(\sqrt{a+b}\le \sqrt a+\sqrt b\), we get
    \[
        \alpha\sqrt{\alpha(d+L_\alpha)}
        \le
        \alpha\sqrt{\alpha d}
        +
        \alpha\sqrt{\alpha L_\alpha}.
    \]
    Since \(\alpha\le 1\),
    \[
        \alpha\sqrt{\alpha d}
        \le
        \alpha\sqrt d
        \le
        \sqrt{\Lambda_*},
    \]
    and similarly,
    \[
        \alpha\sqrt{\alpha L_\alpha}
        \le
        \sqrt{L_\alpha}
        \le
        \sqrt{C_\Lambda}\sqrt{\Lambda_*}.
    \]
    Thus,
    \[
        \alpha\sqrt{\alpha(d+L_\alpha)}
        \le
        (1+\sqrt{C_\Lambda})\sqrt{\Lambda_*}.
    \]
    
    Combining the above bounds in Eq.~\eqref{eq:arm-wise-bound-general}, and
    absorbing universal constants into a new constant \(\widetilde C>0\), we obtain
    \[
        \max_{t\in[T],\,j\in[m],\,\xb\in\Xcal_t}
        \left|
        \frac{\xb^\top \Vb_{t-1}^{-1}\sbb_{t-1}^j}
        {\|\xb\|_{\Vb_{t-1}^{-1}}}
        \right|
        \le
        \widetilde C\sqrt{\Lambda_*}.
    \]
\end{proof}

\subsection{Proof of Theorem~\ref{thm:finite-arm-regret}}
In this section, we provide a regret bound for Algorithm~\ref{alg:main algorithm} in the finite-arm setting.

\RegretForFiniteArmSets*
\begin{proof}[Proof of Theorem~\ref{thm:finite-arm-regret}]

    We first verify that the parameter choices stated in Theorem~\ref{thm:finite-arm-regret} satisfy the
    technical requirements needed for the proof.
    Since \(\Lambda_0\ge 1\), we have
    \[
        \alpha
        =
        \min\left\{
            1,\sqrt{\frac{\Lambda_0}{d}}
        \right\}
        \ge
        \frac{1}{\sqrt d}
        =
        \alpha_0 .
    \]
    Since the deterministic epoch bound \(\bar R_\alpha\) is nonincreasing in \(\alpha\), we have
    \[
        \bar R_\alpha \le \bar R_{\alpha_0} = \bar R_0.
    \]
    Therefore, by the definition of \(M_0\) and Corollary~\ref{cor:explicit-ensemble-size},
    the choice \(m=M_0\) is sufficient for the uniform exceedance guarantee
    of Lemma~\ref{lem:uniform-exceedance-across-epochs} for the actual value of
    \(\alpha\).
    
    Moreover, since \(m=M_0\) and \(\bar R_\alpha\le \bar R_0\), the logarithmic
    factor in Lemma~\ref{lem:uniform arm-wise concentration} satisfies
    \[
        L_\alpha
        =
        \log\frac{8KmT\bar R_\alpha}{\delta}
        \le
        \log\frac{8KM_0T\bar R_0}{\delta}
        \le
        C\Lambda_0
    \]
    for a universal constant \(C>0\), by the definition of \(\Lambda_0\).
    Since
    $
        \alpha
        =
        \min \{
            1,\sqrt{\Lambda_0/d}
        \},
    $
    the second part of Lemma~\ref{lem:uniform arm-wise concentration}, applied
    with \(\Lambda_*=\Lambda_0\), gives
    \[
        \max_{t\in[T],\,j\in[m],\,\xb\in\Xcal_t}
        \left|
        \frac{\xb^\top\Vb_{t-1}^{-1}\sbb_{t-1}^j}
        {\|\xb\|_{\Vb_{t-1}^{-1}}}
        \right|
        \le
        \widetilde C\sqrt{\Lambda_0}
    \]
    with probability at least \(1-\delta\), where \(\widetilde C>0\) is a universal
    constant.
    
    We now introduce the finite-arm version of the randomized-optimism master
    regret bound.

    \begin{restatable}[Finite-arm regret bound for randomized algorithms]{lemma}{FiniteArmMasterRegret}
        \label{lem:finite-arm-master-regret}
        Suppose that the arm sets \(\{\Xcal_t\}_{t=1}^T\) are chosen by an oblivious
        adversary with \(|\Xcal_t|=K\) for all \(t\in[T]\).
        Under the assumptions of Lemma~\ref{aux_lem:master regret}, the same conclusion
        continues to hold if the event
        \[
            \mathcal E
            =
            \left\{
                \forall t\le T:
                \|\thetab_t-\hat{\thetab}_{t-1}\|_{\Vb_{t-1}}
                \le b_{t-1}
            \right\}
        \]
        is replaced by the arm-wise event
        \[
            \widetilde{\mathcal E}
            =
            \left\{
                \forall t\le T,\ \forall \xb\in\Xcal_t:
                \left|\xb^\top(\thetab_t-\hat{\thetab}_{t-1})\right|
                \le
                b_{t-1}\|\xb\|_{\Vb_{t-1}^{-1}}
            \right\}.
        \]
    \end{restatable}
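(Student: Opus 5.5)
The plan is to re-run the proof of Theorem~2 of~\citet{janz2024ensemble}, in its time-varying form from Lemma~\ref{aux_lem:master regret}. I would isolate every place where the ellipsoidal event $\mathcal E$ is used and check that each one only needs the inner products $\xb^\top(\thetab-\hat{\thetab}_{t-1})$ for arms $\xb\in\Xcal_t$. The starting point is the standard decomposition
\[
\ips{\xb_t^*-\xb_t}{\thetab^*}
=
\bigl(J_t(\thetab^*)-J_t(\thetab_t)\bigr)
+
\ips{\xb_t}{\thetab_t-\thetab^*},
\qquad J_t(\thetab):=\max_{\xb\in\Xcal_t}\ips{\xb}{\thetab}.
\]
The oblivious-adversary assumption is what makes $\Xcal_t$ independent of the learner's randomization. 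Hence $\Xcal_t$ is $\Acal_{t-1}$-measurable and can be treated as a fixed finite set when conditioning.

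The second term is immediate. Since $\xb_t\in\Xcal_t$, on $\widetilde{\mathcal E}\cap\mathcal E_*$ we get
\[
\ips{\xb_t}{\thetab_t-\thetab^*}
\le
|\xb_t^\top(\thetab_t-\hat{\thetab}_{t-1})|+|\xb_t^\top(\hat{\thetab}_{t-1}-\thetab^*)|
\le
2b_{t-1}\|\xb_t\|_{\Vb_{t-1}^{-1}}.
\]
Here the bound on $\xb_t^\top(\thetab_t-\hat{\thetab}_{t-1})$ comes from $\widetilde{\mathcal E}$, and the bound on $\xb_t^\top(\hat{\thetab}_{t-1}-\thetab^*)$ comes from $\mathcal E_*$ via Cauchy--Schwarz. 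Summing, the elliptical potential lemma gives the $\sqrt{dT\log(1+T/(d\lambda))}$ part.

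For the first term, I would follow the optimism argument. Let $\tilde{\thetab}$ be an independent copy of $\thetab_t$ given $\Acal_{t-1}$ (that is, an independent draw of the index $J$). On the optimistic region $\Theta_t^{\mathrm{OPT}}$ we have $J_t(\thetab^*)\le J_t(\tilde{\thetab})$. Convexity of $J_t$, with subgradient $\tilde{\xb}\in\argmax_{\xb\in\Xcal_t}\ips{\xb}{\tilde{\thetab}}$, then gives
\[
J_t(\tilde{\thetab})-J_t(\thetab_t)
\le
\ips{\tilde{\xb}}{\tilde{\thetab}-\thetab_t}.
\]
This is where the substitution happens. Janz et al.\ bound the right-hand side by $\|\tilde{\xb}\|_{\Vb_{t-1}^{-1}}\|\tilde{\thetab}-\thetab_t\|_{\Vb_{t-1}}$. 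I would instead use the triangle inequality through $\hat{\thetab}_{t-1}$ and apply the arm-wise bound twice, once to $\tilde{\thetab}$ and once to $\thetab_t$, both evaluated at the same arm $\tilde{\xb}\in\Xcal_t$. This yields $2b_{t-1}\|\tilde{\xb}\|_{\Vb_{t-1}^{-1}}$. Dividing by $p_{t-1}$ then produces $\EE[\|\xb_t\|_{\Vb_{t-1}^{-1}}\mid\Acal_{t-1}]$, exactly as in the original proof. The gap between this conditional expectation and the realized $\|\xb_t\|_{\Vb_{t-1}^{-1}}$ is a bounded martingale difference, which yields the second, $\sqrt{(4T/\lambda+1)\log(\cdot)}$, term unchanged.

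I expect the main obstacle to be this conditioning step. $\widetilde{\mathcal E}$ constrains only the realized $\thetab_t$, while the argument integrates over the conditional law of $\tilde{\thetab}$ given $\Acal_{t-1}$. I would handle it by strengthening $\widetilde{\mathcal E}$ to hold for every ensemble member: $\forall t,\forall j\in[m],\forall\xb\in\Xcal_t$, $|\xb^\top(\thetab_{t-1}^j-\hat{\thetab}_{t-1})|\le b_{t-1}\|\xb\|_{\Vb_{t-1}^{-1}}$. This is exactly what Lemma~\ref{lem:uniform arm-wise concentration} delivers, since it takes a maximum over $j$.

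For each fixed $t$, this per-round event is $\Acal_{t-1}$-measurable, because all $\sbb_{t-1}^j$ and $\Xcal_t$ are. On it, the support of the conditional law of $\thetab_t$ lies inside the arm-wise band, so the expectation step is valid pointwise. The original proof's treatment of the failure probability, via truncation on the $\Acal_{t-1}$-measurable good event, then carries over verbatim and gives the same $1-3\delta$ event $\bar{\mathcal E}\subset\widetilde{\mathcal E}\cap\mathcal E_*$ and the same regret bound.
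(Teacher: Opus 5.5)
Your proposal follows the same route as the paper: re-run the proof of Lemma~\ref{aux_lem:master regret} and observe that the ellipsoidal event $\mathcal E$ enters only through inner products $\xb^\top(\thetab-\hat{\thetab}_{t-1})$ with arms $\xb\in\Xcal_t$. It is also more careful than the paper's proof, which asserts that only $\xb_t$ and $\xb_t^*$ are involved. You correctly note two further points: the convexity step uses the arm $\tilde{\xb}$ selected by an optimistic copy, and the arm-wise band must contain the $\Acal_{t-1}$-conditional law of $\thetab_t$, not just its realization.

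Because of your per-member strengthening of $\widetilde{\mathcal E}$, you strictly prove a version with a stronger hypothesis than the lemma literally states. That strengthening is exactly what Lemma~\ref{lem:uniform arm-wise concentration} supplies, since it takes a maximum over $j\in[m]$, so it suffices for Theorem~\ref{thm:finite-arm-regret}.
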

    
    Define the following events:
    \begin{align*}
        \Ecal_{\rm LS}
        &:=
        \left\{
            \forall t\in[T]:
            \|\hat{\thetab}_{t-1}-\thetab^*\|_{\Vb_{t-1}}
            \le
            \beta_{t-1}
        \right\},
        \\
        \Ecal_{\rm arm}
        &:=
        \left\{
            \forall t\in[T],\ \forall j\in[m],\ \forall \xb\in\Xcal_t:
            \left|
            \xb^\top\Vb_{t-1}^{-1}\sbb_{t-1}^j
            \right|
            \le
            \widetilde C\sqrt{\Lambda_0}
            \|\xb\|_{\Vb_{t-1}^{-1}}
        \right\},
        \\
        \Ecal_{\rm exc}
        &:=
        \left\{
            \forall t\in[T]:
            \inf_{\ub\in\mathbb S^{d-1}}
            E_{t,m}(\ub,1/\gamma)
            \ge
            \frac{1}{10}
        \right\}.
    \end{align*}
    By Lemma~\ref{aux_lem:confidence ellipsoid},
    Lemma~\ref{lem:uniform arm-wise concentration}, and
    Lemma~\ref{lem:uniform-exceedance-across-epochs}, respectively,
    \[
        \PP(\Ecal_{\rm LS})\ge 1-\delta,
        \qquad
        \PP(\Ecal_{\rm arm})\ge 1-\delta,
        \qquad
        \PP(\Ecal_{\rm exc})\ge 1-\delta.
    \]
    
    For each \(j\in[m]\), define the \(j\)-th ensemble parameter at round \(t\) by
    \[
        \thetab_t^j
        :=
        \hat{\thetab}_{t-1}
        +
        \gamma\beta_{t-1}\Vb_{t-1}^{-1}\sbb_{t-1}^j.
    \]
    On \(\Ecal_{\rm arm}\), for all \(t\in[T]\), \(j\in[m]\), and
    \(\xb\in\Xcal_t\),
    \[
        \left|
        \xb^\top(\thetab_t^j-\hat{\thetab}_{t-1})
        \right|
        =
        \gamma\beta_{t-1}
        \left|
        \xb^\top\Vb_{t-1}^{-1}\sbb_{t-1}^j
        \right|
        \le
        \gamma\widetilde C\sqrt{\Lambda_0}\,
        \beta_{t-1}
        \|\xb\|_{\Vb_{t-1}^{-1}}.
    \]
    Define
    \[
        b_{t-1}
        :=
        \left(1+\gamma\widetilde C\sqrt{\Lambda_0}\right)\beta_{t-1}.
    \]
    Then \(b_{t-1}\ge \beta_{t-1}\). Hence, on \(\Ecal_{\rm LS}\),
    \[
        \|\thetab^*-\hat{\thetab}_{t-1}\|_{\Vb_{t-1}}
        \le
        \beta_{t-1}
        \le
        b_{t-1}
    \]
    for all \(t\in[T]\). Moreover, on \(\Ecal_{\rm arm}\), the arm-wise event
    \(\widetilde{\mathcal E}\) in Lemma~\ref{lem:finite-arm-master-regret} holds
    for the sampled ensemble parameter \(\thetab_t=\thetab_t^{J_t}\), since
    \[
        \left|
        \xb^\top(\thetab_t-\hat{\thetab}_{t-1})
        \right|
        \le
        \gamma\widetilde C\sqrt{\Lambda_0}\,
        \beta_{t-1}
        \|\xb\|_{\Vb_{t-1}^{-1}}
        \le
        b_{t-1}\|\xb\|_{\Vb_{t-1}^{-1}}.
    \]
    
    It remains to verify the optimism probability lower bound. Fix \(t\in[T]\), and
    let
    \[
        \xb_t^*\in\argmax_{\xb\in\Xcal_t}\ips{\xb}{\thetab^*}.
    \]
    If \(\xb_t^*=\zero_d\), then
    \[
        \ips{\xb_t}{\thetab_t}
        =
        \max_{\xb\in\Xcal_t}\ips{\xb}{\thetab_t}
        \ge
        0
        =
        \ips{\xb_t^*}{\thetab^*},
    \]
    and hence \(p_{t-1}=1\).
    
    Assume now that \(\xb_t^*\neq\zero_d\), and define
    \[
        \ub_t^*
        :=
        \Vb_{t-1}^{-1}\xb_t^*,
        \qquad
        \bar{\ub}_t^*
        :=
        \frac{\ub_t^*}{\|\ub_t^*\|_2}
        \in \mathbb S^{d-1}.
    \]
    Since \(\Ecal_{\rm exc}\) holds,
    \[
        E_{t,m}(\bar{\ub}_t^*,1/\gamma)
        \ge
        \frac{1}{10}.
    \]
    Equivalently, at least a \(1/10\)-fraction of ensemble members \(j\in[m]\)
    satisfy
    \[
        \frac{\ips{\sbb_{t-1}^j}{\bar{\ub}_t^*}}
        {\|\bar{\ub}_t^*\|_{\Vb_{t-1}}}
        \ge
        \frac{1}{\gamma}.
    \]
    By positive rescaling invariance of the self-normalized ratio, this is
    equivalent to
    \[
        \frac{\ips{\sbb_{t-1}^j}{\ub_t^*}}
        {\|\ub_t^*\|_{\Vb_{t-1}}}
        \ge
        \frac{1}{\gamma}.
    \]
    Since \(\ub_t^*=\Vb_{t-1}^{-1}\xb_t^*\), we have
    \[
        \|\ub_t^*\|_{\Vb_{t-1}}
        =
        \|\xb_t^*\|_{\Vb_{t-1}^{-1}},
    \]
    and therefore every such \(j\) satisfies
    \[
        \ips{\xb_t^*}{\thetab_t^j-\hat{\thetab}_{t-1}}
        =
        \gamma\beta_{t-1}
        (\xb_t^*)^\top\Vb_{t-1}^{-1}\sbb_{t-1}^j
        \ge
        \beta_{t-1}
        \|\xb_t^*\|_{\Vb_{t-1}^{-1}}.
    \]
    On the other hand, on \(\Ecal_{\rm LS}\),
    \[
        \ips{\xb_t^*}{\hat{\thetab}_{t-1}-\thetab^*}
        \ge
        -
        \beta_{t-1}
        \|\xb_t^*\|_{\Vb_{t-1}^{-1}}.
    \]
    Combining the two displays gives
    \[
        \ips{\xb_t^*}{\thetab_t^j}
        =
        \ips{\xb_t^*}{\hat{\thetab}_{t-1}}
        +
        \ips{\xb_t^*}{\thetab_t^j-\hat{\thetab}_{t-1}}
        \ge
        \ips{\xb_t^*}{\thetab^*}.
    \]
    Let \(\xb_t(j)\in\argmax_{\xb\in\Xcal_t}\ips{\xb}{\thetab_t^j}\) be the arm
    that would be selected if ensemble member \(j\) were sampled. Then
    \[
        \ips{\xb_t(j)}{\thetab_t^j}
        \ge
        \ips{\xb_t^*}{\thetab_t^j}
        \ge
        \ips{\xb_t^*}{\thetab^*}.
    \]
    Since \(J_t\) is conditionally uniform on \([m]\) given \(\Acal_{t-1}\), it
    follows that, on \(\Ecal_{\rm LS}\cap\Ecal_{\rm exc}\),
    \[
        p_{t-1}
        :=
        \PP\!\left(
            \ips{\xb_t}{\thetab_t}
            \ge
            \ips{\xb_t^*}{\thetab^*}
            \,\middle|\,\Acal_{t-1}
        \right)
        \ge
        \frac{1}{10}
    \]
    for all \(t\in[T]\).
    
    We now apply Lemma~\ref{lem:finite-arm-master-regret}. Its confidence
    conditions are verified on \(\Ecal_{\rm arm}\) and \(\Ecal_{\rm LS}\), and the
    lower bound \(p_{t-1}\ge 1/10\) holds on
    \(\Ecal_{\rm LS}\cap\Ecal_{\rm exc}\). Therefore, applying the finite-arm
    master regret lemma and intersecting with the uniform exceedance event gives
    an event of probability at least \(1-4\delta\) on which
    \[
        \regret_T
        \le
        20\max_{t\in[T]} b_{t-1}
        \left(
            2\sqrt{
                2dT\log\!\left(1+\frac{T}{d\lambda}\right)
            }
            +
            \sqrt{
                2(4T/\lambda+1)
                \log\!\left(
                    \frac{\sqrt{4T/\lambda+1}}{\delta}
                \right)
            }
        \right).
    \]
    
    Finally, by monotonicity of \(\beta_t\),
    \[
        \max_{t\in[T]} b_{t-1}
        \le
        \left(1+\gamma\widetilde C\sqrt{\Lambda_0}\right)\beta_T.
    \]
    Since \(\beta_T=\widetilde O(\sqrt d)\) and
    \[
        1+\gamma\widetilde C\sqrt{\Lambda_0}
        =
        \widetilde O(\sqrt{\Lambda_0}),
    \]
    we obtain
    \[
        \max_{t\in[T]} b_{t-1}
        =
        \widetilde O(\sqrt{d\Lambda_0}).
    \]
    Consequently,
    \[
        \regret_T
        =
        \widetilde O\!\left(
            d\sqrt{T\Lambda_0}
        \right).
    \]
    Since
    \[
        \Lambda_0
        =
        \max\left\{
            1,\,
            C_\Lambda
            \log\!\left(
                \frac{8KM_0T\bar R_0}{\delta}
            \right)
        \right\},
    \]
    and \(M_0\) and \(\bar R_0\) depend only logarithmically on \(T,d\), and
    \(1/\delta\), we have
    \[
        \regret_T
        =
        \widetilde O\!\left(
            d\sqrt{T\log K}
        \right) \, .
    \]
\end{proof}

\section{Technical Lemmas}
\subsection{Proof of Lemma~\ref{lem:epoch-brownian-representation}} \label{appx:proof of epoch-brownian-representation}
\EpochBrownianRepresentation*
\begin{proof}[Proof of Lemma~\ref{lem:epoch-brownian-representation}]
    Let \(\mathcal G_{-1}\) be the \(\sigma\)-field generated by all randomness prior to drawing
    the refreshed Gaussian vectors at the start of epoch \(r\).
    
    For each \(j\in[m]\), define
    \[
    Z_r^j
    :=
    \frac{\ip{\ub}{\Vb_{\tau_r}^{1/2}\gb_r^j}}{\norm{\ub}_{\Vb_{\tau_r}}}.
    \]
    Since \(\gb_r^j \stackrel{\mathrm{i.i.d.}}{\sim} \Ncal(\zero_d,\Ib_d)\) and \(\norm{\ub}_{\Vb_{\tau_r}}>0\),
    the variables \(Z_r^1,\dots,Z_r^m\) are \textit{i.i.d.} \(\Ncal(0,1)\), and are independent of
    \(\mathcal G_{-1}\).
    
    For \(k=1,\dots,\ell_r-1\), let
    \[
    \varepsilon_k := (\xi_{\tau_r+k}^1,\dots,\xi_{\tau_r+k}^m)\in\RR^m,
    \]
    and set
    \[
    \varepsilon_0 := (Z_r^1,\dots,Z_r^m)\in\RR^m.
    \]
    Then \(\varepsilon_0,\varepsilon_1,\dots,\varepsilon_{\ell_r-1}\) are \textit{i.i.d.} \(\Ncal(\zero_m,\Ib_m)\).
    
    Now define
    \[
    \Ub_0 := D_0 \Ib_m,
    \qquad
    \Ub_k := D_k \Ib_m \, ,
    \quad 1 \le k \le \ell_r-1 \, .
    \]
    For \(k\ge 1\), the coefficient
    \[
    D_k = \ip{\ub}{\xb_{\tau_r+k}}
    \]
    is measurable with respect to the randomness up to just before \(\varepsilon_k\) is sampled,
    so \(\Ub_k\) is predictable; \(\Ub_0\) is \(\mathcal G_{-1}\)-measurable.
    
    Moreover, for every \(k=0,\dots,\ell_r-1\),
    \[
    \Mb_k
    =
    \Ub_0\varepsilon_0+\sum_{s=1}^k \Ub_s\varepsilon_s.
    \]
    Indeed, for each \(j\in[m]\),
    \begin{align*}
        \Mb_k^j
        &=
        \ips{\ub}{\sbb_{\tau_r+k}^j} \\
        &=
        \ip{\ub}{\Vb_{\tau_r}^{1/2}\gb_r^j}
        +
        \sum_{s=\tau_r+1}^{\tau_r+k}\ip{\ub}{\xb_s}\,\xi_s^j \\
        &=
        D_0 Z_r^j + \sum_{s=1}^k D_s \xi_{\tau_r+s}^j \, .
    \end{align*}
    Thus \((\Mb_k)\) is a diagonal martingale transform of a standard \(m\)-dimensional Gaussian noise sequence, with diagonal coefficient matrices that are scalar multiples of the identity.
    Applying the diagonal Gaussian embedding theorem of~\citet{janz2026sharp} (Lemma~\ref{aux_lem:akhavan-diagonal-embedding}), we obtain, on an extension of the probability space, independent standard Brownian motions \(W^1,\dots,W^m\) such that
    \begin{equation*}
        \Mb_k^j = W^j(A_{k,j}^2),
        \qquad
        A_{k,j}^2 := \sum_{s=0}^k D_{s,j}^2 \, .        
    \end{equation*}

    Since every coefficient matrix is scalar, the clocks are the same for all \(j\), so
    \[
    A_{k,j}^2 = A_k^2 = D_0^2+\sum_{s=1}^k D_s^2 \, .
    \]
    Finally,
    \[
    A_k^2
    =
    \norm{\ub}_{\Vb_{\tau_r}}^2
    +
    \sum_{s=1}^k \ip{\ub}{\xb_{\tau_r+s}}^2
    =
    \norm{\ub}_{\Vb_{\tau_r+k}}^2,
    \]
    because
    \[
    \Vb_{\tau_r+k}
    =
    \Vb_{\tau_r}
    +
    \sum_{s=1}^k \xb_{\tau_r+s}\xb_{\tau_r+s}^\top \, .
    \]
    Substituting \(k=t-1-\tau_r\) gives
    \[
    \ips{\ub}{\sbb_{t-1}^j}
    =
    W^j\!\bigl(\norm{\ub}_{\Vb_{t-1}}^2\bigr)
    \]
    for all \(t\in\{\tau_r+1,\dots,\tau_{r+1}\}\) and all \(j\in[m]\) \, .
\end{proof}

\subsection{Proof of Lemma~\ref{lem:matrix-drift-perturbation}} \label{appx:proof of lem:matrix-drift-perturbation}
\MatrixDriftPerturbation*
\begin{proof}[Proof of Lemma~\ref{lem:matrix-drift-perturbation}]
    Let
    \[
    \ub := \frac{\yb}{\norm{\yb}_2},
    \qquad
    \Bb := \Ab^{-1} \, .
    \]
    Then we have 
    \[
    \frac{\yb^\top \Ab^{-1}\vb}{\sqrt{\yb^\top \Ab^{-1}\yb}}
    =
    \ip{
    \frac{\Bb\ub}{\sqrt{\ub^\top \Bb\ub}}
    }{\vb} \, .
    \]
    Therefore,
    \[
    \left|
    \frac{\yb^\top \Ab^{-1}\vb}{\sqrt{\yb^\top \Ab^{-1}\yb}}
    -
    \ip{\ub}{\vb}
    \right|
    = 
    \left|
    \ip{
    \frac{\Bb\ub}{\sqrt{\ub^\top \Bb\ub}}
    - \ub}{\vb}
    \right|
    \le
    \left\|
    \frac{\Bb\ub}{\sqrt{\ub^\top \Bb\ub}}-\ub
    \right\|_2
    \norm{\vb}_2 \, ,
    \]
    where the inequality uses Cauchy-Schwarz inequality.
    
    Since $\Ib_d\preceq \Ab\preceq (1+\alpha)\Ib_d$, we have $\frac{1}{1+\alpha}\Ib_d \preceq \Bb \preceq \Ib_d$.
    Hence we get
    \[
    \|\Bb-\Ib_d\|_{\op}
    \le
    1-\frac{1}{1+\alpha}
    =
    \frac{\alpha}{1+\alpha} \, ,
    \qquad
    \ub^\top \Bb\ub\in\left[\frac{1}{1+\alpha},1\right] \, .
    \]
    Thus
    \begin{align*}
        \left\|
        \frac{\Bb\ub}{\sqrt{\ub^\top \Bb\ub}}-\ub
        \right\|_2
        &\le
        \frac{\|(\Bb-\Ib_d)\ub\|_2}{\sqrt{\ub^\top \Bb\ub}}
        +
        \left|
        \frac{1}{\sqrt{\ub^\top \Bb\ub}}-1
        \right| \\
        &\le
        \frac{\alpha}{1+\alpha}\sqrt{1+\alpha}
        +
        \left(\sqrt{1+\alpha}-1\right) \\
        &=
        \frac{\alpha}{\sqrt{1+\alpha}}
        +
        \frac{\alpha}{\sqrt{1+\alpha}+1} \\
        &\le
        \alpha+\alpha
        =
        2\alpha \, ,
    \end{align*}
    which concludes the proof.
\end{proof}

\subsection{Proof of Lemma~\ref{lem:finite-arm-master-regret}}
\FiniteArmMasterRegret*
\begin{proof}[Proof of Lemma~\ref{lem:finite-arm-master-regret}]
    In the proof of Lemma~\ref{aux_lem:master regret} in~\citet{janz2024ensemble}, the only role of the event
    \[
    \Ecal
    =
    \left\{
    \forall t\le T:\ 
    \|\thetab_t-\hat{\thetab}_{t-1}\|_{\Vb_{t-1}}\le b_{t-1}
    \right\}
    \]
    is to imply, by Cauchy--Schwarz, that for any arm \(\xb\in\Xcal_t\),
    \[
    \left|
    \xb^\top(\thetab_t-\hat{\thetab}_{t-1})
    \right|
    \le
    b_{t-1}\|\xb\|_{\Vb_{t-1}^{-1}}.
    \]
    However, in the regret argument this implication is only invoked for two specific arms: the played arm \(\xb_t\) and the optimal arm \(\xb^*_t\) at round $t$. Indeed, on the optimism event
    \[
    \ip{\thetab_t}{\xb_t}\ge \ip{\thetab^*}{\xb^*_t},
    \]
    the instantaneous regret satisfies
    \begin{align*}
    \ip{\xb^*_t-\xb_t}{\thetab^*}
    &\le
    \ip{\xb_t}{\thetab_t-\thetab^*} \\
    &=
    \ips{\xb_t}{\thetab_t-\hat{\thetab}_{t-1}}
    +
    \ips{\xb_t}{\hat{\thetab}_{t-1}-\thetab^*}.
    \end{align*}
    Thus the proof only needs upper bounds on
    \[
    \left|\ips{\xb_t}{\thetab_t-\hat{\thetab}_{t-1}}
    \right|
    \quad\text{and}\quad
    \left|
    \ips{\xb_t}{\hat{\thetab}_{t-1}-\thetab^*}
    \right| \, .
    \]
    Similarly, to show that a sampled parameter is optimistic whenever it is sufficiently large in the optimal-arm direction, the proof only uses the corresponding inequalities at \(\xb^*\):
    \[
    \left|
    \ips{\xb^*_t}{\thetab_t-\hat{\thetab}_{t-1}}
    \right|
    \quad\text{and}\quad
    \left|
    \ips{\xb^*_t}{\hat{\thetab}_{t-1}-\thetab^*}
    \right| \, .
    \]
    
    Therefore, the full confidence-set event \(\Ecal\) is stronger than necessary. It is enough to assume the armwise event
    \[
    \widetilde \Ecal
    =
    \left\{
    \forall t\le T,\ \forall \xb\in\Xcal_t:
    \left|\xb^\top(\thetab_t-\hat{\thetab}_{t-1})\right|
    \le
    b_{t-1}\|\xb\|_{\Vb_{t-1}^{-1}}
    \right\}.
    \]
    Since \(\xb_t\in\Xcal_t\) and \(\xb_t^*\in\Xcal_t\), this replaced event provides exactly the same inequalities used in the proof of Lemma~\ref{aux_lem:master regret}. All subsequent steps, including the optimism-probability argument and the elliptical-potential bound, are unchanged. Hence the same regret bound follows.
\end{proof}

\section{Auxiliary Lemmas}
\begin{lemma}[Lemma 10 in~\citet{abbasi2011improved}] \label{aux_lem:det-tr inequality}
    Suppose $X_1, \ldots, X_t \in \RR^d$ and for any $1 \le s \le t$, $\| X_s \|_2 \le L$,
    Let $\Vb_t = \lambda \Ib_d + \sum_{s=1}^t X_s X_s^\top$ for some $\lambda > 0$.
    Then,
    \begin{equation*}
        \det(\Vb_t) \le (\lambda + t L^2/d)^d \, .
    \end{equation*}
\end{lemma}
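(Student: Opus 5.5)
The statement to prove is the last displayed one, Lemma~\ref{aux_lem:det-tr inequality}, the determinant--trace inequality. The proof is a one-line AM--GM argument applied to the eigenvalues of \(\Vb_t\).

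Since \(\Vb_t\) is symmetric positive definite, it has eigenvalues \(\mu_1,\dots,\mu_d>0\). Its determinant is \(\prod_{i=1}^d \mu_i\), and its trace is \(\sum_{i=1}^d \mu_i\).

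First compute the trace using linearity and \(\mathrm{tr}(X_sX_s^\top)=\|X_s\|_2^2\):
\[
\mathrm{tr}(\Vb_t)=\lambda d+\sum_{s=1}^t \|X_s\|_2^2\le \lambda d + tL^2 ,
\]
where the inequality uses \(\|X_s\|_2\le L\).

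Next apply the arithmetic--geometric mean inequality to the nonnegative eigenvalues:
\[
\prod_{i=1}^d \mu_i\le\left(\frac1d\sum_{i=1}^d\mu_i\right)^d=\left(\frac{\mathrm{tr}(\Vb_t)}{d}\right)^d .
\]

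Combining the two displays gives
\[
\det(\Vb_t)\le\left(\frac{\lambda d+tL^2}{d}\right)^d=\left(\lambda+\frac{tL^2}{d}\right)^d .
\]

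There is no genuine obstacle in this argument. The only facts used are that the eigenvalues are positive, so AM--GM applies, and the trace identity for rank-one matrices.
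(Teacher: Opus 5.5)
Your proof is correct and is the standard argument (AM--GM on the eigenvalues of $\Vb_t$ combined with $\mathrm{tr}(\Vb_t)=\lambda d+\sum_{s}\|X_s\|_2^2\le \lambda d+tL^2$), which is how \citet{abbasi2011improved} prove their Lemma~10. The paper itself does not reprove the statement and simply cites that lemma.
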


\begin{lemma}[Theorem 5.3 in~\citet{janz2026sharp}] \label{aux_lem:akhavan-diagonal-embedding}
    Fix a horizon \(N\in\mathbb N\). Assume
    \[
    M_t=\sum_{s=0}^t \diag(D_s)\,\xi_s,
    \qquad 0\le t\le N-1,
    \]
    is a diagonal martingale transform of a standard \(m\)-dimensional Gaussian noise.
    Define the coordinate clocks
    \[
    A_{t,j}^2:=\sum_{s=0}^t D_{s,j}^2,
    \qquad j\in[m].
    \]
    Then, on an extension of the probability space, there exist independent standard
    Brownian motions \(W^1,\dots,W^m\) such that for all
    \(t\in\{0,\dots,N-1\}\) and all \(j\in[m]\),
    \[
    M_{t,j}=W^j(A_{t,j}^2).
    \]    
\end{lemma}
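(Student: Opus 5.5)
The statement to prove is the diagonal Gaussian embedding (Theorem 5.3 of \citet{janz2026sharp}): a diagonal martingale transform $M_t=\sum_{s=0}^t \diag(D_s)\xi_s$ with predictable coefficients $D_s$ and i.i.d.\ standard Gaussian innovations $\xi_s\in\RR^m$ can be realized as $M_{t,j}=W^j(A_{t,j}^2)$ for independent Brownian motions $W^1,\dots,W^m$. I would build the Brownian motions piece by piece, by an inductive Skorokhod-free construction that exploits Gaussianity directly. Coordinate $j$ of the increment at step $s$ is $D_{s,j}\xi_{s,j}$. Conditionally on the past $\Fcal_{s-1}$ (which fixes $D_s$ by predictability), this is a $\Ncal(0,D_{s,j}^2)$ variable, and the coordinates are conditionally independent. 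That is exactly the law of the increment $W^j(A_{s,j}^2)-W^j(A_{s-1,j}^2)$ of a Brownian motion over a time interval of length $D_{s,j}^2$.

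The construction proceeds in four steps.

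\textbf{Step 1.} Enlarge the space with independent families of auxiliary Brownian motions $B^{j}_s$, for $s\ge 0$ and $j\in[m]$, independent of everything else. Also add one extra sequence of independent Brownian motions $\tilde W^j$ to continue each clock beyond the final time $A_{N-1,j}^2$.

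\textbf{Step 2.} On each clock interval $[A_{s-1,j}^2,A_{s,j}^2]$, define $W^j$ as a Brownian bridge from its current value to the value $M_{s,j}$ at the right endpoint. Concretely, on that interval set
\[
W^j(A_{s-1,j}^2+u)=M_{s-1,j}+\tfrac{u}{D_{s,j}^2}D_{s,j}\xi_{s,j}+\bigl(B^j_s(u)-\tfrac{u}{D_{s,j}^2}B^j_s(D_{s,j}^2)\bigr),\qquad u\in[0,D_{s,j}^2].
\]
If $D_{s,j}=0$, the interval is empty and nothing needs to be defined. After time $A_{N-1,j}^2$, continue $W^j$ with $\tilde W^j$. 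By construction $M_{t,j}=W^j(A_{t,j}^2)$.

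\textbf{Step 3.} Verify that $(W^1,\dots,W^m)$ is a standard $m$-dimensional Brownian motion. Since $D_s$ is $\Fcal_{s-1}$-measurable, conditionally on $\Fcal_{s-1}$ the pair (Gaussian endpoint, independent bridge) has the law of a Brownian path on an interval of length $D_{s,j}^2$ started at $M_{s-1,j}$. This path is also independent across $j$ and independent of the earlier path pieces given $\Fcal_{s-1}$.

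\textbf{Step 4.} Glue the pieces. The clocks $A_{s,j}^2$ are stopping times for the filtration generated by the constructed paths together with $\Fcal$. One then shows by the strong Markov property, run in reverse, that concatenating conditionally independent Brownian pieces over predictable random lengths yields a Brownian motion. The practical route is to verify that for every $j$ and every $\lambda$, the process $\exp(\lambda W^j(u)-\lambda^2u/2)$ is a martingale in the time-changed filtration. Independence of the coordinates follows from the joint mean-zero Gaussian structure with zero cross-covariation, via L\'evy's characterization applied to the vector process.

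The main obstacle is Step 4. The lengths $D_{s,j}^2$ are random and predictable, and the coordinates have different clocks, so the pieces for different $j$ are not aligned in time. The natural joint filtration must be indexed by a single real time $u$ while the construction is indexed by $s$. I would handle this by defining the filtration $\Hcal_u$ generated by all path pieces up to clock $u$ in each coordinate, together with $\Fcal_{s}$ for every $s$ whose clocks have all passed $u$. I would then check the martingale property of the exponential martingales on each interval between consecutive clock breakpoints, where the conditional Gaussian-bridge structure makes it immediate. Finally I would apply L\'evy's characterization to conclude that $W$ is a standard Brownian motion with independent coordinates.
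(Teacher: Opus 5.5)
The paper does not prove this statement; it imports it from \citet{janz2026sharp}, so your proposal has to stand on its own.

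\textbf{What works.} Steps 1--3 are sound. Set $\Gcal_s:=\Fcal_s\vee\sigma(B_0,\dots,B_s)$. Conditionally on $\Gcal_{s-1}$, the step-$s$ pieces are independent Brownian paths over intervals whose lengths $D_{s,j}^2$ are $\Gcal_{s-1}$-measurable.

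\textbf{The gap is Step 4.} In general there is no single-time filtration $(\Hcal_u)$ in which the breakpoints $A_{s,j}^2$ are stopping times and $W$ is a Brownian motion. The reason is that one coordinate's breakpoint can depend on another coordinate's value at a much later clock time. Take $m=N=2$, $D_0=(1,10)$ and $D_1=(f(\xi_{0,2}),1)$, with $f(x)=1$ for $x>0$ and $f(x)=2$ otherwise. Then $W^2(100)=M_{0,2}=10\,\xi_{0,2}$, so $\{A_{1,1}^2\le 2\}=\{W^2(100)>0\}$. Suppose this event lay in $\Hcal_2$ while $W^2$ is an $(\Hcal_u)$-Brownian motion. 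Then its conditional probability given $\Hcal_2$ would be both an indicator and $\Phi\bigl(W^2(2)/\sqrt{98}\bigr)\in(0,1)$, a contradiction.

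So the interval-by-interval check of exponential martingales in common time is unavailable. Joint Gaussianity with zero cross-covariation is exactly what must be proved; it cannot be fed into L\'evy's characterization as an input. Revealing $\Gcal_{s-1}$ at clock time $A_{s-1,j}^2$ separately for each $j$ does make each $W^j$ a Brownian motion in its own filtration. But these filtrations differ across $j$, so L\'evy's theorem gives only the marginals. Independence across coordinates is precisely the content of Knight's theorem.

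\textbf{The missing idea} is to identify the joint law in the original discrete index, as in the proof of Knight's theorem. Take deterministic step functions $f_j=\sum_{\ell}c_{j\ell}\ind_{[0,u_\ell]}$ and set $A_{-1,j}^2:=0$. Conditionally on $\Gcal_{s-1}$, the increment $\sum_j\int_{A_{s-1,j}^2}^{A_{s,j}^2}f_j\,dW^j$ is centered Gaussian with variance $\sum_j\int_{A_{s-1,j}^2}^{A_{s,j}^2}f_j^2$; this follows from Step 3. Hence
\[
Z_s:=\exp\Bigl(i\sum_{j}\int_0^{A_{s,j}^2}f_j\,dW^j+\tfrac12\sum_{j}\int_0^{A_{s,j}^2}f_j^2\Bigr)
\]
is a bounded $(\Gcal_s)$-martingale. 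Appending the tails $\tilde W^j$ then gives
\[
\EE\exp\bigl(i\sum_{j,\ell}c_{j\ell}W^j(u_\ell)\bigr)=\exp\bigl(-\tfrac12\sum_j\int_0^\infty f_j^2\bigr),
\]
which is the finite-dimensional law of $m$ independent Brownian motions. Path continuity holds by construction.

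Alternatively, interpolate step $s$ over $[s,s+1]$ using your Brownian bridges. This gives orthogonal continuous martingales, and you can cite Knight's theorem directly. With Step 4 replaced in either way, your construction is a complete proof.
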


\begin{lemma}[Lemma 7 in~\citet{lee2024improved}] \label{aux_lem:gaussian-norm-concentration}
    If $Z \sim \Ncal( \zero_d, \Ib_d)$ is a $d$-dimensional multivariate Gaussian vector, then for any $\delta \in (0,1]$, 
    \begin{equation*}
        \PP \left( \| Z \|_2 \ge \sqrt{d} + \sqrt{2 \log \frac{1}{\delta}} \le \delta \, .
        \right)
    \end{equation*}    
\end{lemma}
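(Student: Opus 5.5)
The statement to prove is the last auxiliary lemma (Lemma~\ref{aux_lem:gaussian-norm-concentration}): for $Z\sim\Ncal(\zero_d,\Ib_d)$ and $\delta\in(0,1]$,
\[
\PP\bigl(\|Z\|_2\ge \sqrt d+\sqrt{2\log(1/\delta)}\bigr)\le\delta .
\]
The plan is to use Gaussian concentration for Lipschitz functions, applied to $f(\zb)=\|\zb\|_2$. This $f$ is $1$-Lipschitz on $\RR^d$ by the reverse triangle inequality, $|\|\zb\|_2-\|\zb'\|_2|\le\|\zb-\zb'\|_2$.

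The key input is the Tsirelson--Ibragimov--Sudakov inequality, or equivalently the Gaussian logarithmic Sobolev inequality combined with the Herbst argument. It states that for any $1$-Lipschitz $f$ and every $x\ge0$,
\[
\PP\bigl(f(Z)-\EE f(Z)\ge x\bigr)\le e^{-x^2/2}.
\]
If a self-contained route is wanted, I would derive this from the Gaussian Poincaré/log-Sobolev inequality via Herbst, bounding $\log\EE e^{s(f-\EE f)}\le s^2/2$, and then optimizing the Chernoff bound. This constant-sharp ($e^{-x^2/2}$) concentration step is the only nontrivial ingredient. Everything else is elementary, so I would cite it as standard rather than reprove it.

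Next, bound the mean by Jensen's inequality:
\[
\EE\|Z\|_2\le\sqrt{\EE\|Z\|_2^2}=\sqrt d .
\]
It follows that the event $\{\|Z\|_2\ge\sqrt d+x\}$ is contained in $\{\|Z\|_2-\EE\|Z\|_2\ge x\}$. Applying the concentration bound with $x=\sqrt{2\log(1/\delta)}$ then gives probability at most $e^{-\log(1/\delta)}=\delta$. The case $\delta=1$ is trivial.

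Finally, note a typographical issue: the displayed statement has the inequality placed inside the probability parentheses. It should be read as $\PP(\cdot)\le\delta$, and this is the form used in the proofs of Lemma~\ref{lem:uniform-state-bound} and Lemma~\ref{lem:uniform arm-wise concentration}, via $x=\log(1/\delta)$ and $e^{-x}$.
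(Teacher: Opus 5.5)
Your argument is correct. The paper gives no proof of its own and simply imports the bound from Lemma~7 of \citet{lee2024improved}. Your derivation is the standard argument behind that citation: the sharp $e^{-x^2/2}$ Gaussian concentration for the $1$-Lipschitz map $z\mapsto\|z\|_2$, combined with $\EE\|Z\|_2\le\sqrt d$ from Jensen. You are also right to read the misplaced $\le\delta$ as a typo, and the reading $\PP(\cdot)\le\delta$ matches how the lemma is used in Lemmas~\ref{lem:uniform-state-bound} and~\ref{lem:uniform arm-wise concentration}.

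One caveat on your optional self-contained route: the Herbst argument requires the Gaussian log-Sobolev inequality. The Poincar\'e inequality alone yields only sub-exponential tails, not $e^{-x^2/2}$.
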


\begin{lemma}[Theorem 1 in~\citet{abbasi2011improved}] \label{aux_lem:self-normalized inequality}
    Let \(\{F_t\}_{t=0}^\infty\) be a filtration. Let \(\{\eta_t\}_{t=1}^\infty\) be a real-valued stochastic process such that \(\eta_t\) is \(F_t\)-measurable and \(\eta_t\) is conditionally \(R\)-sub-Gaussian for some \(R \ge 0\).
    Let \(\{X_t\}_{t=1}^\infty\) be an \(\mathbb{R}^d\)-valued stochastic process such that \(X_t\) is \(F_{t-1}\)-measurable. Assume that \(V\) is a \(d\times d\) positive definite matrix. For any \(t\ge 0\), define
    \[
    \overline V_t
    =
    V+\sum_{s=1}^t X_sX_s^\top,
    \qquad
    S_t
    =
    \sum_{s=1}^t \eta_s X_s.
    \]
    
    Then, for any \(\delta>0\), with probability at least \(1-\delta\), for all \(t\ge 0\),
    \[
    \|S_t\|_{\overline V_t^{-1}}^2
    \le
    2R^2\log\!\left(
    \frac{\det(\overline V_t)^{1/2}\det(V)^{-1/2}}{\delta}
    \right).
    \]
\end{lemma}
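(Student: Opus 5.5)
This statement is Lemma~\ref{aux_lem:self-normalized inequality}, the self-normalized bound of \citet{abbasi2011improved}. I would prove it by the method of mixtures. The conclusion is that, with probability at least \(1-\delta\), for all \(t\ge0\),
\[
\|S_t\|_{\overline V_t^{-1}}^2 \le 2R^2\log\!\left(\det(\overline V_t)^{1/2}\det(V)^{-1/2}/\delta\right).
\]

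First, by rescaling \(\eta_t\mapsto\eta_t/R\), I would reduce to \(R=1\); the case \(R=0\) is trivial. For a fixed \(\xb\in\RR^d\), define
\[
M_t(\xb):=\exp\!\left(\ips{\xb}{S_t}-\tfrac12\|\xb\|_{\Vb'_t}^2\right),
\qquad \Vb'_t:=\sum_{s\le t}X_sX_s^\top .
\]
Since \(X_t\) is \(F_{t-1}\)-measurable and \(\eta_t\) is conditionally \(1\)-sub-Gaussian, we have
\[
\EE\!\left[\exp\!\left(\ips{\xb}{X_t}\eta_t-\tfrac12\ips{\xb}{X_t}^2\right)\mid F_{t-1}\right]\le 1 .
\]
Hence \(M_t(\xb)\) is a nonnegative supermartingale with \(M_0=1\).

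Next, I would integrate over \(\xb\sim\Ncal(\zero_d,V^{-1})\), independently of everything else. By Tonelli, the mixture \(\bar M_t:=\int M_t(\xb)\,dh(\xb)\) is again a nonnegative supermartingale with \(\EE\bar M_0=1\). Completing the square in the Gaussian integral gives the closed form
\[
\bar M_t=\left(\frac{\det V}{\det \overline V_t}\right)^{1/2}\exp\!\left(\tfrac12\|S_t\|_{\overline V_t^{-1}}^2\right).
\]
This computation is the step I expect to require the most care: one must combine the quadratic forms \(V+\Vb'_t=\overline V_t\) correctly and track the normalizing determinants.

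Finally, I would apply Ville's maximal inequality to obtain \(\PP(\exists t:\bar M_t\ge 1/\delta)\le\delta\). Taking logarithms then yields exactly the stated bound, uniformly in \(t\). If one prefers to avoid Ville's inequality, an alternative is to use a stopping time \(\tau\) equal to the first violation time and apply Fatou's lemma together with optional stopping to obtain \(\EE\bar M_\tau\le1\), followed by Markov's inequality. Restoring \(R\) by scaling gives the factor \(2R^2\).
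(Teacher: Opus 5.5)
Your proposal is correct and is essentially the standard proof from \citet{abbasi2011improved}; the paper does not reprove this lemma and only cites it. That proof likewise mixes the exponential supermartingales over $\mathcal{N}(\mathbf{0},V^{-1})$ to obtain the closed form $(\det V/\det\overline V_t)^{1/2}\exp\bigl(\tfrac12\|S_t\|^2_{\overline V_t^{-1}}\bigr)$, then gets uniformity in $t$ by the stopping-time, Fatou and Markov argument, which you list as an alternative to Ville's inequality.
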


\section{Experimental Details} \label{appx:experiments}
\subsection{Experimental Setup}
We evaluate all methods on synthetic stochastic linear bandit instances over the unit Euclidean ball in \(\mathbb{R}^d\).
For each seed, the unknown parameter \(\thetab^*\) is sampled from a standard Gaussian distribution and normalized to unit norm.
We compare LinES~\citep{janz2026sharp}, $\algname$ (ours) against randomized baselines LinTS~\citep{abeille2017Linear} and LinPHE~\citep{kveton2020perturbed}.
Unless otherwise specified, we use parameter \(\lambda=1\), failure probability \(\delta=10^{-4}\), Gaussian noise level \(\sigma=0.5\), and average results over 10 independent seeds.
The ensemble sizes are set according to the theoretical prescriptions: \(m=\lceil d\log(T/\delta)\rceil\) for \texttt{LinES} and \(m=\lceil d\log d+d\log\log T+\log(1/\delta)\rceil\) for $\algname$.
Runtime is measured as the cumulative wall-clock time spent in arm selection and model updates.
Experiments were run on a MacBook Pro with an Apple M4 Pro chip, 12 CPU cores, and 24 GB memory, running macOS 26.3.1.

For $\algname$, we implement epoch refreshes using an infrequent scheduled check rather than testing the refresh condition at every round. Specifically, the refresh criterion is evaluated only at geometrically spaced checkpoints within each horizon scale, which avoids adding a per-round eigenvalue computation to the runtime. This implementation preserves the intended epoch-reset behavior while substantially reducing overhead; empirically, the resulting scheduled refreshes do not noticeably degrade performance in our synthetic experiments.

\subsection{Additional Experiments}
Figure~\ref{fig:unitball-d} reports cumulative regret and runtime for
\(T=10{,}000\) with \(d\in\{10,20,50\}\). Across all dimensions,
\texttt{LinES-ER} achieves regret comparable to \texttt{LinES} while using
substantially fewer ensemble members. The computational benefit becomes more
visible as the dimension increases. For example, when \(d=50\),
\texttt{LinES-ER} uses \(316\) ensemble members compared with \(922\) for
\texttt{LinES}, and reduces the runtime from \(0.498\) seconds to \(0.316\)
seconds, corresponding to a runtime reduction of about $\frac{0.498-0.316}{0.498}\times 100 \approx 36.5\%$.
These results support the main practical implication of our theory: reducing
the required ensemble size preserves the statistical performance of ensemble
sampling while improving computational efficiency.

\begin{table}[h!]
    \centering
    \caption{
    Comparison of theoretical ensemble sizes and runtime improvement for different dimensions.
    The baseline ensemble size \(m_{\mathrm{base}}\) is computed using the prescription of~\citet{janz2026sharp},
    \(m_{\mathrm{base}}=\lceil d\log(T/\delta)\rceil\), whereas our ensemble size is computed as
    \(m_{\mathrm{ours}}=\lceil d\log d+d\log\log T+\log(1/\delta)\rceil\).
    Runtime improvement is measured relative to \texttt{LinES}.
    }
    \label{tab:runtime-improvement}
    \begin{tabular}{ccccccc}
    \toprule
    \(T\) & \(d\) & \(\delta\) & \(m_{\mathrm{ours}}\) & \(m_{\mathrm{base}}\) & \(m_{\mathrm{base}}/m_{\mathrm{ours}}\) & Runtime improvement \\
    \midrule
    \(10^4\) & \(10\) & \(10^{-4}\) & \(55\)  & \(185\) & \(3.36\) & \(9.65\%\) \\
    \(10^4\) & \(20\) & \(10^{-4}\) & \(114\) & \(369\) & \(3.24\) & \(24.71\%\) \\
    \(10^4\) & \(30\) & \(10^{-4}\) & \(178\) & \(553\) & \(3.10\) & \(30.5\%\) \\
    \(10^4\) & \(50\) & \(10^{-4}\) & \(316\) & \(922\) & \(2.92\) & \(36.55\%\) \\
    \bottomrule
    \end{tabular}
\end{table}

Table~\ref{tab:runtime-improvement} summarizes how the theoretical ensemble sizes translate into runtime gains. 
For all dimensions, \(\algname\) uses substantially fewer ensemble members than \texttt{LinES}; the baseline ensemble size is about \(3.36\), \(3.24\), $3.12$ and \(2.92\) times larger for \(d=10,20, 30, 50\), respectively. 
Despite this reduction, Figure~\ref{fig:unitball-d} shows that \(\algname\) maintains comparable cumulative regret. 
The smaller ensemble size also leads to lower runtime, with the improvement increasing as the dimension grows: from \(9.65\%\) at \(d=10\) to \(36.55\%\) at \(d=50\). 
These results support the computational benefit of the proposed reduced-ensemble design.

\begin{figure}[h!] 
    \centering
    \begin{subfigure}[c]{\linewidth}
        \includegraphics[width=\linewidth]{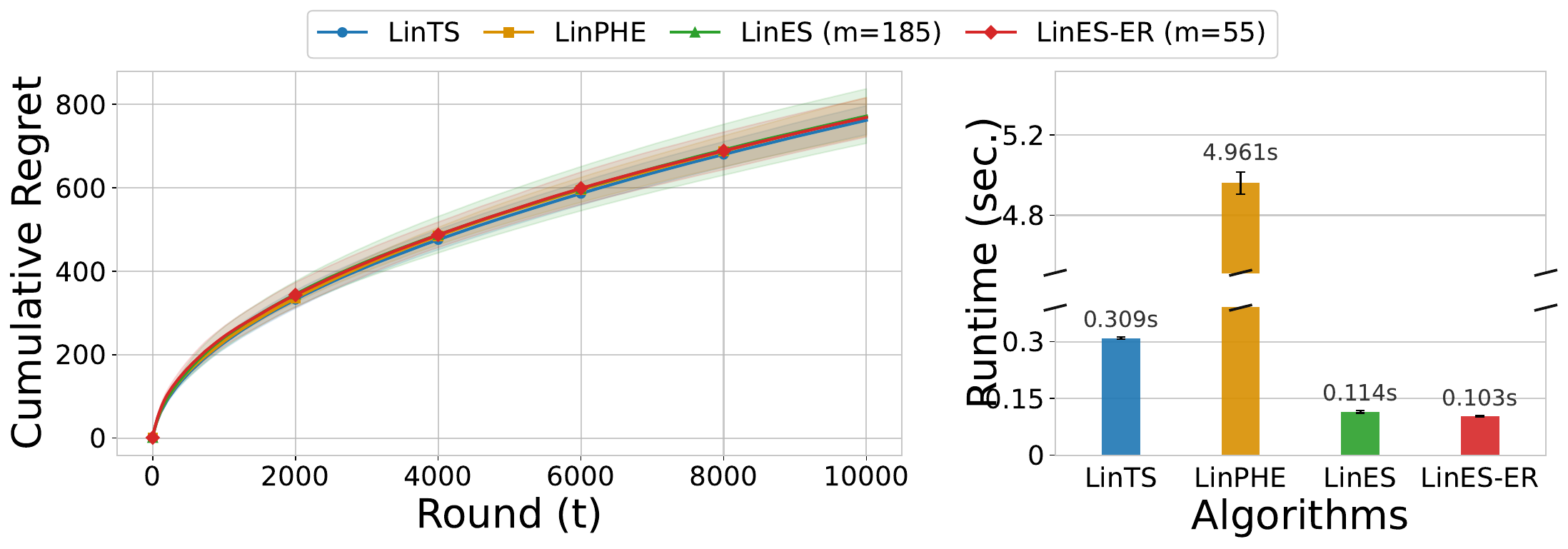}
        \caption{$T=10{,}000$ and $d = 10$}
    \end{subfigure}
    
    \centering
    \begin{subfigure}[c]{\linewidth}
        \includegraphics[width=\linewidth]{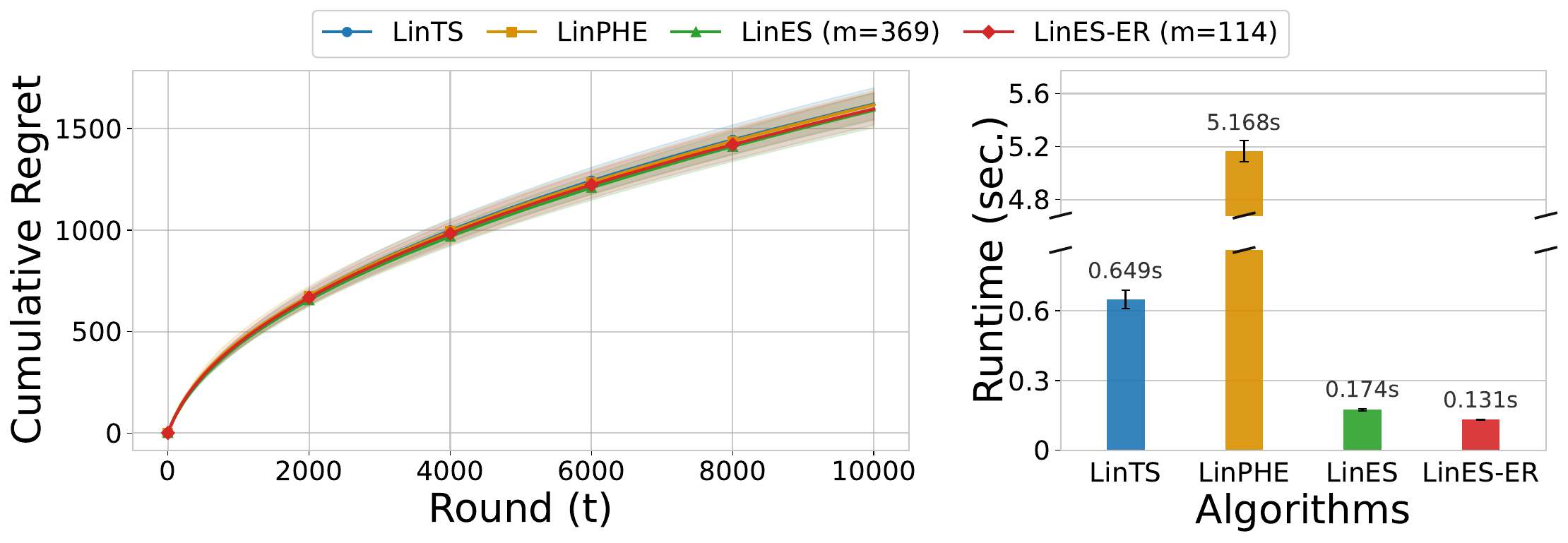}
        \caption{$T=10{,}000$ and $d = 20$}
    \end{subfigure}

    \centering
    \begin{subfigure}[c]{\linewidth}
        \includegraphics[width=\linewidth]{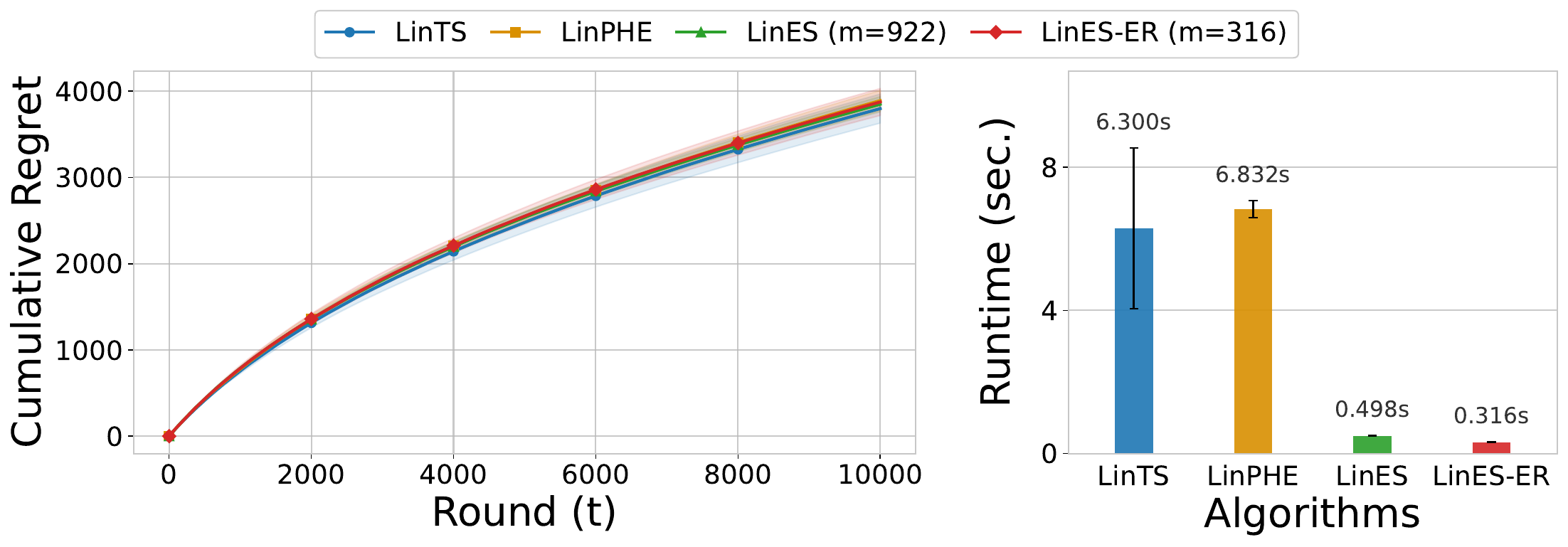}
        \caption{$T=10{,}000$ and $d = 50$}
    \end{subfigure}    

    \caption{\small Cumulative regret and runtime on stochastic linear bandits over \(\Bcal_2^d\)
    with \(T=10{,}000\) and \(d\in\{10,20,50\}\).
    }
    \label{fig:unitball-d}
\end{figure}

\subsection{Anytime Implementation}
\paragraph{Algorithm description.}
We now describe an anytime implementation of $\algname$ for the case where the horizon \(T\) is unknown. 
The main idea is to combine the proposed epoch-refresh mechanism with a doubling schedule~\citep{besson2018doubling}.
Let
$
    H_0,\, H_1,\, H_2,\ldots
$
be a sequence of horizon proxies satisfying \(H_{\ell+1}=2H_\ell\). 
At the beginning of each doubling phase, the algorithm updates the target ensemble size according to the current horizon proxy \(H_\ell\). 
However, unlike standard ensemble-sampling methods~\citep{janz2024ensemble, sun2025provable, janz2026sharp}, this does not require restarting the entire algorithm.
The regularized Gram matrix \(\Vb_t\), the ridge estimator \(\hat{\thetab}_t\), and all previously collected observations are retained. 
Only the ensemble perturbations are reinitialized according to the current Gram matrix.

This is possible because the epoch-refresh construction already represents the ensemble perturbations relative to the current Gram matrix.
Hence, when the ensemble size needs to be increased, the algorithm can simply start a new epoch and initialize the enlarged ensemble using the current Gram matrix \(\Vb_{t-1}\). 
In contrast, prior ensemble-sampling methods without such a refresh mechanism typically need to restart under the doubling trick, since new ensemble members cannot be made consistent with the previous perturbation history without replaying or discarding past data.
The anytime implementation of $\algname$ is described in Algorithm~\ref{alg:anytime-main-algorithm}.

\begin{algorithm}[h!]
    \caption{Anytime Linear Ensemble Sampling with Epoch Refresh}
    \label{alg:anytime-main-algorithm}
    \begin{algorithmic}[1]
    \State \textbf{Inputs:} initial horizon proxy \(H_0\in\NN\), regularization parameter \(\lambda>0\), inflation parameter \(\gamma>0\), confidence radii \(\{\beta_t\}_{t\ge0}\), epoch refresh parameter \(\alpha>0\), ensemble-size rule \(M(\cdot)\)
    \State Set \(H\gets H_0\), \(m\gets M(H)\)
    \State Initialize \(\Vb_0=\lambda\Ib_d\), \(\hat{\thetab}_0=\zero_d\), and set the current epoch start time \(\tau=0\)
    \State Sample \(\gb^j\sim\Ncal(\zero_d,\Ib_d)\) and set $\sbb_0^j=\Vb_0^{1/2}\gb^j$, $\thetab_0^j=\Vb_0^{-1}\sbb_0^j$ for each $j \in [m]$
    \For{\(t=1,2,\ldots\)}
        \If{\(t>H\)}
            \Comment{\textit{Doubling update}}
            \State Set \(H\gets 2H\), \(m\gets M(H)\), and \(\tau\gets t-1\)
            \State For each \(j\in[m]\), sample \(\gb^j\sim\Ncal(\zero_d,\Ib_d)\) and set
            \[
                \sbb_{t-1}^j=\Vb_\tau^{1/2}\gb^j,
                \qquad
                \thetab_{t-1}^j
                =
                \hat{\thetab}_{t-1}
                +
                \gamma\beta_{t-1}\Vb_{t-1}^{-1}\sbb_{t-1}^j
            \]
        \EndIf

        \If{\(\Vb_{t-1}\npreceq (1+\alpha)\Vb_\tau\)}
            \Comment{\textit{Refresh the ensemble perturbations}}
            \State Set \(\tau\gets t-1\)
            \State For each \(j\in[m]\), sample \(\gb^j\sim\Ncal(\zero_d,\Ib_d)\) and set
            \[
                \sbb_{t-1}^j=\Vb_\tau^{1/2}\gb^j,
                \qquad
                \thetab_{t-1}^j
                =
                \hat{\thetab}_{t-1}
                +
                \gamma\beta_{t-1}\Vb_{t-1}^{-1}\sbb_{t-1}^j
            \]
        \EndIf

        \State Observe \(\Xcal_t\), sample \(J_t\sim\unif([m])\), choose
        $
            \xb_t
            \in
            \argmax_{\xb\in\Xcal_t}
            \ips{\xb}{\thetab_{t-1}^{J_t}},
        $
        and observe \(y_t\)

        \State Update
        $
            \Vb_t
            =
            \Vb_{t-1}
            +
            \xb_t\xb_t^\top
        $
        and
        $
            \hat{\thetab}_t
            =
            \Vb_t^{-1}
            \sum_{s=1}^{t}
            \xb_s y_s
        $
        \State For each \(j\in[m]\), sample \(\xi_t^j\sim\Ncal(0,1)\) and update
        \[
            \sbb_t^j
            =
            \sbb_{t-1}^j
            +
            \xb_t\xi_t^j,
            \qquad
            \thetab_t^j
            =
            \hat{\thetab}_t
            +
            \gamma\beta_t
            \Vb_t^{-1}
            \sbb_t^j
        \]
    \EndFor
    \end{algorithmic}
\end{algorithm}

We briefly justify that Algorithm~\ref{alg:anytime-main-algorithm} preserves the
fixed-horizon regret guarantee up to logarithmic factors.  Let
\(H_\ell=2^\ell H_0\), and define
\[
    I_0:=\{1,\ldots,H_0\},\qquad
    I_\ell:=\{H_{\ell-1}+1,\ldots,H_\ell\}\quad(\ell\ge1),
\]
with \(n_\ell:=|I_\ell|\).  Allocate
\[
    \delta_\ell:=\frac{6\delta}{\pi^2(\ell+1)^2},
    \qquad
    \sum_{\ell=0}^\infty \delta_\ell\le\delta .
\]
Fix a phase \(I_\ell\), and condition on the history before this phase. Then
\(\Vb_{a_\ell}\) and \(\hat{\thetab}_{a_\ell}\) are fixed, and the refreshed
perturbations satisfy
\[
    \sbb_{a_\ell}^j=\Vb_{a_\ell}^{1/2}\gb_\ell^j,
    \qquad
    \gb_\ell^j\sim\Ncal(\zero_d,\Ib_d).
\]
Since the Gram matrix and ridge estimator retain all past data, the global
least-squares confidence event remains valid. Conditioning on the phase-start
history merely fixes the initial Gram matrix for the phase-local perturbation
analysis. Hence, the fixed-horizon perturbation and optimism arguments apply
after a time shift, with \(\Vb_{a_\ell}\) replacing \(\lambda\Ib_d\).

Since \(\Vb_{a_\ell}\succeq \lambda\Ib_d\), the number of refresh epochs inside
\(I_\ell\) is bounded by
\[
    1+
    \frac{
        d\log\!\left(1+n_\ell/(\lambda d)\right)
    }{
        \log(1+\alpha)
    } .
\]
Therefore, choosing \(m_\ell=M(H_\ell)\) according to the fixed-horizon rule
with confidence level \(\delta_\ell\), the proof of
Theorem~\ref{thm:regret bound for infinite arm set} gives, with probability at
least \(1-\delta_\ell\),
\[
    \regret_{I_\ell}
    \le
    \widetilde \Ocal \!\left(d^{3/2}\sqrt{n_\ell}\right).
\]
The bound also holds for every prefix of \(I_\ell\).

By a union bound, all phase-wise bounds hold simultaneously with probability at
least \(1-\delta\).  For any \(T\ge1\), let
\(L(T):=\min\{\ell:T\le H_\ell\}\).  Then
\[
    \regret_T
    \le
    \sum_{\ell=0}^{L(T)}
    \regret_{I_\ell\cap[T]}
    \le
    \widetilde \Ocal \!\left(
        d^{3/2}
        \sum_{\ell=0}^{L(T)}\sqrt{n_\ell}
    \right).
\]
Since \(n_0=H_0\) and \(n_\ell=H_{\ell-1}\) for \(\ell\ge1\),
\[
    \sum_{\ell=0}^{L(T)}\sqrt{n_\ell}
    =
    \Ocal (\sqrt T)
\]
for fixed \(H_0\).  Hence
\[
    \regret_T
    =
    \widetilde \Ocal \!\left(d^{3/2}\sqrt T\right)
\]
simultaneously for all \(T\ge1\), up to the standard logarithmic factors from
the doubling schedule~\citep{besson2018doubling}.

\paragraph{Experiments.}
\begin{figure}[h!] 
    \centering
    \begin{subfigure}[c]{\linewidth}
        \centering
        \includegraphics[width=0.9\linewidth]{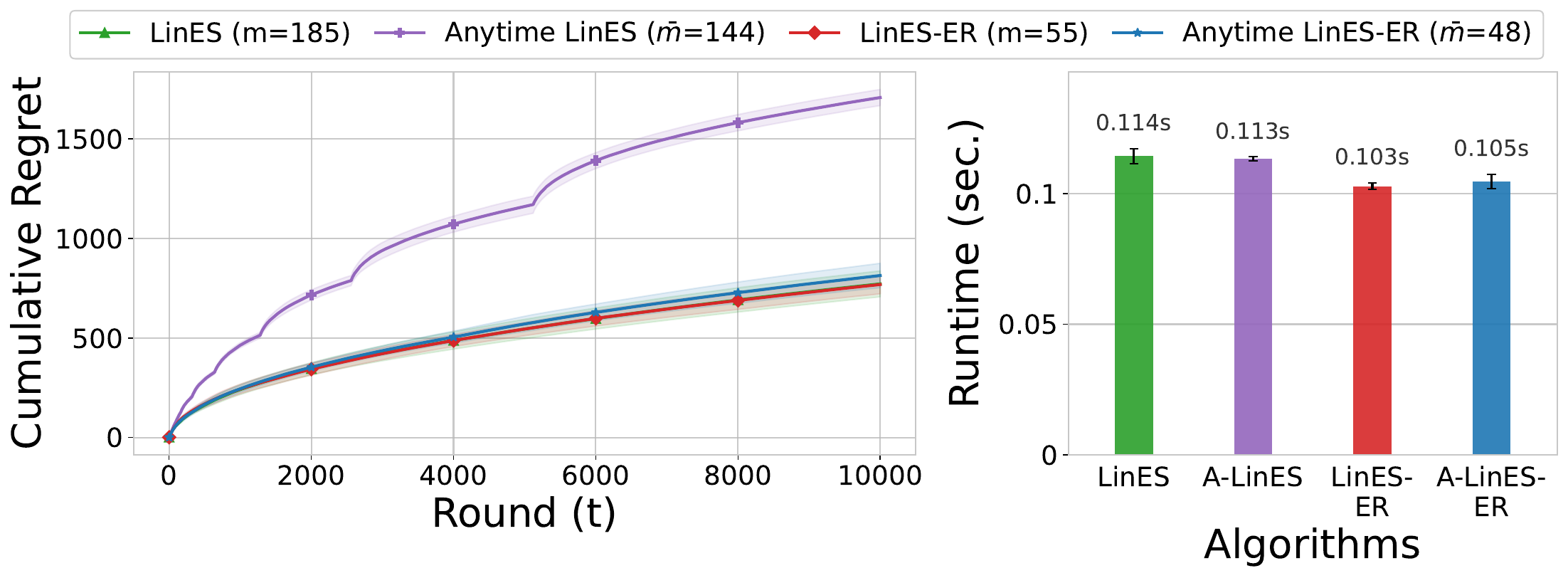}
        \caption{$T=10{,}000$ and $d = 10$}
    \end{subfigure}
    
    \centering
    \begin{subfigure}[c]{\linewidth}
        \centering
        \includegraphics[width=0.9\linewidth]{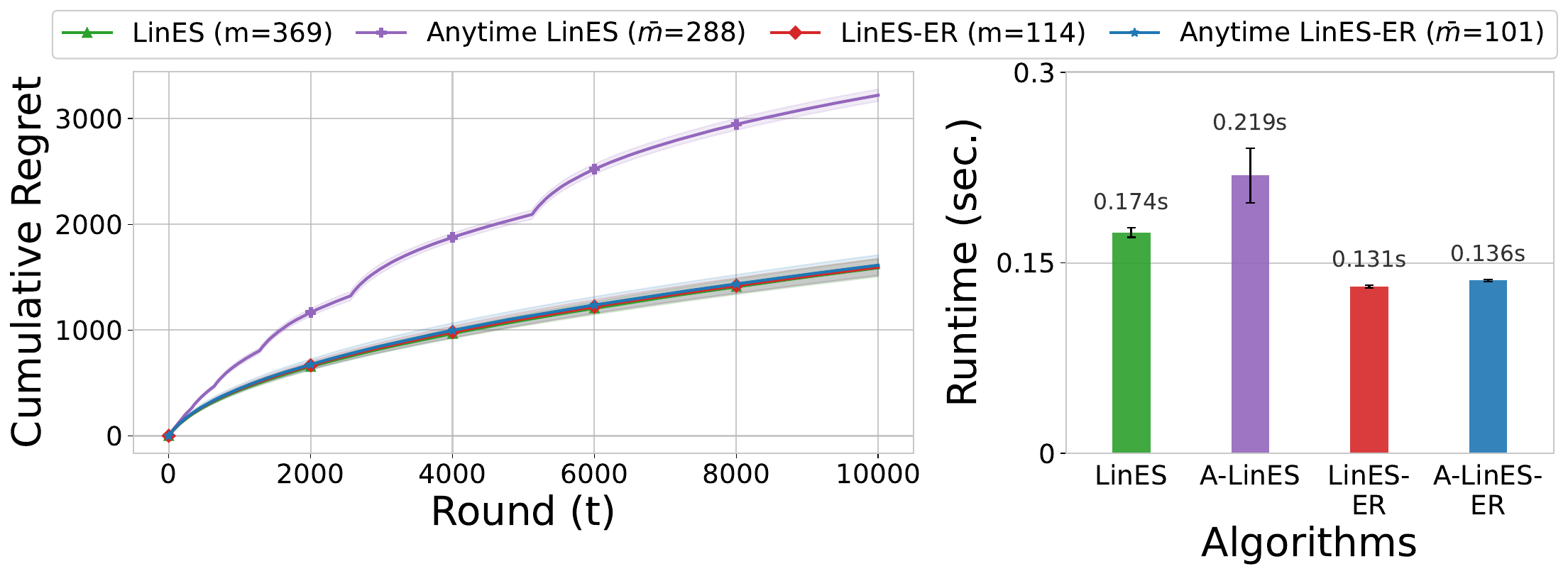}
        \caption{$T=10{,}000$ and $d = 20$}
    \end{subfigure}

    \centering
    \begin{subfigure}[c]{0.9\linewidth}
        \centering
        \includegraphics[width=\linewidth]{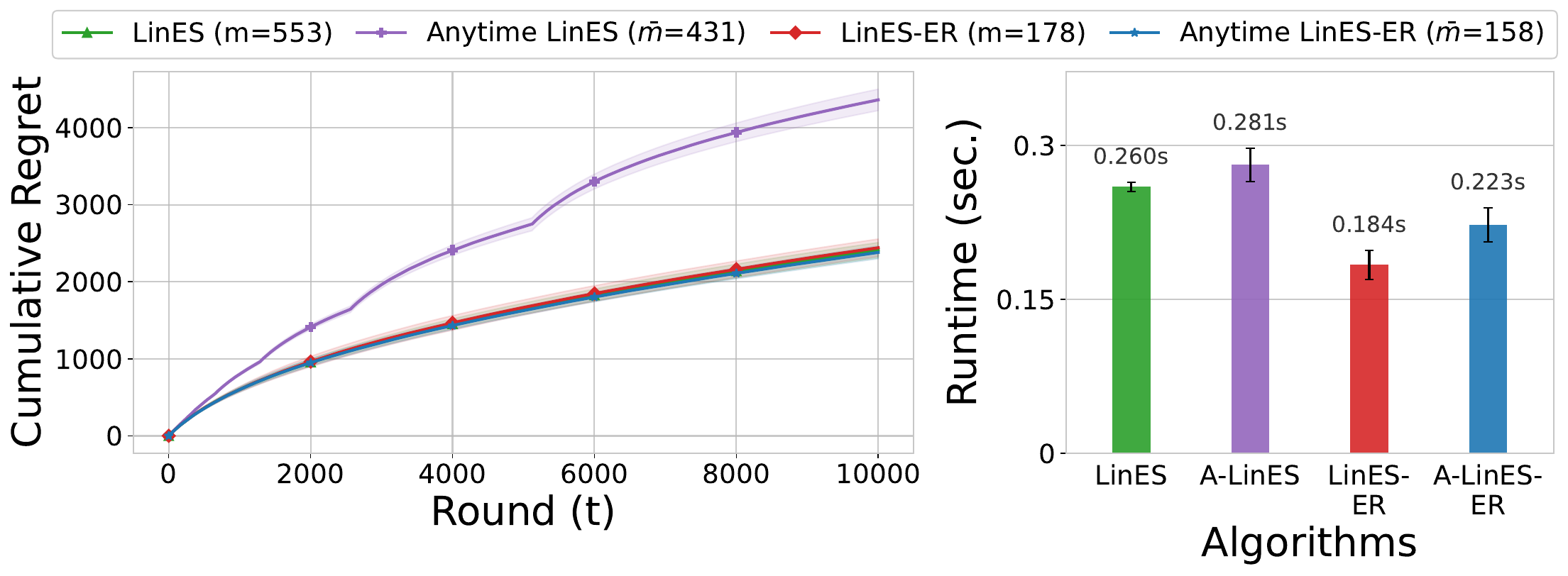}
        \caption{$T=10{,}000$ and $d = 30$}
    \end{subfigure}    

    \centering
    \begin{subfigure}[c]{0.9\linewidth}
        \centering
        \includegraphics[width=\linewidth]{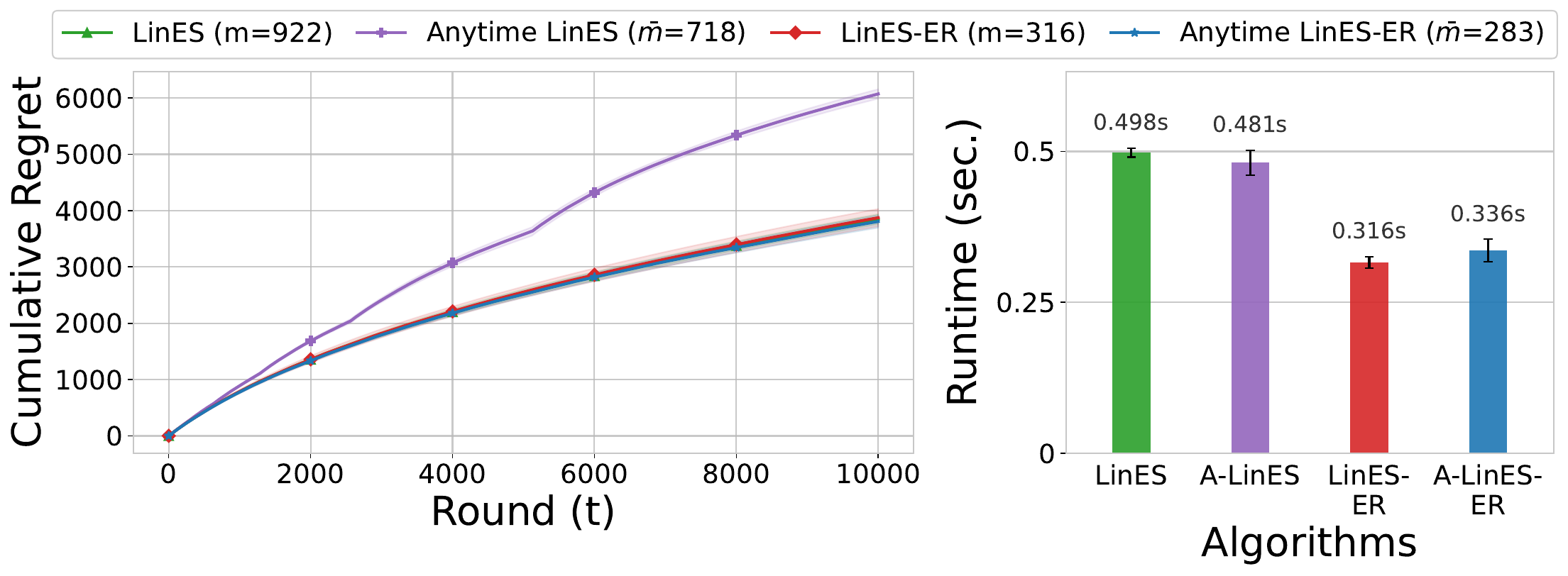}
        \caption{$T=10{,}000$ and $d = 50$}
    \end{subfigure}

    \caption{\small
    Anytime implementation under the doubling schedule for \(T=10{,}000\) and
\(d\in\{10,20,30,50\}\).
Left panels: cumulative regret. Right panels: total runtime.
    }
    \label{fig:unitball-anytime}
\end{figure}

We further evaluate the anytime variants under the doubling schedule.
In the anytime version of \texttt{LinES}, the algorithm is restarted at each doubling epoch and the ensemble size is updated according to the new horizon proxy.
In contrast, the anytime version of \(\algname\) (Algorithm~\ref{alg:anytime-main-algorithm}) retains all past observations and only updates its ensemble representation through the epoch-refresh mechanism.

Figure~\ref{fig:unitball-anytime} shows the cumulative regret and runtime for \(T=10{,}000\) and \(d\in\{10,20,30,50\}\).
Across all dimensions, \texttt{Anytime LinES} suffers a substantial increase in regret due to repeated restarts.
By contrast, \texttt{Anytime LinES-ER} remains close to the fixed-horizon \texttt{LinES-ER} baseline, demonstrating that our no-reset implementation preserves the statistical efficiency of the fixed-horizon algorithm.
Interestingly, \texttt{Anytime LinES-ER} uses fewer ensemble members on average than the fixed-horizon \texttt{LinES-ER}, since its ensemble size is chosen according to the local doubling horizon rather than the final horizon.
Its runtime can nevertheless be slightly larger, because the ensemble is refreshed at every doubling epoch in addition to the usual Gram-matrix-triggered refreshes.
Thus, the anytime version trades a small amount of additional refresh cost for the ability to retain all past observations and avoid full restarts.
Here, \(\bar m\) denotes the average ensemble size used across doubling epochs.

\subsection{Finite-Arm Experiments}
We also evaluate the algorithms in a finite-arm setting, where each instance is generated by sampling \(K\) arms uniformly from the unit sphere in \(\mathbb R^d\).
Figure~\ref{fig:finite-arm} reports cumulative regret for several values of \(K\). Across all tested settings, \texttt{LinES-ER} remains competitive with \texttt{LinES}, \texttt{LinTS}, and \texttt{LinPHE}. This indicates that the proposed epoch-refresh mechanism does not degrade empirical performance in finite-arm problems, despite using a smaller ensemble than \texttt{LinES}. The results are consistent with our finite-arm theory, which shows that arm-wise concentration can yield the sharper \(\widetilde{\Ocal}(d\sqrt{T\log K})\) regret scaling.

\begin{figure}[h!] 
    \centering
    \begin{subfigure}[c]{\linewidth}
        \centering
        \includegraphics[width=\linewidth]{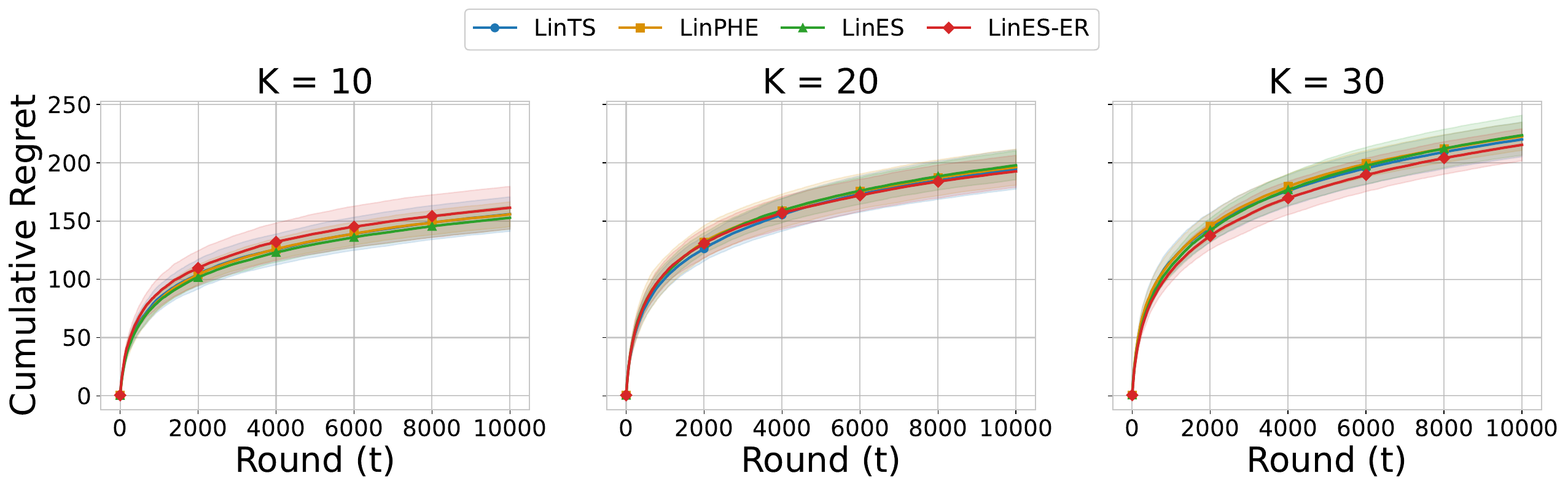}
        \caption{$T=10{,}000$ and $d = 10$}
    \end{subfigure}
    
    \centering
    \begin{subfigure}[c]{\linewidth}
        \centering
        \includegraphics[width=\linewidth]{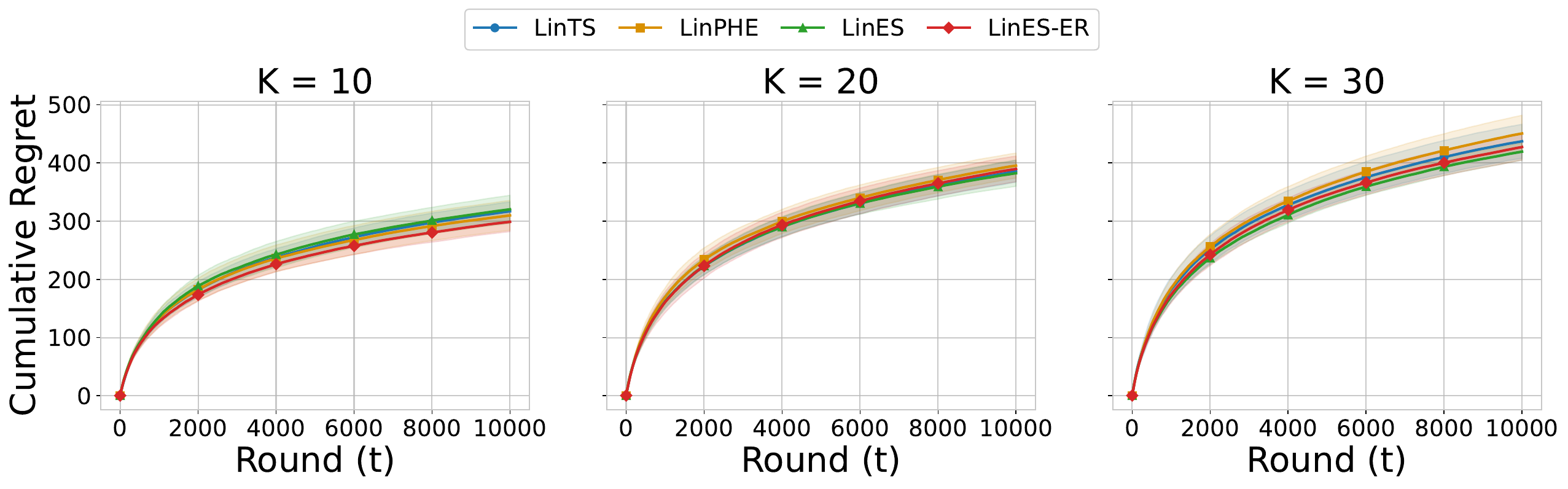}
        \caption{$T=10{,}000$ and $d = 20$}
    \end{subfigure}

    \centering
    \begin{subfigure}[c]{\linewidth}
        \centering
        \includegraphics[width=\linewidth]{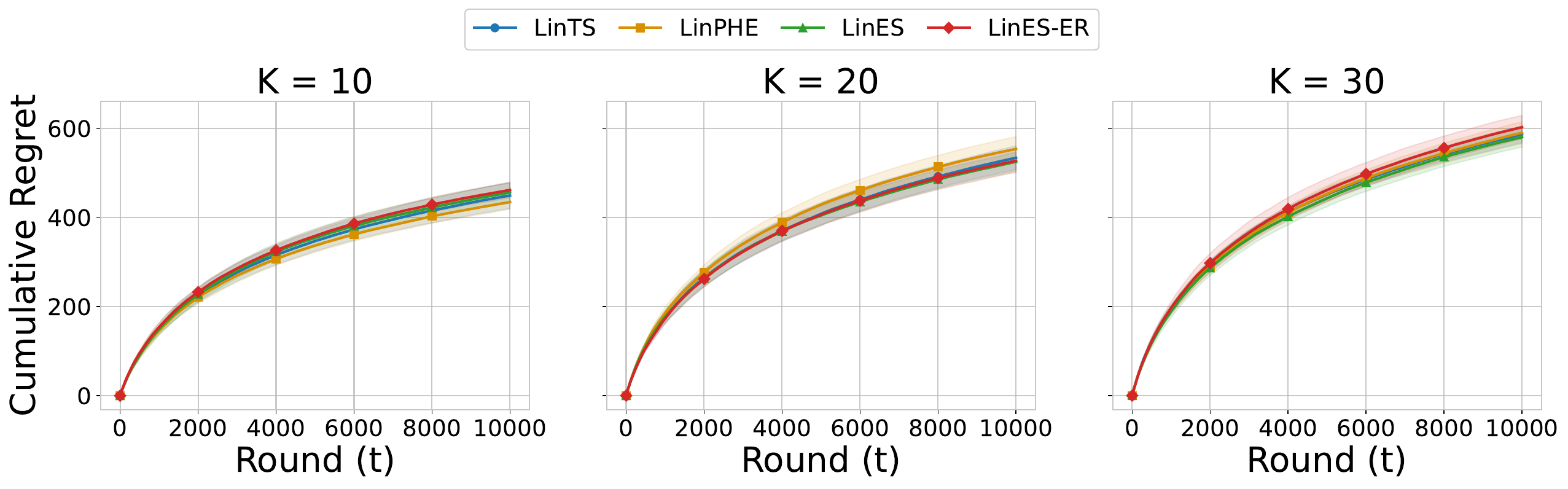}
        \caption{$T=10{,}000$ and $d = 30$}
    \end{subfigure}    

    \caption{\small 
    Finite-arm experiments with randomly generated arms on the unit sphere.
    }
    \label{fig:finite-arm}
\end{figure}

\end{document}